\documentclass{article}

\usepackage[preprint]{neurips_2026}

\usepackage[utf8]{inputenc} 
\usepackage[T1]{fontenc}    
\usepackage{hyperref}       
\usepackage{url}            
\usepackage{booktabs}       
\usepackage{amsfonts}       
\usepackage{amsmath}
\usepackage{amssymb}
\usepackage{amsthm}
\usepackage{enumitem}
\usepackage{nicefrac}       
\usepackage{microtype}      
\usepackage{xcolor}         
\usepackage{algorithm}
\usepackage{algorithmic}
\usepackage{graphicx}

\usepackage{comment}

\title{Beyond State-Wise Mirror Descent: Offline Policy Optimization with Parametric Policies}

\author{%
  Xiang Li \\
  Nanjing University\\
  \texttt{lllllimited306@gmail.com} \\
  \And
  Yuheng Zhang \\
  UIUC \\
  \texttt{yuhengz2@illinois.edu} \\
  \And
  Nan Jiang \\
  UIUC \\
  \texttt{nanjiang@illinois.edu}
}
\newcommand{\tmax}{\textup{max}}
\newcommand{\tmin}{\textup{min}}
\newcommand{\cp}{\textup{cp}}
\newcommand{\Reg}{\textup{Reg}}
\newcommand{\KL}{D_\textup{KL}}
\newcommand{\bias}{\textup{CFA}}
\newcommand{\bbias}{\textup{bias}}
\newcommand{\stat}{\textup{stat}}
\newcommand{\err}{\textup{err}}
\newcommand{\Proj}{\textup{Proj}}

\newcommand{\CVaR}{\textup{CVaR}}
\newcommand{\Var}{\textup{Var}}
\newcommand{\Ber}{\textup{Ber}}
\newcommand{\Comp}{\textup{Comp}}
\newcommand{\feat}{\textup{feat}}
\newcommand{\spn}{\textup{span}}

\newcommand{\cS}{\mathcal{S}}
\newcommand{\cA}{\mathcal{A}}
\newcommand{\cT}{\mathcal{T}}
\newcommand{\cF}{\mathcal{F}}
\newcommand{\cE}{\mathcal{E}}
\newcommand{\cO}{\mathcal{O}}
\newcommand{\cN}{\mathcal{N}}
\newcommand{\cL}{\mathcal{L}}
\newcommand{\cD}{\mathcal{D}}
\newcommand{\cW}{\mathcal{W}}
\newcommand{\cG}{\mathcal{G}}

\newcommand{\RR}{\mathbb{R}}
\newcommand{\EE}{\mathbb{E}}

\newcommand{\dd}{\mathsf{d}}
\newcommand{\CC}{\mathsf{C}}
\newcommand{\sign}{\mathsf{s}}
\newcommand{\simiid}{\mathrel{\overset{\mathrm{i.i.d.}}{\sim}}}

\newcommand{\xiang}[1]{
    \textcolor{red}{[Xiang: #1]}
}

\theoremstyle{plain}

\newtheorem{theorem}{Theorem}
\newtheorem{proposition}[theorem]{Proposition}
\newtheorem{lemma}[theorem]{Lemma}

\newtheorem{definition}{Definition}
\newtheorem{assumption}{Assumption}

\begin{document}

\maketitle

\begin{abstract}
  We investigate the theoretical aspects of offline reinforcement learning (RL) under general function approximation. While prior works \citep[e.g.,][]{xie2021bellman} have established the theoretical foundations of learning a good policy from offline data via pessimism, existing algorithms that are computationally tractable (often in an oracle-efficient sense), such as PSPI, only apply to finite and small action spaces. Moreover, these algorithms rely on \emph{state-wise} mirror descent and require actors to be implicitly induced from the critic functions, failing to accommodate standalone policy parameterization which is ubiquitous in practice. 
In this work, we address these limitations and extend the theoretical guarantees to parameterized policy classes over large or continuous action spaces. When extending mirror descent to parameterized policies, we identify \emph{contextual coupling} as the core difficulty, and show how connecting mirror descent to natural policy gradient leads to novel analyses, guarantees, and algorithmic insights, including a surprising unification between offline RL and imitation learning.
\end{abstract}

\section{Introduction}
\label{sec:introduction}

Learning a good policy from historical data, a.k.a.~offline reinforcement learning (RL), is an important paradigm for bringing RL to real-life domains \citep{levine2020offline,prudencio2023survey}. While the statistical aspects of offline RL theory are relatively well-understood \citep{jiang2025offline}, the information-theoretic algorithms that achieve strong statistical guarantees under general conditions are often not immediately computationally tractable, and addressing the computational challenge typically requires additional assumptions or sacrificing some of the statistical generality.
In particular, while the analyses often make mild assumptions on the \textit{critic}, strong restrictions are often imposed on the \textit{actor}, limiting the theory's applicability scope and practical relevance \citep{xie2021bellman,cheng2022adversarially}.

As a canonical example, consider PSPI by \citet{xie2021bellman} (Section~\ref{sec:pspi}), an actor-critic style algorithm that achieves strong guarantees when critics are modeled via general function approximation. PSPI assumes an oracle that computes a pessimistic critic $f_k$ for policy $\pi_k$, and employs \textit{mirror descent} (MD) in the actor update: 
\begin{align} \label{eq:md}
\pi_{k+1}(a\mid s)\propto\pi_k(a\mid s)\exp\bigl(\eta f_k(s,a)\bigr).
\end{align}
This brings two issues. First, the guarantee of PSPI depends on the log-cardinality of the action space and thus does not apply to arbitrarily large (including continuous) action spaces that are ubiquitous in control problems (e.g., Gaussian policies in robotics). Secondly, MD update in Eq.\eqref{eq:md} is \textit{state-wise}, i.e., the action distribution in each state is updated independently. This means that the actor (policy) is implicitly induced from the critics ($f_k$) and cannot have its own standalone parameterization (e.g., an actor network that is separate from the critic network). 

These limitations not only create a disconnection between theory and practice, but also indicate a gap in theory itself: standalone policy parameterization over continuous actions is not an issue at all when we only pursue statistical guarantees and ignore computational feasibility \citep[Section 3]{xie2021bellman}, but difficulties arise when we aim for computational efficiency. 

In this work, we address this gap by systematically studying variants of policy optimization methods and their statistical/computational properties, under the assumption that a  standard pessimistic critic oracle is given (Assumption~\ref{ass:oracle}). Our contributions are 3-fold:

\begin{itemize}[leftmargin=*]
    \item \textbf{PSPI with general action spaces.}
    In Section~\ref{sec:pspi}, we revisit the PSPI algorithm to disentangle critic- and actor-side errors, and present a mild extension of the analysis to arbitrarily large (and even continuous) action spaces. 

    \item \textbf{Contextual coupling: hardness and a guiding principle.}
    In Section~\ref{sec:challenge}, we consider standalone policy parameterization. Perhaps surprisingly, we show that contextual mirror descent in Eq.~\eqref{eq:cmd}, the natural generalization of state-wise mirror descent to parametric policy class, can suffer a constant per-step regret even with an accurate critic, due to an underlying challenge we coin \emph{contextual coupling}. Instead, an alternative formulation inspired by natural policy gradients (NPG) \citep{kakade2001natural} and compatible function approximation (CFA) \citep{sutton1999policy} admits a general regret decomposition, which can serve as the guiding principle for designing effective actor updates.

    \item \textbf{Actor updates with finite-sample guarantees.}
    Building on the earlier regret decomposition, we develop and analyze two statistically and computationally efficient actor updates in Section~\ref{sec:algorithm}: a \emph{least-square regression} update-rule (LSPU) closely related to NPG, and a \emph{distributionally robust} update-rule (DRPU) that leverages importance weighting and can be more robust to actor-critic incompatibility. Surprisingly, when the offline distribution coincides with that of the comparator policy, our DRPU method recovers behavior cloning, providing an interesting unification between offline RL and imitation learning.  
\end{itemize}



\section{Preliminary}
\label{sec:preliminary}

\paragraph{Markov Decision Processes.} 

We formulate RL in infinite-horizon discounted Markov Decision Processes (MDPs), specified by $(\cS,\cA,P,R,\gamma, d_0)$, where $\cS$ is the state space, $\cA$ is the action space, $P:\cS\times\cA\to\Delta(\cS)$ is the transition kernel, $R:\cS\times\cA\to[0,R_\tmax]$ is the reward function, $\gamma\in[0,1)$ is the discount factor, and $d_0\in\Delta(\cS)$ is the initial state distribution. We mostly consider finite and discrete $\cS$ and $\cA$ with possibly arbitrarily large cardinalities, and our results can be easily extended to continuous action spaces as exemplified in Appendix~\ref{app:continuous-setting}.

A (stochastic) policy $\pi:\cS\to\Delta(\cA)$ specifies a decision-making strategy. Its expected discounted return is denoted by $J(\pi)=\EE_\pi[\sum_{t=0}^\infty\gamma^tr_t]$, with $s_0 \sim d_0$, $a_t\sim\pi(\cdot|s_t)$, $r_t=R(s_t,a_t)$, and $s_{t+1}\sim P(\cdot|s_t,a_t)$. The $Q$-function of $\pi$ is defined by  $Q^\pi(s,a)=\EE_\pi[\sum_{t=0}^\infty\gamma^tr_t|s_0=s,a_0=a]$, which is the unique fixed point of its Bellman operator $\mathcal{T}^\pi$: $\forall f\in\RR^{\cS\times\cA}$, $(\cT^\pi f)(s,a)=R(s,a)+\gamma\EE_{s'\sim P(\cdot|s,a)}[f(s',\pi)]$, where $f(s',\pi)=\EE_{a'\sim\pi(\cdot|s')}[f(s',a')]$. $V^\pi(s)=Q^\pi(s,\pi)$ is the (state-)value function and $A^\pi(s,a)=Q^\pi(s,a)-V^\pi(s)$ is the advantage function. Let $d^\pi$ denote the discounted state-action occupancy distribution of policy $\pi$, i.e., $d^\pi(s,a)=(1-\gamma)\sum_{t=0}^\infty\gamma^t d_t^\pi(s,a)$, where $d_t^\pi$ is the marginal distribution of $(s_t, a_t)$.\footnote{With a slight abuse of notation, we also use $d^\pi$ to denote the discounted state occupancy of policy $\pi$ (after marginalizing out the actions). That is, $d^\pi(s)=\sum_{a\in\cA}d^\pi(s,a)$.}



\paragraph{Offline RL.} 

In offline RL, we learn from pre-collected data without environment interaction. For conceptual clarity, we assume access to two datasets: a \textit{critic dataset} with transition tuples $(s,a,r,s')$ for fitting the critic, and an \textit{actor dataset} consisting of only $(s,a)$ pairs for actor updates. 

As our focus is on the actor, we do \textbf{not} explicitly characterize the critic dataset, since its properties will later be encapsulated in an oracle assumption (Assumption~\ref{ass:oracle}). For the actor dataset, we assume access to $N$ i.i.d. $(s,a)$ pairs drawn from a data distribution $d^D$. In practice, they may be extracted from the critic dataset, but our formulation also allows additional data sources, such as expert annotations, which are considered in hybrid setting of imitation learning and offline RL \citep{mao2023offline,yang2023hundreds}.

We adopt the following standard assumption on data coverage, known as \emph{density coverage} \citep{jiang2025offline}. The theoretical guarantees in offline RL typically state that we can compete with any comparator policy $\pi_\cp$ that is well-covered by data (i.e., $\CC$ is small).

%

\begin{assumption}[Actor-Side Data Coverage]\label{ass:data}
Assume $\sup_{s,a}|d^{\pi_\cp}(s,a)/d^D(s,a)|\le \CC < \infty$.
\end{assumption}

To deal with large state and action spaces, we are in the function approximation regime, modeling $Q$-functions using a function class $\cF=\{f:\cS\times\cA\to[0,V_\tmax]\}$, where $V_\tmax=R_\tmax/(1-\gamma)$. Typical choices of $\cF$ include linear classes or deep neural networks. In the literature, $\cF$ generally needs to satisfy expressivity conditions such as realizability or Bellman completeness \citep{chen2019information}, which will also be implicitly captured in Assumption~\ref{ass:oracle}.

\paragraph{Math notation.} We use $A \lesssim B$ or $A = \cO(B)$ to denote $A \leq cB$ for some constant $c > 0$; $A=\Omega(B)$ is equivalent to $B=\cO(A)$. For a function $f:\mathcal{X}\to\RR$ where $\mathcal{X}$ is finite, its supremum norm is $\|f\|_\infty=\max_{x\in\mathcal{X}}|f(x)|$. The KL divergence is $\KL(p\|q)=\sum_{x\in \mathcal{X}}p(x)\log\tfrac{p(x)}{q(x)}$ for two distributions with probability mass functions $p$ and $q$.

\section{Pessimistic Soft Policy Iteration as State-wise Mirror Descent}
\label{sec:pspi}

\citet{xie2021bellman} proposed Pessimistic Soft Policy Iteration (PSPI), an actor-critic method for offline policy optimization (see Algorithm~\ref{alg:pspi}). At a high level, PSPI proceeds iteratively: at each iteration $k$, the critic produces a pessimistic estimate $f_k$ of the value $Q^{\pi_k}$, and the actor performs a soft improvement step based on $f_k$. The final output is a (trajectory-level) uniform mixture of the intermediate policies $\pi_1,\ldots,\pi_K$.

\begin{algorithm}[t]
   \caption{Pessimistic Soft Policy Iteration (PSPI)} \citep{xie2021bellman}
   \label{alg:pspi}
\begin{algorithmic}
   \STATE {\bfseries Input:} number of iterations $K$, learning rate $\eta$
   \STATE Initialize policy $\pi_1$ as an arbitrary policy (e.g., uniform over $\cA$)
   \FOR{$k=1,2,\ldots,K$}
      \STATE {\bf Critic:} compute $\pi_k$'s pessimistic $Q$-function $f_k$ using oracle $\cO$ satisfying Assumption~\ref{ass:oracle}
      \STATE {\bf Actor:} update policy by Eq.~\eqref{eq:md}, i.e.,
      $
      \pi_{k+1}(a\mid s)\propto\pi_k(a\mid s)\exp\bigl(\eta f_k(s,a)\bigr)$
   \ENDFOR
   \STATE {\bfseries Output:} $\hat{\pi}=\text{Unif}[\pi_{1:K}]$
\end{algorithmic}
\end{algorithm}

Since our ultimate goal is to bound the suboptimality gap $J(\pi_\cp)-J(\hat{\pi})$ between some comparator policy $\pi_\cp$ and the output policy $\hat{\pi}$, a key component in the analysis is to decompose it via the generalized performance difference lemma (Lemma~\ref{lemma:PDL}): $J(\pi_\cp)-J(\hat{\pi})=$
\begin{align*}
\frac{1}{1-\gamma}\Biggl(
\underbrace{
\frac{1}{K}\sum_{k=1}^K
\EE_{s\sim d^{\pi_\cp}}
[f_k(s,\pi_\cp)-f_k(s,\pi_k)]
}_{\text{actor-side error (Eq.~\eqref{eq:regret}), \textbf{focus of this paper}}}
+
\underbrace{
\frac{1}{K}\sum_{k=1}^K
\Bigl(\EE_{d^{\pi_\cp}}[\cT^{\pi_k}f_k-f_k]+\EE_{d^{\pi_k}}[f_k - \cT^{\pi_k}f_k]
\Bigr)
}_{\text{critic-side Bellman error, handled by Assumption~\ref{ass:oracle}}}
\Biggr).
\end{align*}
The first term corresponds to actor-side optimization error, capturing how well $\pi_1,\ldots,\pi_K$ optimizes against $\pi_\cp$ measured by the pessimistic $Q$-function; the remaining terms are critic-side errors that measure the violation of the Bellman equations. Such decomposition cleanly separates the difficulty of policy optimization (i.e., actor update) from that of value estimation.

To isolate the actor-side difficulty, we abstract the critic as a pessimistic oracle $\cO$, as reflected in Assumption~\ref{ass:oracle}, which directly controls the latter two Bellman error terms. Such control is central to offline RL theory and essentially an intermediate guarantee achieved in many existing works under standard assumptions \citep{jin2020provably,xie2021bellman,cheng2022adversarially}. 
More detailed discussions and algorithmic realizations of this oracle, e.g., via Bellman error minimization or marginalized importance sampling, are provided in Appendix~\ref{app:oracle}.

\begin{assumption}[Pessimistic Oracle]\label{ass:oracle}
We have an efficient oracle $\cO$ that, given any policy $\pi$ as input, it outputs a function $f$ that satisfies the following two conditions:
\begin{enumerate}[leftmargin=*]
    \item (\emph{Pessimism}) The function $f$ is a pessimistic estimation of $Q^\pi$ (up to some tolerance $\epsilon_r\geq0$), i.e., $J_f(\pi)-J(\pi)\leq\epsilon_r/(1-\gamma)$ with high probability, where $J_f(\pi)=\EE_{s\sim d_0}[f(s,\pi)]$.
    \item (\emph{Bounded ``Transferred" Bellman error}) The Bellman error under $d^{\pi_\cp}$ is bounded by some $\epsilon_b> 0$. That is, $\EE_{d^{\pi_\cp}}[\cT^\pi f-f]\leq\epsilon_b$ with high probability.
\end{enumerate}
\end{assumption}

Under Assumption~\ref{ass:oracle}, $\epsilon_r+\epsilon_b$ naturally controls the latter two terms in the suboptimality decomposition. Thus, bounding $J(\pi_\cp)-J(\hat{\pi})$ reduces to controlling the \emph{actor-side} error, i.e., the first term in the decomposition, which is a notion of \textbf{\emph{regret}} we focus on throughout the remainder of the paper:
\begin{equation}\label{eq:regret}
\frac{\Reg_K}{K}:=\frac{1}{K}\sum_{k=1}^K\EE_{s\sim d^{\pi_\cp}}\Bigl[f_k(s,\pi_\cp)-f_k(s,\pi_k)\Bigr].
\end{equation}

\paragraph{State-wise regret control.}
Note that in Eq.~\eqref{eq:regret}, the regret is measured under an unknown and often inaccessible state distribution $d^{\pi_\cp}$. A natural workaround is to solve this online optimization problem in a \emph{state-wise} manner: If the regret can be controlled uniformly for each state $s\in\cS$, then the overall regret bound holds for any $d^{\pi_\cp}\in\Delta(\cS)$. Also, it is worth noting that the computational complexity of doing so does not depend on the size of $\cS$, since we only need to lazily run the algorithm on states observed in the data.

Leveraging this observation, the actor update in PSPI can be interpreted as a multiplicative-weights update from online learning \citep{hazan2016introduction}. In particular, for each state, the policy update in Eq.~\eqref{eq:md} corresponds to performing \textbf{\emph{mirror descent}} (MD) with KL regularizer:
\begin{equation}\label{eq:md2}
\pi_{k+1}(\cdot|s)=\underset{\pi(\cdot\mid s)\in\Delta(\cA)}{\arg\max} \left\{f_k(s,\pi)-\tfrac{1}{\eta}\KL(\pi(\cdot|s)\|\pi_k(\cdot|s))\right\}.
\end{equation}
This perspective leads to the regret guarantee established for PSPI in~\citep{xie2021bellman}, which, however, depends on $|\cA|$, the cardinality of $\cA$.\footnote{Apart from the regret's dependence on $|\cA|$, the computation required by PSPI might also seem to require enumeration over $\cA$ since we need to sample from the softmax policy in Eq.\eqref{eq:md}. This is, however, not always the case, as one can use techniques such as rejection sampling to avoid enumeration, and structured settings such as LQRs admit closed-form expressions for the policy that can be efficiently sampled from; see Appendix~\ref{app:continuous-ex} for details. Thus, the PSPI algorithm is also computationally viable for large or even continuous $\cA$.} We next show that their result can be directly extended to remove the explicit $|\cA|$ dependence and thus handle arbitrarily large action spaces; we state the result below for finite and discrete $\cA$ to be consistent with the setup of the rest of the paper.

\begin{theorem}[Regret Bound of Algorithm~\ref{alg:pspi}]\label{thm:pspi}
Assume that for each iteration $k$, the critic $f_k$ is bounded in $[0,V_\tmax]$, and that $\KL(\pi_\cp\|\pi_1)<\infty$. Then by choosing $\eta=\sqrt{8\KL(\pi_\cp\|\pi_1)/(KV_\tmax^2)}$, PSPI iterates satisfy
\[
\frac{\Reg_K}{K}\le V_\tmax\sqrt{\frac{\KL(\pi_\cp\|\pi_1)}{2K}},
\]
where $\KL(\pi_\cp\|\pi)=\EE_{s\sim d^{\pi_\cp}}[\KL(\pi_\cp(\cdot|s)\|\pi(\cdot|s))]$ is the expected KL divergence under $d^{\pi_\cp}$.
\end{theorem}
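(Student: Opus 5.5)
The plan is to run the standard mirror-descent (multiplicative weights) analysis separately at each state $s$, using $\pi_\cp(\cdot|s)$ as the comparator and $\KL(\pi_\cp(\cdot|s)\|\pi_k(\cdot|s))$ as the potential. We then average over $s\sim d^{\pi_\cp}$. The key point is that the potential is the KL to the comparator rather than a log-partition bounded by $\log|\cA|$, so no explicit dependence on $|\cA|$ appears.

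\textbf{Per-step identity.} Fix $s$ and write $Z_k(s)=\sum_a \pi_k(a|s)\exp(\eta f_k(s,a))$, so that $\log\pi_{k+1}(a|s)=\log\pi_k(a|s)+\eta f_k(s,a)-\log Z_k(s)$. Taking expectation over $a\sim\pi_\cp(\cdot|s)$ gives
\begin{align*}
\KL(\pi_\cp(\cdot|s)\|\pi_k(\cdot|s))-\KL(\pi_\cp(\cdot|s)\|\pi_{k+1}(\cdot|s)) = \eta f_k(s,\pi_\cp)-\log Z_k(s).
\end{align*}
Since $f_k\in[0,V_\tmax]$, Hoeffding's lemma applied to the random variable $f_k(s,a)$, $a\sim\pi_k(\cdot|s)$, yields $\log Z_k(s)\le \eta f_k(s,\pi_k)+\eta^2V_\tmax^2/8$. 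Rearranging,
\begin{align*}
f_k(s,\pi_\cp)-f_k(s,\pi_k)\le \tfrac{1}{\eta}\bigl(\KL(\pi_\cp(\cdot|s)\|\pi_k(\cdot|s))-\KL(\pi_\cp(\cdot|s)\|\pi_{k+1}(\cdot|s))\bigr)+\tfrac{\eta V_\tmax^2}{8}.
\end{align*}

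\textbf{Telescoping and tuning.} Summing over $k=1,\dots,K$ telescopes the KL terms. Dropping the nonnegative final term $\KL(\pi_\cp(\cdot|s)\|\pi_{K+1}(\cdot|s))$ and taking expectation over $s\sim d^{\pi_\cp}$ gives
\[
\Reg_K\le \frac{\KL(\pi_\cp\|\pi_1)}{\eta}+\frac{K\eta V_\tmax^2}{8}.
\]
Setting $\eta=\sqrt{8\KL(\pi_\cp\|\pi_1)/(KV_\tmax^2)}$ balances the two terms, each equal to $V_\tmax\sqrt{K\KL(\pi_\cp\|\pi_1)/8}$. Their sum is $V_\tmax\sqrt{K\KL(\pi_\cp\|\pi_1)/2}$, and dividing by $K$ gives the claimed bound.

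\textbf{Main obstacle: well-definedness.} The only delicate step is making sure every quantity above is finite.
\begin{itemize}[leftmargin=*]
\item Since $\KL(\pi_\cp\|\pi_1)<\infty$, we have $\pi_\cp(\cdot|s)\ll\pi_1(\cdot|s)$ for $d^{\pi_\cp}$-almost every $s$.
\item Exponential reweighting never shrinks supports, so $\pi_\cp(\cdot|s)\ll\pi_k(\cdot|s)$ for all $k$.
\item Because $f_k$ is bounded, each per-state KL stays finite: $\KL(\pi_\cp(\cdot|s)\|\pi_{k+1}(\cdot|s))\le\KL(\pi_\cp(\cdot|s)\|\pi_k(\cdot|s))+\eta V_\tmax$.
\end{itemize}
Hence the pointwise identity holds wherever $\KL(\pi_\cp(\cdot|s)\|\pi_1(\cdot|s))<\infty$, which suffices for the expectation. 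States outside the support of $d^{\pi_\cp}$ do not affect the regret.
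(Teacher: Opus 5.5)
Your proof is correct. It is the mirror-descent (Bregman-potential) form of the same exponential-weights argument that the paper gives in its FTRL (log-partition) form.

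The paper telescopes $\xi(F_k(s,\cdot))=-\frac{1}{\eta}\log\sum_a\pi_1(a|s)\exp(\eta F_k(s,a))$, where $F_k=\sum_{j\le k}f_j$. It applies Hoeffding's lemma to each increment, which are the same $\log Z_k(s)$ terms you bound. This gives $\sum_k\bigl(f_k(s,\pi_\cp)-f_k(s,\pi_k)\bigr)\le F_K(s,\pi_\cp)+\eta V_\tmax^2K/8+\xi(F_K(s,\cdot))$. Only at that point does the comparator enter, through the Gibbs variational principle with $u=\pi_\cp(\cdot|s)$.

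You instead carry $\KL(\pi_\cp(\cdot|s)\|\pi_k(\cdot|s))$ along as the potential. Summing your one-step identities gives $\xi(F_K(s,\cdot))=\frac{1}{\eta}\KL(\pi_\cp(\cdot|s)\|\pi_1(\cdot|s))-F_K(s,\pi_\cp)-\frac{1}{\eta}\KL(\pi_\cp(\cdot|s)\|\pi_{K+1}(\cdot|s))$. So the paper's Gibbs step is exactly your dropping of the final nonnegative KL term.

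What your route buys:
\begin{itemize}
\item It is more elementary, since no variational principle is needed.
\item It makes the support and finiteness bookkeeping explicit.
\item It is precisely the template the paper itself later uses for parametric policies in Lemma~\ref{lemma:NPG-regret}, with smoothness replacing Hoeffding.
\end{itemize}

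What the paper's route buys is a comparator-free intermediate bound in terms of $\xi(F_K)$, which can then be controlled by other means. Theorem~\ref{thm:pspi-unified} uses this to handle deterministic comparators on convex continuous action spaces. There $\KL(\pi_\cp\|\pi_1)=\infty$, and your identity is undefined unless you first replace $\pi_\cp$ by a smoothed surrogate, such as the uniform distribution on the shrunken neighborhood of the maximizer used in that proof.
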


The result is a mild extension of \citet[Theorem 4.1]{xie2021bellman}, where we keep $\KL(\pi_\cp\|\pi_1)$ instead of upper-bounding it with $\log|\cA|$ as in \citet{xie2021bellman}. This allows the term to be well bounded for some structured policies, even when $\cA$ is unbounded in cardinality or even continuous. In Appendix~\ref{app:continuous-setting}, we show that Theorem~\ref{thm:pspi} can be extended straightforwardly to continuous action space,\footnote{Results in subsequent sections can also be extended to continuous action spaces in a similar manner, which we omit and stick to the discrete setup for readability.} and for standard forms of continuous policies such as Gaussian policies, or structural action space like convex action space, the KL term remains well bounded; see Appendix~\ref{app:continuous-ex} for details.


\section{Contextual Coupling in Parameterized Policy Optimization}
\label{sec:challenge}

The state-wise mirror descent of PSPI in Section~\ref{sec:pspi} does not have its own policy parameterization; rather, it is performing policy optimization within a softmax policy class implicitly induced from $\cF$: $\{\pi(\cdot|s)\propto\pi_1(\cdot|s)\exp\bigl(\textstyle{\sum_{i=1}^n f_i(s,\cdot)}\bigr):k\in[K],f_{1:k}\in\cF\}$. In this section, we move beyond this setting and consider policy optimization over a standalone policy class parameterized by some $\theta\in\RR^\dd$, which is ubiquitous in practice:
\[
\Pi_\theta := \{\pi_\theta:\theta\in\RR^\dd\}.
\]
As standard, we consider gradient-type updates to the actor, which requires the following assumption on policy differentiability and smoothness assumptions in theoretical analyses \citep{agarwal2021theory}. Such an assumption is widely adopted in the policy-gradient literature, and holds for many popular policy classes under appropriate norms, including the canonical softmax policies, log-linear policies, neural policies, and even Gaussian policies under mild conditions. 

\begin{assumption}[Policy Class]\label{ass:policy}
Let $\pi_\theta$ be a differentiable policy parameterized by $\theta\in\RR^\dd$. 
For all $(s,a)\in\cS\times\cA$, the log-probability is $G$-Lipschitz and $\beta$-smooth with respect to a chosen norm $\|\cdot\|$ (and its dual norm $\|\cdot\|_*$).
\[
|\log\pi_\theta(a|s)-\log\pi_{\theta'}(a|s)|\le G\|\theta-\theta'\|,\quad 
\|\nabla\log\pi_\theta(a|s)-\nabla\log\pi_{\theta'}(a|s)\|_*\le \beta\|\theta-\theta'\|.
\]
\end{assumption}

\begin{algorithm}[t]
   \caption{Template for Actor-Critic Policy Optimization}
   \label{alg:po}
\begin{algorithmic}
   \STATE {\bfseries Input:} number of iterations $K$, learning rate $\eta$
   \STATE Initialize policy $\pi_1=\pi_{\theta_1}$ as any policy in $\Pi_\theta$
   \FOR{$k=1,2,\ldots,K$}
      \STATE {\bf Critic:} compute $f_k$ using oracle $\cO$
      \STATE {\bf Actor:} update policy by $\theta_{k+1}=\theta_k+\eta v_k$
   \ENDFOR
   \STATE {\bfseries Output:} $\hat{\pi}=\text{Unif}[\pi_{1:K}]$
\end{algorithmic}
\end{algorithm}


\subsection{Why Contextual Mirror Descent Breaks}
\label{sec:challenge-md}

In Section~\ref{sec:pspi}, we explained that PSPI admits a state-wise mirror descent interpretation: each state runs mirror descent and enjoys regret guarantee independently. To deal with the standalone policy class $\Pi_\theta$, a natural attempt is to \emph{contextualize} mirror descent by coupling these state-wise updates in Eq.~\eqref{eq:md2} through the shared parameter $\theta$ and aggregating them across states via some state distribution:
\begin{equation}\label{eq:cmd}
\pi_{k+1}=\underset{\pi\in\Pi_\theta}{\arg\max}\ \EE_{s\sim d^D}\left[f_k(s,\pi)-\tfrac{1}{\eta}\KL(\pi(\cdot|s)\|\pi_k(\cdot|s))\right],
\end{equation}
where $\pi_k$ denotes $\pi_{\theta_k}$. The above form of \textbf{\emph{contextual mirror descent}} is defined under the state-marginal of the data distribution $d^D$, which is the only state distribution directly accessible in the offline setting. In contrast, the regret in Eq.~\eqref{eq:regret} is evaluated under the comparator distribution $d^{\pi_\cp}$. As a consequence, the challenge of \emph{distribution mismatch} arises between Eq.~\eqref{eq:regret} and Eq.~\eqref{eq:cmd}.


Notice that the integrand $f_k(s,\pi_\cp)-f_k(s,\pi_k)$ in Eq.~\eqref{eq:regret} is not one-sided and may be positive in some states while negative in others. As a result,  errors controlled under $d^D$ do not necessarily translate to $d^{\pi_\cp}$ even with the coverage condition (Assumption~\ref{ass:data}).\footnote{This sharply contrasts with the squared loss in Fitted-$Q$ or Bellman residual minimization \citep{munos2007performance, munos2008finite, antos2008learning}, where the non-negative loss allows for error transfer under coverage conditions.}
Therefore, regret bound under $d^D$ does not, in general, imply guarantee under $d^{\pi_\cp}$.
%
Indeed, we show via the following hardness result that contextual mirror descent indeed fails (even if we do not account for finite-sample errors and effectively assume infinite data), and the construction is deferred to Appendix~\ref{app:hardness}.

\begin{proposition}[Failure for Contextual Mirror Descent]\label{thm:hardness}
There exist an MDP $\mathcal{M}$, a policy class $\Pi_\theta$ satisfying Assumption~\ref{ass:policy}, a comparator policy $\pi_\cp\in\Pi_\theta$, and offline data satisfying Assumption~\ref{ass:data}, such that the policies $\{\pi_k\}$ produced by contextual MD in Eq.~\eqref{eq:cmd} incur constant per-step regret:
\[
\EE_{s\sim d^{\pi_\cp}}\bigl[f_k(s,\pi_\cp)-f_k(s,\pi_k)\bigr]\geq\frac{1}{4},\quad\forall k\geq 2.
\]
Consequently, it follows that $\Reg_K/K=\Omega(1)$.
\end{proposition}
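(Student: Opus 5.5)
The plan is to build a minimal counterexample in which two states share a single policy parameter, so that the contextual objective Eq.~\eqref{eq:cmd} trades off the states according to $d^D$, whereas the regret Eq.~\eqref{eq:regret} weighs them according to $d^{\pi_\cp}$. I take a bandit-like MDP: $\gamma=0$ (or transitions that keep the state fixed), two states $s_1,s_2$, and two actions $a_1,a_2$. I choose the critic values $f_k$ to be the true $Q$-values, which are fixed across iterations. This makes Assumption~\ref{ass:oracle} hold trivially with $\epsilon_r=\epsilon_b=0$, so any failure comes from the actor alone. I set $f(s_1,a_1)=1$, $f(s_1,a_2)=0$, and $f(s_2,a_1)=0$, $f(s_2,a_2)=1$. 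Thus $s_1$ prefers $a_1$ and $s_2$ prefers $a_2$.

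The policy class is a one-parameter class whose action choice is tied across states. Concretely, I let $\pi_\theta(a_1|s)=\sigma(\theta)$ for both states, with $\theta$ confined to a bounded interval. Then $\log\pi_\theta$ is Lipschitz and smooth, so Assumption~\ref{ass:policy} holds. I take $d_0$ to put mass $1/2$ on each state, and the data distribution $d^D$ to put mass $1-\epsilon$ on $s_1$ and $\epsilon$ on $s_2$. I take the comparator $\pi_\cp\in\Pi_\theta$ to be the policy that picks $a_2$ with high probability, for example $\sigma(\theta)\le 1/8$. Its occupancy is then $1/2$ on each state. With $\epsilon$ a fixed constant such as $1/4$, the ratio $d^{\pi_\cp}/d^D$ is bounded, so Assumption~\ref{ass:data} holds with a constant $\CC$. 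To avoid division by zero I let $d^D$ cover both actions, for example uniformly within each state.

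Next I solve Eq.~\eqref{eq:cmd}. The objective is $(1-2\epsilon)\sigma(\theta)+\epsilon-\tfrac{1}{\eta}\KL(\cdot)$, because the KL term is identical in both states. This objective strictly favors increasing $\sigma(\theta)$. So from any initialization, after one step (or by choosing $\eta$ large, or $\theta_1$ at the boundary) the iterate sits where $\sigma(\theta_k)\ge 7/8$, and it stays there for all $k\ge2$. If I want to avoid the learning-rate subtlety, I instead initialize $\theta_1$ at the upper end of the allowed interval. Then the monotone-improvement property of the argmax keeps every iterate at that boundary.

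Finally I compute the regret under $d^{\pi_\cp}$. I write $p_k=\sigma(\theta_k)$ and $p_\cp=\sigma(\theta_\cp)$. The state-$s_1$ term is $p_\cp-p_k$ and the state-$s_2$ term is $p_k-p_\cp$, so with equal weights they cancel exactly. The balanced construction therefore fails, and this is the main obstacle: the state weights must be asymmetric in opposite directions under $d^{\pi_\cp}$ and under $d^D$.

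To fix this I set the rewards so that $s_1$ has a small gap $\delta$ and $s_2$ has gap $1$. Take $f(s_1,a_1)=\delta$, $f(s_1,a_2)=0$, $f(s_2,a_1)=0$, $f(s_2,a_2)=1$, and let $d^D$ put mass $1-\epsilon$ on $s_1$ with $(1-\epsilon)\delta>\epsilon$. Contextual MD then drives $p_k$ toward $1$, while the comparator regret is $\tfrac12\bigl[(1-\delta)(p_k-p_\cp)\bigr]$. For instance, $\delta=1/4$ and $\epsilon=1/8$ give $(1-\epsilon)\delta=7/32>4/32=\epsilon$, so the $s_1$ preference dominates under $d^D$. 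Coverage still holds with $\CC=4$ on states (times a factor for actions). With $p_k\ge 7/8$ and $p_\cp\le 1/8$, the regret is $\tfrac12\cdot\tfrac34\cdot\tfrac34=9/32\ge 1/4$. The last step is to tune these constants and verify the boundary-parameter argument.
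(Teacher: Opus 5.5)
Your construction is correct and uses the same mechanism as the paper's proof. Both use a two-state, two-action contextual bandit with exact critic $f_k=R$ and a one-dimensional logistic class that ties the states together. The $d^D$-averaged KL then collapses to $\KL(\Ber(p)\|\Ber(p_k))$, and Eq.~\eqref{eq:cmd} becomes the logit shift $\mathrm{logit}(p_{k+1})=\mathrm{logit}(p_k)+\eta\bigl((1-\epsilon)\delta-\epsilon\bigr)$. This moves the shared parameter in the direction $d^D$ favors and $d^{\pi_\cp}$ penalizes.

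The difference is where the asymmetry is injected. After relabeling the actions in $s_2$, the paper's instance is exactly your balanced one with $\delta=1$. The paper breaks the symmetry by putting all of $d^{\pi_\cp}=d_0$ on $s_2$. Its per-step regret is then $p_k-\tfrac14\ge\tfrac14$, using only $p_k\ge p_1=\tfrac12$, so the bound holds from the uniform initialization for every $\eta>0$.

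You keep $d^{\pi_\cp}$ uniform and use unequal gaps instead. This has some appeal: the data states are a reweighting of reachable states, and you check coverage at the state--action level. The cost is that your per-step regret $\tfrac38(p_k-p_\cp)$ falls below $\tfrac14$ whenever $p_k$ is near $\tfrac12$. So the stated constant $\tfrac14$ genuinely requires $\sigma(\theta_1)\ge\tfrac78$ or a large $\eta$. Your phrase ``from any initialization, after one step'' holds only for large $\eta$. Since the proposition leaves $\pi_1$ and $\eta$ unspecified, choosing them is legitimate. From the uniform initialization your instance still gives per-step regret at least $\tfrac{9}{64}$, hence $\Reg_K/K=\Omega(1)$.

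Finally, the bounded interval for $\theta$ is unnecessary, since Assumption~\ref{ass:policy} holds on all of $\RR$ with $G=1$ and $\beta=\tfrac14$. It also departs from the class $\theta\in\RR^\dd$. Simply initialize at $\sigma(\theta_1)\ge\tfrac78$ and use the monotonicity of the logit update.
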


Proposition~\ref{thm:hardness} shows that contextual MD can fail to minimize the regret against $\pi_\cp$, even if it satisfies Assumption~\ref{ass:policy} and is well covered by data. We attribute the failure to a concept we call \textbf{\emph{contextual coupling}}, where aggregating state-wise updates under a mismatched state distribution induces systematic deviation across states through the shared parameterization. This contrasts with the positive results we establish later (Theorems~\ref{thm:npg-guarantee} and~\ref{thm:dro-guarantee}): while those latter bounds may also admit constant per-step regret (e.g., when $\epsilon_\bias\ne 0$ in Theorem~\ref{thm:npg-guarantee}), they do so \textit{solely} under actor-critic incompatibility (see Section~\ref{sec:npg}), which is \textbf{not} the source of the hardness in Proposition~\ref{thm:hardness}. 


\subsection{Regret Decomposition via Compatible Function Approximation}
\label{sec:challenge-decomposition}
The negative result in Section~\ref{sec:challenge-md} rules out a direct extension of PSPI via contextual mirror descent. Nevertheless, this does not preclude first-order methods altogether. 
In this section, we provide a regret decomposition lemma---which will give a guiding principle for algorithm design in Section~\ref{sec:algorithm}---for general first-order updates in the form of (see Algorithm~\ref{alg:po} for the algorithm template):
\begin{equation}\label{eq:update}
    \theta_{k+1}=\theta_k+\eta v_k,
\end{equation}
where $v_k\in\RR^\dd$ is an update vector at round $k\in[K]$.~For instance, policy gradient (PG) corresponds to taking $v_k$ as an estimate of the on-policy gradient (which yields monotonic improvement under the on-policy occupancy $d^{\pi_k}$), while natural policy gradient (NPG) corresponds to a preconditioned version of this direction. In our setting, in each round the oracle $\cO$ computes $f_k$ for the current policy $\pi_k = \pi_{\theta_k}$, and we need to design the actor update rule (i.e., how $v_k$ is chosen) accordingly. 
For now, we do not specify how $v_k$ is constructed and allow it to be arbitrary, and our regret decomposition will hold for \emph{any} sequence of update vectors $\{v_k\}$. 

Under Assumption~\ref{ass:policy}, the update rule~\eqref{eq:update} induces a first-order approximation of the regret integrand $f_k(s,\pi_\cp)-f_k(s,\pi_k)$ around the current policy, yielding a leading linear term and a higher-order optimization error controlled by smoothness. Crucially, the leading term can be expressed through the error of \textbf{\emph{compatible function approximation}} (CFA) \citep{sutton1999policy, kakade2001natural}, which represents how policy gradients $\nabla\log\pi_\theta$ approximates the advantage function $A^{\pi_\theta}$ of the policy. Similar analysis in the on-policy case can be found in \citep{f870a19c-3a42-3013-ac0e-1c1db320f051, agarwal2021theory}.

\begin{lemma}[Regret Decomposition Lemma]\label{lemma:NPG-regret}
Define the CFA error as
\begin{equation}\label{eq:cfa}
\err_k=\EE_{(s,a)\sim d^{\pi_\cp}}\left[A_k(s,a)-v_k^\top\nabla_\theta\log\pi_k(a\mid s)\right],
\end{equation}
where $A_k(s,a)=f_k(s,a)-f_k(s,\pi_k)$ denotes the advantage function at round $k$. Then, under Assumption~\ref{ass:policy}, consider Algorithm~\ref{alg:po} with update sequence $\{v_k\}$ satisfying $\|v_k\|\le B_L$ for all $k$. Then, with step size $\eta=\sqrt{2\KL(\pi_\cp\|\pi_1)/(\beta K B_L^2)}$, the following regret bound holds:
\[
\frac{\Reg_K}{K}\leq B_L\sqrt{\frac{2\beta\cdot \KL(\pi_\cp\|\pi_1)}{K}}+  \frac{1}{K}\sum_{k=1}^K\err_k.
\] \vspace*{-1em}
\end{lemma}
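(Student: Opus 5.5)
Our plan is to adapt the standard NPG analysis of \citet{agarwal2021theory} with the KL divergence to the comparator, $\KL(\pi_\cp\|\pi_k)$, as the potential function.

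\textbf{Step 1: one-step drift of the KL potential.} Fix a state $s$ and an action $a$ in the support of $\pi_\cp(\cdot|s)$. By $\beta$-smoothness of $\theta\mapsto\log\pi_\theta(a|s)$ (Assumption~\ref{ass:policy}) and $\theta_{k+1}-\theta_k=\eta v_k$, we have
\[
\log\pi_{k+1}(a|s)-\log\pi_k(a|s)\ \ge\ \eta\, v_k^\top\nabla_\theta\log\pi_k(a|s)-\tfrac{\beta}{2}\eta^2\|v_k\|^2 .
\]
We then take expectation over $(s,a)\sim d^{\pi_\cp}$. The left-hand side becomes $\KL(\pi_\cp\|\pi_k)-\KL(\pi_\cp\|\pi_{k+1})$, where $\KL$ is the expected KL under $d^{\pi_\cp}$; the state distribution stays fixed across $k$, which is exactly why this potential suits the comparator. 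Using $\|v_k\|\le B_L$ gives
\[
\EE_{d^{\pi_\cp}}\bigl[v_k^\top\nabla\log\pi_k(a|s)\bigr]\le \frac{\KL(\pi_\cp\|\pi_k)-\KL(\pi_\cp\|\pi_{k+1})}{\eta}+\frac{\beta\eta B_L^2}{2}.
\]

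\textbf{Step 2: link to the regret integrand via the CFA error.} By definition of $A_k$, $\EE_{s\sim d^{\pi_\cp}}[f_k(s,\pi_\cp)-f_k(s,\pi_k)]=\EE_{(s,a)\sim d^{\pi_\cp}}[A_k(s,a)]$. By the definition of $\err_k$ in Eq.~\eqref{eq:cfa}, this equals $\EE_{d^{\pi_\cp}}[v_k^\top\nabla\log\pi_k(a|s)]+\err_k$. Substituting the bound from Step 1, summing over $k=1,\dots,K$, telescoping, and dropping the nonnegative $\KL(\pi_\cp\|\pi_{K+1})$ yields
\[
\Reg_K\le \frac{\KL(\pi_\cp\|\pi_1)}{\eta}+\frac{K\beta\eta B_L^2}{2}+\sum_{k=1}^K\err_k .
\]

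\textbf{Step 3: choose the step size.} Plugging in $\eta=\sqrt{2\KL(\pi_\cp\|\pi_1)/(\beta K B_L^2)}$ balances the first two terms. Each equals $B_L\sqrt{\beta K\KL(\pi_\cp\|\pi_1)/2}$, so their sum is $B_L\sqrt{2\beta K\KL(\pi_\cp\|\pi_1)}$. Dividing by $K$ gives the claimed bound.

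\textbf{Main obstacle.} The only delicate point is justifying the smoothness inequality and the expectation step. We must apply it only on the support of $\pi_\cp$, where the $\log\pi_\theta$ terms are finite. This holds because $\KL(\pi_\cp\|\pi_1)<\infty$ and Lipschitzness of $\log\pi_\theta$ keeps every $\KL(\pi_\cp\|\pi_k)$ finite, so the telescoping is legitimate. The $G$-Lipschitz constant itself does not appear in the bound; only $\beta$ matters.
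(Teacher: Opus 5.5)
Your proposal is correct and follows essentially the same route as the paper. The shared steps are: the $\beta$-smoothness lower bound on $\log(\pi_{k+1}/\pi_k)$ averaged under $d^{\pi_\cp}$ to get a drift bound on the KL potential, the rewrite $\EE_{d^{\pi_\cp}}[v_k^\top\nabla\log\pi_k]=\EE_{d^{\pi_\cp}}[A_k]-\err_k$, telescoping, and the same step-size balancing. Your extra remark on finiteness of the KL potentials is a harmless justification that the paper leaves implicit.
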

Lemma~\ref{lemma:NPG-regret} shows that, as long as the size of the updates is bounded ($\|v\|\le B_L$), 
controlling the regret reduces to controlling the error of CFA, $\err_k$, at each round. 

\section{Constructing Unified Policy Updates in Parameter Space}\label{sec:algorithm}

We now apply the decomposition in Section~\ref{sec:challenge-decomposition} to construct policy updates $\{v_k\}$ that control the CFA error, i.e., $\err_k$ in Lemma~\ref{lemma:NPG-regret}. We present two principled approaches of this update, based on least-square regression and distributionally robust optimization, respectively.

\subsection{Least Square Policy Update}
\label{sec:npg}
Recall that the error of CFA, $\err_k$, quantifies how well the policy gradients $\nabla\log\pi_k(a\mid s)$ can linearly approximate the advantage function $A_k(s,a)$ under $d^{\pi_\cp}$. Viewing $\nabla\log\pi_k(a\mid s)$ as \emph{features} and $A_k(s,a)$ as the \emph{regression target}, this observation naturally leads to a noiseless linear regression formulation for constructing the update $v_k$, which we term \textbf{\emph{least square policy update}} (LSPU).

Specifically, we define the least-square loss at round $k$ as
\[
L_k(v):=\EE_{(s,a)\sim d^D}\bigl[(A_k(s,a)-v^\top\nabla\log\pi_k(a\mid s))^2\bigr],
\]
where the expectation is taken on offline data distribution $d^D$. 
In each round $k$, the actor first computes the advantage function $A_k$, and then obtains the update $v_k$ by minimizing $L_k$ using samples, within the norm constraint $\|v_k\|\le B_L$. In particular, given $\{(s^{(i)},a^{(i)})\}_{i=1}^N \simiid d^D$ from the actor dataset, the constrained regression problem is given by:
\begin{equation}\label{eq:ols-solution}
v_k
=\underset{v:\|v\|\le B_L}{\arg\min}\frac1N\sum_{i=1}^N\Bigl(A_k(s^{(i)},a^{(i)})-v^\top\nabla\log\pi_k(a^{(i)}|s^{(i)})\Bigr)^2.
\end{equation}

\paragraph{Error transfer through the coverage condition.}\label{sec:npg-coverage}

A key question is why the regression objective can be formed under the data distribution $d^D$, even though the error of CFA are defined under $d^{\pi_\cp}$. The  reason is that the squared loss $L_k$ is always \emph{non-negative} (c.f.~Section~\ref{sec:challenge-md}), allowing us to do distribution transfer via the coverage condition:
\[
\EE_{d^{\pi_\cp}}\bigl[(A_k-v^\top\nabla\log\pi_k)^2\bigr]\leq \CC\cdot\EE_{d^D}\bigl[(A_k-v^\top\nabla\log\pi_k)^2\bigr].
\]
Moreover, since this is a linear regression problem, this density coverage (known as \emph{concentrability coefficient}) can be further improved in the compatible case, i.e., when $A_k=w^\top\nabla\log\pi_k$ for some $w\in\RR^\dd$. Let the covariance matrix be $\Sigma_D^{\pi}=\EE_{d^D}[\nabla\log\pi\nabla\log\pi^\top]$. In this~case, we can sharpen the density coverage $\CC$ (Assumption~\ref{ass:data}) with the notion of \emph{feature coverage} \citep{jiang2025offline}, though~we omit such an improvement in our main theorems for readability (see Appendix~\ref{app:dro-w2} for details): $\CC_\feat=\max_{k\in[K]}\ \EE_{d^{\pi_\cp}}[\nabla\log\pi_k]^\top(\Sigma_D^{\pi_k})^{-1}\EE_{d^{\pi_\cp}}[\nabla\log\pi_k]$.

\paragraph{Relation to NPG.}
This least-square interpretation of policy update can be viewed as a form of the \emph{natural policy gradient} (NPG) method under function approximation, which is well-studied in the on-policy setting \citep{kakade2001natural,peters2008natural,agarwal2021theory}. That said, our formulation differs from canonical on-policy (or off-policy) NPG since LSPU is computed on the offline data distribution $d^D$ without any importance-weight correction. Such design choice follows directly from the decomposition in Lemma~\ref{lemma:NPG-regret} and tailors specifically to offline RL.

\paragraph{Actor-critic incompatibility.}
Even with an accurate critic $f_k$, the linear regression formulation above need \textbf{not} be well-specified. In general, the target $A_k$ does not necessarily lie in the linear span of the features $\nabla\log\pi_k$. Hence we are in the \emph{agnostic learning} setting and any regression-based update may inevitably incur an approximation error, even with infinite data. We capture such misspecification via a quantity $\epsilon_\bias$ defined below.

\begin{assumption}[LSPU Approximation Error]
\label{ass:npg}
Define $v_k^*$ as the best linear approximator of the least-square loss $L_k$. Assume that $v_k^*$ satisfies the norm constraint $\|v_k^*\|\le B_L$\footnote{Assumption~\ref{ass:npg} requires that the global minimizer $v_k^*$, which admits a closed-form solution, must satisfy $\|v_k^*\|\le B_L$. This can be achieved by setting $B_L$ as a uniform upper bound $G V_\tmax/\lambda_\tmin$, where $\lambda_\tmin$ denotes the smallest eigenvalue of the feature covariance matrices $\{\Sigma_D^{\pi_k}\}_{k=1}^K$.}, and its loss is uniformly bounded by some $\epsilon_\bias\ge 0$, i.e., $L(v_k^*)=\min_{v\in\RR^\dd}L_k(v)\le\epsilon_\bias$.
\end{assumption}

The quantity $\epsilon_\bias$ uniformly measures the extent to which $\nabla\log\pi_k$ can compatibly express $A_k$ at each round. To ensure that $\epsilon_\bias$ is small, the actor and critic must align with each other. Thus we refer to $\epsilon_\bias$ as \textbf{\emph{actor-critic incompatibility}}, which vanishes in the compatible case, i.e. when $A_k = w^\top \nabla \log \pi_k$ for some $w \in \RR^\dd$. For instance, the canonical softmax policy class is fully representative, so $\epsilon_\bias = 0$ for any critic class $\cF$. In linear function approximation, $\epsilon_\bias = 0$ when both $\Pi_\theta$ and $\cF$ share the same feature representation (see Appendix~\ref{app:nobias}). Now we can present the regret guarantee for LSPU in the following theorem; the proof can be found in Appendix~\ref{app:npg}.

\begin{theorem}[Main Theorem for LSPU]
\label{thm:npg-guarantee}
Let $\lambda_\tmin$ denote the smallest eigenvalue of $\{\Sigma_D^{\pi_k}\}_{k=1}^K$. Under Assumptions~\ref{ass:data},~\ref{ass:policy},~\ref{ass:npg}, and regularity conditions for linear regression (Assumption~\ref{ass:linear-regression}), Algorithm~\ref{alg:po} with policy updates 
$\{v_k\}$ computed in Eq.~\eqref{eq:ols-solution} achieves the following regret bound with probability at least $1-\delta$:
\begin{align*}
\frac{\Reg_K}{K}\lesssim \underbrace{B_L\sqrt{\frac{\beta\KL(\pi_\cp\|\pi_1)}{K}}}_\textup{optimization error}+\underbrace{\sqrt{\CC\epsilon_\bias}}_\textup{intrinsic bias}
+\underbrace{G\sqrt{\frac{\CC\epsilon_\bias\cdot\Comp(\cF,\Pi_\theta,\delta)}{N\lambda_\tmin}}}_\textup{statistical estimation error},
\end{align*}
where $\Comp(\cF,\Pi_\theta,\delta)$ is some complexity measure of the function class $\cF$ and the policy class $\Pi_\theta$.\footnote{\label{fn:complexity}We emphasize that a na\"ive union bound in analysis leading to $\log(|\cF||\Pi_\theta|/\delta)$ is inappropriate due to continuity of $\Pi_\theta$; instead, $\Comp(\cF,\Pi_\theta,\delta)$ as an abstract complexity measure can be instantiated via \emph{metric entropy} (i.e., log-covering numbers). See Appendix~\ref{app:npg-cov} and Eq.~\eqref{eq:complexity} for the formal definition.}
\end{theorem}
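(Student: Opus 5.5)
We start from Lemma~\ref{lemma:NPG-regret}. The constraint in Eq.~\eqref{eq:ols-solution} guarantees $\|v_k\|\le B_L$, so the lemma applies directly. It produces the optimization-error term, and what remains is to bound the average CFA error $\frac1K\sum_k\err_k$. For each $k$ we apply Jensen/Cauchy--Schwarz and then the coverage transfer (Assumption~\ref{ass:data}), which is valid because the squared loss is non-negative:
\[
\err_k\le\sqrt{\EE_{d^{\pi_\cp}}\bigl[(A_k-v_k^\top\nabla\log\pi_k)^2\bigr]}\le\sqrt{\CC\,L_k(v_k)}.
\]
We then split $L_k(v_k)=L_k(v_k^*)+\bigl(L_k(v_k)-L_k(v_k^*)\bigr)$. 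The first piece is at most $\epsilon_\bias$ by Assumption~\ref{ass:npg}. The second is the excess risk of constrained least squares. Using $\sqrt{a+b}\le\sqrt a+\sqrt b$, this gives the intrinsic-bias term $\sqrt{\CC\epsilon_\bias}$ plus $\sqrt{\CC\cdot\text{excess}}$.

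For the excess risk, $v_k^*$ lies in the feasible ball, so $\hat L_k(v_k)\le \hat L_k(v_k^*)$. The standard argument then bounds $L_k(v_k)-L_k(v_k^*)$ by a uniform deviation of the centered process $\hat L_k(v)-\hat L_k(v_k^*)-(L_k(v)-L_k(v_k^*))$.

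To obtain the stated rate, we need a fast-rate, variance-sensitive (Bernstein-type) argument that exploits two facts.
\begin{itemize}[leftmargin=*]
\item By first-order optimality of the unconstrained minimizer $v_k^*$, the excess risk equals $(v-v_k^*)^\top\Sigma_D^{\pi_k}(v-v_k^*)\ge\lambda_\tmin\|v-v_k^*\|^2$, where we treat $\|\cdot\|$ as Euclidean or comparable to it. This is where $\lambda_\tmin$ enters.
\item The cross term in the deviation is $2(v-v_k^*)^\top\frac1N\sum_i\epsilon_i\nabla\log\pi_k(a^{(i)}|s^{(i)})$, with residuals $\epsilon_i=A_k-v_k^{*\top}\nabla\log\pi_k$. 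Its variance scales as $G^2\|v-v_k^*\|^2\epsilon_\bias$, since $\|\nabla\log\pi_k\|_*\le G$ and $\EE[\epsilon^2]\le\epsilon_\bias$. The quadratic term concentrates at a lower order under the regularity conditions of Assumption~\ref{ass:linear-regression}.
\end{itemize}
Combining these, the excess risk $\Delta$ satisfies roughly $\Delta\lesssim G\sqrt{\Delta\,\epsilon_\bias\Comp/(N\lambda_\tmin)}+\text{lower order}$. Solving gives $\Delta\lesssim G^2\epsilon_\bias\Comp/(N\lambda_\tmin)$. Therefore $\sqrt{\CC\Delta}$ yields exactly the statistical term $G\sqrt{\CC\epsilon_\bias\Comp/(N\lambda_\tmin)}$.

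The main obstacle is uniformity. The regression target $A_k$ and the features $\nabla\log\pi_k$ depend on $f_k$ and $\theta_k$, which are chosen adaptively from the same data. Moreover, $\Pi_\theta$ is continuous, so no naive union bound is available. The plan is to make the concentration above uniform over all $(f,\theta)\in\cF\times\Pi_\theta$ and all $v$ in the $B_L$-ball by taking $\epsilon$-nets. For $\cF$ we use a net in sup norm. For $\theta$ we use the Lipschitz and smoothness parameters $G,\beta$ from Assumption~\ref{ass:policy}, which control how $\log\pi_\theta$ and $\nabla\log\pi_\theta$ change under perturbations of $\theta$. For $v$ we use a ball covering. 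The resulting log-covering number, together with $\log(K/\delta)$ for a union bound over rounds, defines $\Comp(\cF,\Pi_\theta,\delta)$. The discretization errors are absorbed by choosing the net resolution polynomially small in $N$.

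Averaging the per-round bounds over $k$ and adding the optimization term from Lemma~\ref{lemma:NPG-regret} completes the bound. Constants are hidden in $\lesssim$.
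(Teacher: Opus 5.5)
Your proposal is correct and follows the paper's route: Lemma~\ref{lemma:NPG-regret}, then coverage transfer on the non-negative squared loss, then a variance-sensitive excess-risk bound for the noiseless misspecified regression that exploits $\EE_{d^D}[\phi_k\epsilon_k]=0$ and $\EE_{d^D}[\epsilon_k^2]\le\epsilon_\bias$, and finally covering over $\cF\times\Pi_\theta$. Your direct bound $\err_k\le\sqrt{\CC\,L_k(v_k)}$ is a slightly cleaner version of the linear split in Lemma~\ref{lemma:npg-decomposed-regret}.

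The one real difference is the excess-risk step. The paper uses the basic inequality plus Cauchy--Schwarz to get $\epsilon_\stat\le 4\|\Sigma_D^{1/2}\widehat{\Sigma}_D^{-1}\Sigma_D^{1/2}\|^2\|\widehat{\EE}[\Sigma_D^{-1/2}\phi_k\epsilon_k]\|^2$ (Lemma~\ref{lemma:ols-decomposition}), and controls the two factors with matrix Chernoff and a vector Freedman inequality, so it needs no net over $v$. Your localized scalar-Bernstein argument over a net of $v$ reaches the same rate. It adds an extra $\dd\log N$ to $\Comp$, which is harmless because $\log\cN(\Pi_\theta,1/N)$ already scales like $\dd\log N$. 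It also adds a $\log K$ that is unnecessary, since uniformity over $\cF\times\Pi_\theta$ already covers all rounds at once.
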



Theorem~\ref{thm:npg-guarantee} shows that the regret bound of LSPU admits a 3-fold decomposition: an \emph{optimization} term due to the update in Eq.~\eqref{eq:update}, a \emph{bias} (approximation error) term determined by the actor-critic incompatibility
, and a \emph{statistical estimation} term that decays at a rate of $\cO(\sqrt{\CC/N})$. 

In particular, consider the well-specified setting where $\epsilon_{\bias}=0$. Since this least-square regression problem is noiseless, the last statistical error term also vanishes. Consequently, as long as the sample size $N$ satisfies $N \ge \dd$ (the dimension of parameter $\theta$), it suffices to run $K=\cO(1/\varepsilon^2)$ rounds to guarantee $\Reg_K/K \le \varepsilon$. In contrast, the hardness result of contextual mirror descent (Proposition~\ref{thm:hardness}) indicates that constant per-step regret arises even in the well-specified case (see Proposition~\ref{thm:hardness-nobias}).

\subsection{Distributionally Robust Policy Update}
\label{sec:dro}

In Section~\ref{sec:npg}, we use a squared surrogate to control the error of CFA, which enjoys favorable statistical and computational properties of linear regression. However, it is inherently a relaxation: the squared loss does not directly correspond to the linear form of $\err_k$, and may be loose when the approximation error is highly heterogeneous across the state-action space.

This naturally raises a question: can we control the linear error of CFA directly without squaring it? Related ideas have been explored in marginalized importance sampling  \citep{liu2018breaking, uehara2020minimax,xie2020q}. The main obstacle is that $\err_k$ is defined under the unknown distribution $d^{\pi_\cp}$, hence distribution shift remains.

To address this issue, we adopt a \emph{distributionally robust optimization} (DRO) perspective. The key idea is to express the error under $d^{\pi_\cp}$ as an importance-weighted expectation under $d^D$:
\begin{equation}\label{eq:robust-transfer}
\left|\EE_{d^{\pi_\cp}}[A_k-v_k^\top\nabla\log\pi_k]\right|=\left|\EE_{d^D}\left[w^*(s,a)(A_k-v_k^\top\nabla\log\pi_k)\right]\right|,
\end{equation}
where $w^*$ is some \emph{correction weight} that allows distribution transfer between $d^{\pi_\cp}$ and $d^D$, 
such as $w^*(s,a)=d^{\pi_\cp}(s,a)/d^D(s,a)$, though alternative forms exist which we will discuss later. 
Since $w^*$ is unknown, 
we introduce a robust loss that considers the worst-case weight over a weight class $\cW$:
\begin{align*}
\ell_k(v):=\max_{w\in\cW}\left|\EE_{d^D}\left[w(A_k-v^\top\nabla\log\pi_k)\right]\right|.
\end{align*}
The weight class $\cW$ may be specified explicitly or induced implicitly as a nonparametric space. As long as it satisfies \emph{realizability} (i.e., $w^*\in\cW$), we have $\err_k \leq \ell_k(v_k)$, and minimizing $\ell_k$ suffices. We refer to the update that leverages DRO as the \textbf{\emph{distributionally robust policy update}} (DRPU).

\paragraph{The $\cW_\infty$ class and computation.}

A natural instantiation of the correction weight $w^*$ is the density ratio $d^{\pi_\cp}/d^D$ which directly allows the distribution transfer in Eq.~\eqref{eq:robust-transfer}, i.e., $w^*(s,a)=d^{\pi_\cp}(s,a)/d^D(s,a)$. Correspondingly, we can consider the following \emph{bounded-density-ratio} class:
\begin{equation}\label{eq:weight}
\cW_\infty=\Bigl\{w:\cS\times\cA\to[0,\CC]:\EE_{(s,a)\sim d^D}[w(s,a)]=1\Bigr\}.
\end{equation}
which includes all valid density ratios due to the normalization constraint. This choice aligns with the density coverage condition (Assumption~\ref{ass:data}). In particular, realizability holds since $\|w^*\|_\infty = \|d^{\pi_\cp}/d^D\|_\infty \leq \CC$. From now on we instantiate $\cW = \cW_\infty$ and keep the notation $\ell_k$ for simplicity.

One key advantage of using $\cW_\infty$ lies in its \emph{computational} properties. Under $\cW_\infty$ class, the robust loss $\ell_k$ admits an equivalent dual representation (Proposition~\ref{prop:cvar}), which is a version of \emph{Conditional Value-at-Risk} (CVaR) objective:
\[
\ell_k(v)
=
\max_{\sign\in\{\pm1\}}
\min_{\tau\in\RR}
\Bigl\{\tau+\CC\ \EE_{d^D}\big[(\sign(A_k-v^\top\nabla\log\pi_k)-\tau)_+\big]\Bigr\}\,
\]
where $\sign\in\{\pm 1\}$ denotes the sign and $(Z)_+ = \max\{0,Z\}$. The algorithm solves for the empirical counterpart of $\ell_k$ using samples $\{(s^{(i)}, a^{(i)})\}_{i=1}^N \simiid d^D$:
\begin{equation}\label{eq:cvar-lp}
\hat{\ell}_k(v)=\max_{\sign\in\{\pm1\}}\min_{\tau\in\RR}\Big\{\tau+\frac{\CC}{N}\sum_{i=1}^N \big(\sign (A_k^{(i)}-v^\top\nabla\log\pi_k(a^{(i)}|s^{(i)}))-\tau\big)_+\Big\}.
\end{equation}
Note that $\hat{\ell}_k$ is convex in $v$. This optimization problem can be reformulated as a $\dd$-dimensional linear program (or as a SOCP), thus admitting efficient numerical algorithms; see Appendix~\ref{app:dro-computation}.


\paragraph{Guarantee.}
As in Section~\ref{sec:npg}, we allow for actor-critic incompatibility in the actor update and characterize it through the following assumption. 
The norm bound is set as $B_L=V_\tmax$ in this section.

\begin{assumption}[DRPU Approximation Error]\label{ass:dro}
Define $\tilde{v}_k^*$ as the best linear approximator of the robust loss $\ell_k$ (within the norm constraint $\|\tilde{v}_k^*\|\le V_\tmax$). Assume its loss is uniformly bounded by some $\widetilde{\epsilon}_\bias\geq 0$ for all $k$, i.e., $\ell_k(\tilde{v}_k^*)=\min_{v:\|v\|\le V_\tmax}\ell_k(v)\leq\widetilde{\epsilon}_\bias$.
\end{assumption}

It is worth noting that $\widetilde{\epsilon}_\bias$ is closely related to the bias term $\epsilon_\bias$ associated with LSPU in Assumption~\ref{ass:npg}. In particular, by the Cauchy-Schwarz inequality, one can show that 
$\widetilde{\epsilon}_\bias\leq\sqrt{\CC\cdot\epsilon_\bias}$,
a relationship that we will explore further in Section~\ref{sec:comparison}. As a consequence, \textbf{DRPU is more robust to actor-critic incompatibility}; but running the algorithm also relies on a reasonably tight knowledge of the coverage constant $\CC$, as also discussed in \citet[Section 7]{xie2020q}.

We now present the regret bound for this DRPU method under $\cW_\infty$ class given in Eq.~\eqref{eq:weight}. The guarantee exhibits a similar 3-fold structure of Theorem~\ref{thm:npg-guarantee}, showing that $\err_k$ scales as $\mathcal{O}(\sqrt{\CC/N})$, up to an additional bias term.

\begin{theorem}[Main Theorem for DRPU under $\cW_\infty$ Class]
\label{thm:dro-guarantee}
Under Assumptions~\ref{ass:data},~\ref{ass:policy} and~\ref{ass:dro}, 
$\{v_k\}$ minimizing $\{\hat{\ell}_k\}$ in Eq.~\eqref{eq:cvar-lp} achieves the following regret bound with probability at least $1-\delta$:
\begin{align*}
\frac{\Reg_K}{K}\lesssim V_\tmax\sqrt{\frac{\beta\KL(\pi_\cp\|\pi_1)}{K}}+\widetilde{\epsilon}_\bias+V_\tmax (G+1)\sqrt{\frac{\CC\cdot\Comp(\cF,\Pi_\theta,\delta)}{N}}.
\end{align*}
\end{theorem}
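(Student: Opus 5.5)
The proof plugs DRPU into the Regret Decomposition Lemma (Lemma~\ref{lemma:NPG-regret}) with $B_L=V_\tmax$. The minimizers $v_k$ of $\hat\ell_k$ are taken over $\|v\|\le V_\tmax$, so the lemma's norm condition holds. With the prescribed step size, the optimization term is $V_\tmax\sqrt{2\beta\KL(\pi_\cp\|\pi_1)/K}$. It then suffices to bound each $\err_k$ uniformly in $k$ by $\widetilde\epsilon_\bias+\cO\bigl(V_\tmax(G+1)\sqrt{\CC\,\Comp/N}\bigr)$.

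For a fixed round $k$, realizability gives the first step. Since $w^*=d^{\pi_\cp}/d^D\in\cW_\infty$ by Assumption~\ref{ass:data}, Eq.~\eqref{eq:robust-transfer} yields $\err_k\le\ell_k(v_k)$. We then use the standard ERM chain
\[
\ell_k(v_k)\le \hat\ell_k(v_k)+\Delta\le \hat\ell_k(\tilde v_k^*)+\Delta\le \ell_k(\tilde v_k^*)+2\Delta\le\widetilde\epsilon_\bias+2\Delta,
\]
where $\Delta=\sup_{\|v\|\le V_\tmax}|\hat\ell_k(v)-\ell_k(v)|$. The middle inequality holds because $v_k$ minimizes $\hat\ell_k$, and the last is Assumption~\ref{ass:dro}. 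To control $\Delta$, work through the CVaR dual form (Proposition~\ref{prop:cvar}). For each sign $\sign$ and each $\tau$, the map $\tau+\CC\,\EE[(\sign Z-\tau)_+]$ with $Z=A_k-v^\top\nabla\log\pi_k$ differs from its empirical version by at most $\CC\,|\EE_{d^D}-\hat\EE_N|[(\sign Z-\tau)_+]$. The max over signs and min over $\tau$ commute with a uniform deviation bound, so $|\hat\ell_k-\ell_k|\le \CC\sup_{\tau,\sign}|(\EE-\hat\EE)(\sign Z-\tau)_+|$. The optimal $\tau$ can be restricted to the range of $Z$, an interval of length $\cO(V_\tmax(1+G))$, since $|A_k|\le V_\tmax$ and $|v^\top\nabla\log\pi|\le\|v\|\,\|\nabla\log\pi\|_*\le V_\tmax G$.

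The main obstacle is the concentration, specifically obtaining the $\sqrt{\CC/N}$ rate rather than the naive $\CC/\sqrt N$. A bounded-difference argument on the CVaR objective would give $\CC\cdot V_\tmax(G+1)/\sqrt N$, which is too loose. Instead I would use the primal form $\max_{w\in\cW_\infty}\EE_{d^D}[wZ]$ together with Bernstein's inequality. For $w\in[0,\CC]$ with $\EE[w]=1$, the variance of $wZ$ is at most $\EE[w^2Z^2]\le \CC\,\EE[w]\,\|Z\|_\infty^2=\CC\|Z\|_\infty^2$. Bernstein then gives deviation $\|Z\|_\infty\sqrt{\CC\log(1/\delta)/N}+\CC\|Z\|_\infty\log(1/\delta)/N$, and the second term is lower order once $N\gtrsim \CC\log(1/\delta)$.

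Care is needed because the empirical problem normalizes $w$ against the empirical measure, so the empirical and population weight classes differ. I would handle this in one of two ways. The first is a one-sided argument: bound the upper direction using the fixed $w^*$, and the lower direction using $\hat\ell_k(\tilde v_k^*)\ge$ its population value minus a Bernstein term at the worst-case threshold $\tau$. The second is to apply a localized Bernstein bound directly to the $(\cdot)_+$ dual function, with variance $\lesssim\|Z\|^2_\infty/\CC$ after rescaling by $\CC$. Finally, to make the bound uniform over $v$, $\tau$, and the data-dependent $f_k,\pi_k$, take an $\epsilon$-net. The loss is Lipschitz in $v$ (constant $\CC G$), in $\tau$, in $f$ under the sup-norm, and in $\theta$ via Assumption~\ref{ass:policy}. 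A union bound over the net, whose log-size is $\Comp(\cF,\Pi_\theta,\delta)$, and over $k\in[K]$ absorbed into $\Comp$, yields $\Delta\lesssim V_\tmax(G+1)\sqrt{\CC\,\Comp/N}$. Averaging over $k$ and combining with the lemma gives the theorem.
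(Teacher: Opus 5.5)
Your skeleton matches the paper's: Lemma~\ref{lemma:NPG-regret} with $B_L=V_\tmax$, then $\err_k\le\ell_k(v_k)$ from $w^*=d^{\pi_\cp}/d^D\in\cW_\infty$, the ERM chain, the CVaR dual, variance localization to turn $\CC$ into $\sqrt{\CC}$, and a cover of $\cF\times\Pi_\theta$. What needs fixing is exactly the step you flag as the main obstacle.

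In your chain, the step at $\tilde v_k^*$ needs an \emph{upper} bound $\hat\ell_k(\tilde v_k^*)\le\ell_k(\tilde v_k^*)+\mathrm{dev}$. You wrote a lower bound instead. The threshold must also be the population minimizer $\tau^*_\pm$ of Lemma~\ref{lemma:quantile}, not a worst-case $\tau$. Plugging $\tau^*_\pm$ into the empirical minimum gives $\hat\ell^\pm_k-\ell^\pm_k\le\CC(\hat\EE-\EE)[(\pm Z-\tau^*_\pm)_+]$. Since $\Pr(\pm Z>\tau^*_\pm)\le1/\CC$, the variance of $\CC(\pm Z-\tau^*_\pm)_+$ is at most $4B^2\CC$, with $B=V_\tmax(G+1)$. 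Any supremum over $\tau\in[-B,B]$, including the net over $\tau$ in your last paragraph, pushes the variance back to order $\CC^2B^2$ and the rate back to $\CC/\sqrt N$.

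With this correction your first option closes, and no uniformity in $\tau$ is needed.
\begin{itemize}
\item On the $w^*$ side, each $w^*Z_v$ has range $\CC B$ and variance at most $\CC B^2$. Bernstein plus a net over $(v,f,\pi)$ then gives $\err_k\le\hat\EE[w^*Z_{v_k}]+\mathrm{dev}$.
\item To make $w^*$ empirically feasible, scale it down if $\hat\EE[w^*]>1$ and raise entries toward $\CC$ otherwise. This gives $\hat\EE[w^*Z_{v_k}]\le\hat\ell_k(v_k)+B|\hat\EE[w^*]-1|$, with $|\hat\EE[w^*]-1|\lesssim\sqrt{\CC\log(1/\delta)/N}$.
\item On the $\tau^*$ side you need only one $v$ per element of the $\cF\times\Pi_\theta$ cover, namely its population minimizer. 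This uses $\hat\ell_k(v_k)\le\hat\ell_k(v)$ for every feasible $v$, and the fact that empirical and population CVaR are $1$-Lipschitz in $\|Z\|_\infty$.
\end{itemize}

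This route differs from the paper's in a useful way. The paper (and your second option, if carried out two-sidedly) bounds $\sup_v|\ell_k(v)-\hat\ell_k(v)|$. It uses $\tau^*(v)$ for one direction and the empirical minimizer $\hat\tau(v)$ for the other. Because $\hat\tau(v)$ is data-dependent, its population tail is controlled by DKW, which is where the $\CC/N^{3/4}$ term in Lemma~\ref{thm:dro} comes from. Uniformity in $v$ comes from Bousquet's inequality with Rademacher tail-peeling.

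You instead exploit that Lemma~\ref{lemma:NPG-regret} needs only $\err_k=\EE_{d^D}[w^*Z_{v_k}]$, not $\ell_k(v_k)$. So $\hat\tau(v)$ never enters and no DKW is needed. What this buys:
\begin{itemize}
\item Your lower-order term is $\CC B\,\Comp/N$. Since $\err_k\le B$ always, the displayed $\sqrt{\CC\,\Comp/N}$ rate follows without an implicit requirement like $N\gtrsim\CC^2$ to absorb an $N^{-3/4}$ term.
\item Uniformity comes from an explicit net in $v$, costing a $\dd\log(\cdot)$ term that is naturally absorbed into $\Comp$. This is also more transparent than the paper's peeling step, whose Cauchy--Schwarz bound on $\hat{\mathfrak R}_N(\cG_m)$ as written does not actually decay in $N$.
\end{itemize}
What you give up is a bound on the full robust loss $\ell_k(v_k)$ over all of $\cW_\infty$, which the theorem does not need.
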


The proof of Theorem~\ref{thm:dro-guarantee} leverages the structure of CVaR optimization to eliminate the dependency on $\cW_\infty$, and employs the ``tail-peeling'' technique to reduce the variance of the active fraction of CVaR, improving the bound from $\CC$ to $\sqrt{\CC}$; see Appendix~\ref{app:dro-proof} for details.

\paragraph{Alternative weight classes.}

$\cW_\infty$ in Eq.~\eqref{eq:weight} is one instantiation of the weight class. More generally, the DRO framework allows for alternative characterizations of $\cW$, as long as it satisfies the transfer in Eq.~\eqref{eq:robust-transfer}. Different choices of $\cW$ correspond to different structural assumptions on the distribution shift between $d^{\pi_\cp}$ and $d^D$. For instance, in the compatible case, one can leverage the improved coverage notion of feature coverage in Section~\ref{sec:npg} to define the following chi-square weight class:
$
\cW_{\chi^2}=\big\{w:\EE_{(s,a)\sim d^D}[w(s,a)^2]\leq \CC_\feat\bigr\}.
$ 
It suffices to show that such a weight class contains solution to Eq.~\eqref{eq:robust-transfer}. Indeed, at round $k$ consider $w^*(s,a)=\EE_{d^{\pi_\cp}}[\nabla\log\pi_k]^\top(\Sigma_D^{\pi_k})^{-1}\nabla\log\pi_k(a|s)$, which satisfies both Eq.\eqref{eq:robust-transfer} and  $w^*\in\cW_{\chi^2}$; 
see Appendix~\ref{app:dro-w2} for details.

\subsection{Bias Comparison between LSPU and DRPU}
\label{sec:comparison}

As shown in Theorems~\ref{thm:npg-guarantee} and \ref{thm:dro-guarantee}, the regret bounds are mainly driven by the intrinsic bias terms when optimization and statistical errors vanish (that is, when sample size $N$ and optimization iterations $K$ are sufficiently large). Such terms are defined differently for LSPU (Assumption~\ref{ass:npg}) and DRPU (Assumption~\ref{ass:dro})
. As quantified in Section~\ref{sec:dro}, DRPU has potential advantage in terms of robustness to actor-critic incompatibility than LSPU. In this section, we use a case study to illustrate this behavior when $d^D=d^{\pi_\cp}$. In fact, perhaps surprisingly, \textbf{DRPU can be viewed as behavior cloning in this setting}, providing an interesting unification between offline RL and imitation learning. 



\begin{figure}[t]
    \centering
    \begin{minipage}[t]{0.47\linewidth}
        \vspace{0pt}
        \centering
        \includegraphics[width=\linewidth]{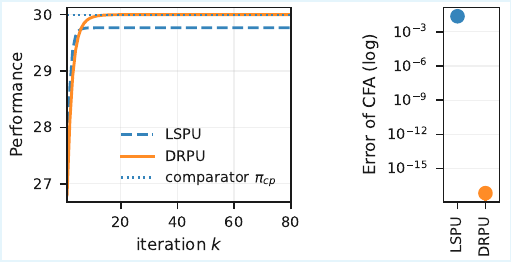}
    \end{minipage}
    \hfill
    \begin{minipage}[t]{0.49\linewidth}
        \vspace{0pt}
        \caption{Comparison between LSPU and DRPU under no-shift setting ($d^D=d^{\pi_\cp}$). 
        \textbf{Left:} performance $J(\pi_k)$ over iterations, where DRPU converges to the comparator policy $\pi_\cp$ (\emph{not} optimal), while LSPU plateaus at a worse policy.
        \textbf{Right:} the error of CFA, $\err_k$, at iteration $k=80$ on a log scale, showing that DRPU drives the error close to zero, whereas LSPU incurs a larger bias.}
        \label{fig:comparison}
    \end{minipage}
\end{figure}

\paragraph{No-shift case.}
To isolate the effect of actor-critic incompatibility, we consider the case where $d^D=d^{\pi_\cp}$. This rules out the difficulty of distribution shift and naturally arises in hybrid settings of imitation learning and offline RL, where we have access to additional expert annotations (i.e., data from $\pi_\cp$). In this case, we can choose the weight class as realizable $\cW=\{\mathbf{1}\}$ since $d^{\pi_\cp}/d^D\equiv 1$. The DRPU then reduces to the following \emph{mean-matching} problem:
\begin{equation}\label{eq:mm}
\min_{v:\|v\|\leq V_\tmax}\left|\EE_{d^{\pi_\cp}}[A_k]-v^\top\EE_{d^{\pi_\cp}}[\nabla\log\pi_k]\right|.
\end{equation}
This formulation highlights a fundamental difference between LSPU and DRPU. While LSPU enforces pointwise control by minimizing a squared loss, DRPU only requires agreement in expectation under a certain distribution $d^{\pi_\cp}$. Therefore, Eq.~\eqref{eq:mm} can be driven to exactly zero under mild conditions, even when the actor and the critic are incompatible. We empirically illustrate this phenomenon using a simple MDP (details in Appendix~\ref{app:experiment}), as shown in Figure~\ref{fig:comparison}. 


The mean-matching update also admits a natural interpretation. Consider the policy update $\theta_{k+1}=\theta_k+\eta v_k$, where $v_k$ solves Eq.~\eqref{eq:mm}. This update can be viewed as performing a steepest descent step\footnote{This holds when $\EE_{d^{\pi_\cp}}[A_k] > 0$. Roughly speaking, this is the typical situation as it is equivalent to $J(\pi_k) \le J(\pi_\cp)$ when the critic is accurate; see  Appendix~\ref{app:bc}.} within $[-V_\tmax,V_\tmax]$ on the expected KL divergence under $d^{\pi_\cp}$, $\KL(\pi_\cp\|\pi_\theta)$, which is exactly the objective of \emph{behavior cloning} (BC). Iterating over Eq.~\eqref{eq:mm} will lead to a policy $\pi_{\theta^*}\in\Pi_\theta$ closest to $\pi_\cp$ in terms of expected KL divergence. More related discussion can be found in Appendix~\ref{app:dro-mm}.

\section{Conclusion}
\label{sec:conclusion}
In this paper, we study the policy optimization in offline RL. With standalone policy class, previous mirror-descent-based methods such as PSPI fail due to a challenge we coin \emph{contextual coupling}.
To address this, we propose a unifying framework based on compatible function approximation, encompassing a regression-based policy update (LSPU) and a distributionally robust alternative (DRPU), both enjoying provable statistical and computational efficiency.

One limitation is that our analysis primarily focuses on explicit stochastic policy classes, such as log-linear policy or Gaussian policy, where log-density-based requirements (e.g., Assumption~\ref{ass:policy}) is well-defined. Extending the theory to deterministic or implicit generative policies (e.g., diffusion policy) remains an important open problem and may require fundamentally different analytical tools, as suggested by recent work on continuous-control learning \citep{ren2024diffusion,simchowitz2025pitfalls}. We view bridging this gap a promising direction for future research.

\bibliographystyle{plainnat}
\bibliography{reference}

\newpage
\appendix

\newpage
\onecolumn
\appendix


\section{Related Work}\label{app:rw}
Offline RL studies learning policies from a fixed dataset without further environment interaction~\citep{levine2020offline}. With function approximation, a dominant line of work builds on value-function estimation and approximate dynamic programming~\citep{ernst2005tree,munos2007performance,antos2008learning,munos2008finite,farahmand2010error,chen2019information,xie2020q}. This value-centric paradigm underlies many practical algorithms~\citep{mnih2013playing} as well as much of the accompanying theory, and provides the standard backdrop for understanding statistical and computational aspects in offline RL. See~\citep{jiang2025offline} for a comprehensive survey of offline RL theory.

A central difficulty in offline RL is \emph{data coverage}. Existing guarantees typically impose some form of coverage assumption, which can be broadly categorized into (i) \emph{all-policy} coverage, requiring that the dataset sufficiently covers $d^\pi$ for all policies in a class, and (ii) \emph{single-policy} coverage, which only requires coverage of a particular comparator distribution such as $d^{\pi_\cp}$. Algorithmically, two common principles are used to cope with single-policy coverage: behavior regularization, which constrains learned policies to remain close to the behavior distribution~\citep{fujimoto2019off}; and pessimism (or uncertainty-aware optimization), which seeks a policy with the best guaranteed performance over models or value functions consistent with the data~\citep{kumar2020conservative,jin2020provably,rashidinejad2021bridging,xie2021bellman,cheng2022adversarially}.

Most prior theoretical treatments of offline RL emphasize critic-side learning and treat the actor as an induced object, for example via greedy or soft-greedy policies with respect to learned value functions. In contrast, a smaller but growing literature studies offline RL from an explicit actor--critic or policy-search perspective. The work most closely related to ours is~\citet{xie2021bellman}, which induces a soft-greedy policy class with respect to pessimistic value estimates (see Section~\ref{sec:pspi}). This structure enables an elegant state-wise mirror-descent interpretation of the actor update, but does not extend to standalone policy classes with arbitrary parameterizations. Subsequent work~\citep{cheng2022adversarially} further develops this actor--critic viewpoint under stronger oracle assumptions on the actor (see Definition~4 of~\citet{cheng2022adversarially}) and trains the actor using policy-gradient-style updates based on empirical demonstrations. As we show in this paper, such oracle assumptions are strong, and this formulation can obscure the precise source of actor-side error as well as the role of distribution mismatch.

From a policy-based reinforcement learning perspective, our work also connects to the broader literature on policy gradient (PG)~\citep{williams1992simple} and natural policy gradient (NPG)~\citep{kakade2001natural} methods. On-policy analyses of (natural) policy gradient establish convergence and sample complexity guarantees under smooth parametric policy classes~\citep{fazel2018global,agarwal2021theory,mei2020global,yuan2022linear}, and highlight the importance of distribution shift. In particular,~\citet{agarwal2021theory} elucidates the connection between NPG, compatible function approximation~\citep{sutton1999policy}, and least-squares regression, yielding a regret decomposition into estimation and approximation errors. We build on this policy-optimization viewpoint, but focus on the offline setting, where the distribution mismatch between the data distribution and the visitation distribution of the updated policies fundamentally alters what can be guaranteed. There is also a line of work on off-policy policy gradient methods~\citep{liu2019off,kallus2020statistically}, including extensions to offline data~\citep{zhou2023offline}. These approaches are largely rooted in analyses of on-policy policy gradient with importance weighting, whereas our work studies policy optimization directly in the offline setting and explicitly addresses distribution shift through data coverage.

Finally, continuous action spaces further amplify these challenges. Early work such as~\citet{antos2008learning} extends fitted Q-iteration to continuous actions by explicitly searching over a policy class rather than computing $\arg\max_a Q(s,a)$. Much of the offline RL theory for continuous actions relies on strong structural assumptions, including linear-quadratic regulators, linear MDPs, Gaussian policies, or other restricted dynamics and policy families~\citep{van2007reinforcement,faust2014continuous,haarnoja2018soft,bennett2023low}. In contrast, modern practice often employs flexible standalone policy classes, such as deep neural networks, optimized directly via policy-based methods. This is particularly pronounced in large-scale post-training of language models, where policies are represented by deep networks and updated using PG-type algorithms such as PPO or GRPO. These developments further motivate a principled understanding of policy optimization under distribution shift and function approximation. Our results contribute to this direction by characterizing a fundamental obstruction to naive contextual policy optimization in offline RL and by proposing actor updates with provable guarantees under standalone policy classes.

\section{Omitted Details for Section~\ref{sec:pspi}}\label{app:pspi}

\subsection{Discussion on the Critic Oracle in Assumption~\ref{ass:oracle}}
\label{app:oracle}
In Assumption~\ref{ass:oracle}, we propose a critic oracle $\cO$ such that given any policy $\pi$, it finds a function $f\in\cF$ such that $f$ satisfies the following two conditions:
\begin{enumerate}[leftmargin=*]
    \item (\emph{Pessimism}) The function $f$ is a pessimistic estimation of $Q^\pi$ (up to some tolerance $\epsilon_r\geq0$), i.e., $J_f(\pi)-J(\pi)\leq\epsilon_r/(1-\gamma)$ with high probability, where $J_f(\pi)=\EE_{s\sim d_0}[f(s,\pi)]$.
    \item (\emph{Bounded Bellman error}) The Bellman error under $d^{\pi_\cp}$ (comparator policy's occupancy) is bounded by some $\epsilon_b> 0$. That is, $\EE_{d^{\pi_\cp}}[\cT^\pi f-f]\leq\epsilon_b$ with high probability.
\end{enumerate}

The Bellman error telescoping (Lemma~\ref{lemma:BE-telescoping}) translates the pessimism condition to $\EE_{d^{\pi}}[f-\cT^\pi f]=(1-\gamma)(J_f(\pi)-J(\pi))\leq\epsilon_r$. This means the oracle can control the Bellman error $(f-\cT^\pi f)$ under both $d^{\pi_\cp}$ and $d^\pi$, which corresponds to latter two terms in the generalized performance-difference lemma (Lemma~\ref{lemma:PDL}). Now we discuss how to realize both of these two conditions if the function class $\cF$ satisfies certain structural conditions. Two classical methods include BRM-type (e.g., \citet{xie2021bellman}) and MQL-type (e.g., \citet{uehara2020minimax}).

\paragraph{BRM-type critic oracle.}
The BRM-type oracle leverages classical Bellman residual minimization (BRM) algorithm \citep{antos2008learning} with version-space pessimism. For a policy $\pi$ and a value function proxy $f$, define the squared Bellman error (under $d^D_\text{critic}$, which is the state-action distribution of the critic-side data) as
\[
\cE(f;\pi)=\EE_{d^D_\text{critic}}\left[(f-\cT^\pi f)^2\right].
\]
By Bellman residual minimization \citep{antos2008learning}, this can be written as
\[
\cE(f;\pi)=\cL(f;f,\pi)-\cL(\cT^\pi f;f,\pi),
\]
where
\[
\cL(f';f,\pi):=\EE_{(s,a)\sim d^D_\text{critic},r=R(s,a),s'\sim P(\cdot\mid s,a)}\bigl[(f'(s,a)-r-\gamma f(s',\pi))^2\bigr].
\]
Let $\widehat{\cL}$ denote the empirical version of $\cL$ estimated from the offline dataset $\cD$. Under Bellman completeness, i.e., $\cT^\pi f\in\cF$ for all $f\in\cF$, we have that $\cL(\cT^\pi f;f,\pi)=\min_{g\in\cF}\cL(g;f,\pi)$. Hence, we can provide an unbiased estimate of the squared Bellman error as
\[
\widehat{\cE}(f;\pi)=\max_{g\in\cF}\widehat{\cL}(f;f,\pi)-\widehat{\cL}(g;f,\pi).
\]
Therefore, the oracle can be given by the following pessimistic estimation of the Bellman error:
\begin{equation}\label{eq:oracle1}
    f_\tmin^\pi=\underset{f\in\cF_{\epsilon_0}^\pi}{\arg\min}J_f(\pi).
\end{equation}
The pessimism is taken inside the feasible set $\cF_{\epsilon_0}^\pi$, referred as the \emph{version space}, which contains all function $f\in\cF$ with small (empirical) squared Bellman error under $d^D$:
\[
\cF_{\epsilon_0}^\pi=\left\{f\in\cF:\widehat{\cE}(f;\pi)\leq \epsilon_0:=\frac{V_\tmax^2}{N}\log\frac{|\cF||\Pi_\theta|}{\delta}\right\}.
\]
Since $Q^\pi$ is the unique fixed point of $\cT^\pi$, by standard concentration argument, we have $Q^\pi\in\cF_{\epsilon_0}^\pi$ with probability at least $1-\delta$. Therefore, this oracle (Eq.~\eqref{eq:oracle1}) satisfies the first condition of pessimism ($\epsilon_r=0$) with high probability:
\[
J_f(\pi)=J_{f_\tmin^\pi}(\pi)=\min_{f\in\cF_{\epsilon_0}^\pi} J_f(\pi)\leq J_{Q^\pi}(\pi)=J(\pi).
\]
To see that the oracle (Eq.~\eqref{eq:oracle1}) also satisfies the second argument of bounded Bellman error under $d^{\pi_\cp}$, we need to assume the same coverage condition (Assumption~\ref{ass:data}) also holds for critic-side data, i.e., $\|d^{\pi_\cp}/d^D_\text{critic}\|\le \CC$ for some finite $\CC$. By applying coverage condition (Assumption~\ref{ass:data}), we can transfer the distribution from $d^{\pi_\cp}$ to $d^D_\text{critic}$ (in addition to some concentration argument to relate $\cE(f;\pi)$ with $\widehat{\cE}(f;\pi)$): for all $f\in\cF_{\epsilon_0}^\pi$,
\[
\EE_{d^{\pi_\cp}}[\cT^\pi f-f]\leq\sqrt{\EE_{d^{\pi_\cp}}[(f-\cT^\pi f)^2]}\leq\sqrt{\CC\cdot\EE_{d^D_\text{critic}}[(f-\cT^\pi f)^2]}=\sqrt{\CC\cdot\cE(f;\pi)}\lesssim\sqrt{\CC\epsilon_0}:=\epsilon_b.
\]
Therefore, the BRM-type oracle in Eq.~\eqref{eq:oracle1} is a valid oracle that satisfies both conditions in Assumption~\ref{ass:oracle}. For computational consideration, we write this as a constrained optimization problem:
\[
\min_{f\in\cF}J_f(\pi)+\lambda\widehat{\cE}(f;\pi),
\]
which results in a minimax optimization problem due to the $\max_g$ in the estimation of $\widehat{\cE}(f;\pi)$. When $\cF$ is a linear function class with respect to some feature $\{\phi_{s,a}\}$, this optimization problem reduces to a quadratic program which can be efficiently solved, as discussed in \citep{xie2021bellman}.

\paragraph{MQL-type critic oracle.}
Another implementation of the critic oracle is based on minimax Q-function learning (MQL), a marginalized importance sampling method. It only requires the realizability of function class $\cF$ (i.e., $Q^\pi\in\cF$ for all $\pi\in\Pi_\theta$) instead of Bellman completeness. The oracle is still given by Eq.~\eqref{eq:oracle1} but with a different version space $\cF_{\epsilon_0}^\pi$ defined as
\[
\cF_{\epsilon_0}^\pi=\left\{f\in\cF:\max_{w\in\cW}\widehat{\ell}(f;w,\pi)\leq\epsilon_0':=V_\tmax\sqrt{\frac{1}{N}\log\frac{|\cF||\Pi_\theta|}{\delta}}\right\},
\]
where $\cW$ is a weight class similar to what is defined in Section~\ref{sec:dro}. And $\widehat{\ell}(f;w,\pi)$ is the empirical version of $\ell(f;w,\pi)$, the average Bellman error of $f$ under weighted distribution $w\cdot d^D_\text{critic}$:
\[
\ell(f;w,\pi)=\Bigl|\EE_{d^D_\text{critic}}[w\cdot(f-\cT^\pi f)]\Bigr|.
\]
Since $Q^\pi$ is the unique fixed point of $\cT^\pi$, $\ell(Q^\pi;w,\pi)=0$ for all $w\in\cW$. By concentration argument, we have $Q^\pi\in\cF_{\epsilon_0}^\pi$ with probability at least $1-\delta$. This indicates the pessimism is satisfied when we are taking $\min_{f\in\cF_{\epsilon_0}^\pi}$ in Eq.~\eqref{eq:oracle1}, validating the first condition ($\epsilon_r=0$). Similarly, the second condition of bounded Bellman error under $d^{\pi_\cp}$ can be analyzed since $\|d^{\pi_\cp}/d^D_\text{critic}\|_\infty\leq C$ by coverage condition (Assumption~\ref{ass:data}): for all $f\in\cF_{\epsilon_0}^\pi$,
\[
\EE_{d^{\pi_\cp}}[\cT^\pi f-f]\leq \Bigl|\EE_{d^D_\text{critic}}[w^*(f-\cT^\pi f)]\Bigr|\leq\max_{w\in\cW}\Bigl|\EE_{d^D_\text{critic}}[w(f-\cT^\pi f)]\Bigr|=\max_{w\in\cW}\ell(f;w)\lesssim \epsilon_0':=\epsilon_b,
\]
where $w^*(s,a)=d^{\pi_\cp}(s,a)/d^D_\text{critic}(s,a)$ and $w^*\in\cW$ (realizability of the weight class). Therefore, this MQL-type oracle in Eq.~\eqref{eq:oracle1} also satisfies both conditions in Assumption~\ref{ass:oracle}. For computational consideration, it requires an additional weight class $\cW$. And this can also be treated as a minimax optimization problem:
\[
\min_{f\in\cF}\max_{w\in\cW} J_f(\pi)+\lambda\widehat{\ell}(f;w,\pi).
\]
In special cases like linear function approximation, it can be transformed to some mean-matching problem which can be efficiently solved.

\subsection{Settings in Continuous Action Space}\label{app:continuous-setting}

For readability, the main text states Section~\ref{sec:pspi} under a finite action space and writes action-wise expectations using sums. In this appendix, we record the corresponding continuous-action formulation of the state-wise mirror-descent argument underlying Theorem~\ref{thm:pspi}. This appendix only extends the PSPI analysis in Section~\ref{sec:pspi}, the rest of the paper remains stated in the finite-action setting.

\paragraph{Setting.}

We keep the state space unchanged (since the contextual analysis does not depend on the size of state space) and only generalize the action space. Let $\cA$ be a measurable action space equipped with a $\sigma$-finite reference measure $\nu$. A policy $\pi$ is a Markov kernel from states to actions such that, for every state $s$, the conditional action law $\pi(\cdot\mid s)$ is absolutely continuous with respect to $\nu$. We write its Radon-Nikodym derivative as $\pi(a\mid s)$, so that $\int_\cA\pi(a\mid s)\nu(\mathrm{d}a)=1$, $\forall s\in\cS$. Accordingly, whenever a function $g:\cS\times\cA\to\RR$ is measurable, we write $g(s,\pi):=\int_\cA g(s,a)\pi(a\mid s)\nu(\mathrm{d}a)$. With this notation, the definition of Bellman operator $(\cT^\pi f)(s,a)=R(s,a)+\gamma\EE_{s'\sim P(\cdot\mid s,a)}[f(s',\pi)]$ immediately extends to continuous action spaces as-is.

To avoid ambiguity, it is helpful to distinguish the discounted state occupancy from the discounted state-action occupancy. Denote $d_\cS^\pi$ the discounted state occupancy measure of $\pi$: $d_\cS^\pi(s)=(1-\gamma)\sum_{t=0}^\infty\gamma^t\Pr_\pi[s_t=s]$ (recall that we still treat states as finite and discrete), and denote by $d^\pi$ the discounted state-action occupancy measure on $\cS\times\cA$: $d^\pi(B)=(1-\gamma)\sum_{t=0}^\infty\gamma^t\Pr_\pi[(s_t,a_t)\in B]$ 
for measurable sets $B\subseteq\cS\times\cA$. In particular, the regret quantity in Eq.~\eqref{eq:regret} should be interpreted as an expectation over the state marginal $d_\cS^{\pi_{\cp}}$, namely
\[
\frac{\Reg_K}{K}=\frac1K\sum_{k=1}^K\EE_{s\sim d_\cS^{\pi_\cp}}[f_k(s,\pi_\cp)-f_k(s,\pi_k)].
\]
This is the same quantity as in the main text, just written with the state occupancy explicitly separated from the state-action occupancy. 

For any two action distributions $\mu$ and $\lambda$ on $\cA$ such that $\mu$ absolutely continuous with respect to $\lambda$, we define their KL-divergence as $\KL(\mu\|\lambda)=\int_\cA\log(\mathrm{d}\mu/\mathrm{d}\lambda)\mathrm{d}\mu$. Furthermore, if both $\mu$ and $\lambda$ admit densities $p$ and $q$, respectively, with respect to $\nu$, the KL reduces to $\KL(p\|q)=\int_\cA p(a)\log(p(a)/q(a))\nu(\mathrm{d}a)$. Similar to the main text, we define the expected KL divergence under $d_\cS^{\pi_\cp}$ between policy $\pi_\cp$ and $\pi_1$ as: 
\[
\KL(\pi_\cp\|\pi_1)=\EE_{s\sim d_\cS^{\pi_\cp}}[\KL(\pi_\cp(\cdot\mid s)\|\pi_1(\cdot\mid s))].
\]

\paragraph{Algorithm and guarantee.}

Under this setup, the PSPI update has the density form
\begin{equation}\label{eq:pspi-continuous}
    \pi_{k+1}(a\mid s)=\frac{\pi_k(a\mid s)\exp(\eta f_k(s,a))}{\int_\cA \pi_k(a'\mid s)\exp(\eta f_k(s,a'))\nu(\mathrm{d}a')}.
\end{equation}
This is exactly the continuous analogue of the state-wise multiplicative weights update used in PSPI. The reason this extension is clean is that Section~\ref{sec:pspi} is purely state-wise: once one fixes a state $s$, the proof is just an online learning argument over the action distribution at that state. 

We now state the continuous-action extension of Theorem~\ref{thm:pspi}, which exhibits the same structure as that in finite action space. In fact, it strictly subsumes Theorem~\ref{thm:pspi}, since the latter can be instantiated by simply choosing the base measure $\nu$ as the counting measure over a finite $\cA$. Therefore, below we only prove Theorem~\ref{thm:continuous-pspi}, as Theorem~\ref{thm:pspi} is a direct corollary of it.\footnote{To see that, by choosing the base measure $\nu$ as the counting measure in $\cA$. Then, all continuous notions defined in this section reduce back to the finite and discrete notions in the main text.}

\begin{theorem}[Theorem~\ref{thm:pspi} with continuous action space]\label{thm:continuous-pspi}
    Assume that for each iteration $k$, the critic $f_k$ is bounded in $[0,V_\tmax]$, and that $\KL(\pi_\cp\|\pi_1)<\infty$. Then the PSPI iterates satisfy
    \[
    \frac{\Reg_K}{K}\le\frac{\KL(\pi_\cp\|\pi_1)}{\eta K}+\frac{\eta V_\tmax^2}{8}.
    \]
    By choosing the step size as $\eta=\sqrt{8\KL(\pi_\cp\|\pi_1)/(KV_\tmax^2)}$, we have the exactly same regret bound as in Theorem~\ref{thm:pspi} (except here the KL divergence is defined in a continuous manner), i.e., $\Reg_K/K\le V_\tmax\sqrt{\KL(\pi_\cp\|\pi_1)/(2K)}$.
\end{theorem}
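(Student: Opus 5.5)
The plan is the standard multiplicative-weights / online-mirror-descent analysis, carried out state by state and then averaged under the fixed distribution $d_\cS^{\pi_\cp}$. Fix a state $s$ and write $p_k=\pi_k(\cdot\mid s)$, $p^\star=\pi_\cp(\cdot\mid s)$, and $Z_k(s)=\int_\cA p_k(a)\exp(\eta f_k(s,a))\nu(\mathrm{d}a)$. By the update \eqref{eq:pspi-continuous}, $\log\frac{p_{k+1}(a)}{p_k(a)}=\eta f_k(s,a)-\log Z_k(s)$. This holds $\nu$-a.e. on the support of $p_k$, and $Z_k\in[1,e^{\eta V_\tmax}]$ is finite because $f_k\in[0,V_\tmax]$.

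The first step integrates this identity against $p^\star$. Since $\KL(p^\star\|p_1)<\infty$, $p^\star\ll p_1$. All $p_k$ are mutually absolutely continuous with $p_1$, because the exponential factor is bounded above and below. Hence every KL term below is finite and the manipulation is legitimate. We obtain
\[
\KL(p^\star\|p_k)-\KL(p^\star\|p_{k+1})=\eta f_k(s,\pi_\cp)-\log Z_k(s).
\]

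The second step bounds the log-partition function with Hoeffding's lemma. Under $a\sim p_k$, the variable $f_k(s,a)$ lies in $[0,V_\tmax]$, so
\[
\log Z_k(s)\le \eta f_k(s,\pi_k)+\eta^2V_\tmax^2/8.
\]
Substituting gives
\[
\eta\bigl(f_k(s,\pi_\cp)-f_k(s,\pi_k)\bigr)\le \KL(p^\star\|p_k)-\KL(p^\star\|p_{k+1})+\eta^2V_\tmax^2/8.
\]

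The third step sums over $k=1,\dots,K$. The KL terms telescope, and dropping the nonnegative term $\KL(p^\star\|p_{K+1})$ leaves $\KL(p^\star\|p_1)+K\eta^2V_\tmax^2/8$ on the right. We then take the expectation over $s\sim d_\cS^{\pi_\cp}$ and divide by $\eta K$, which yields
\[
\frac{\Reg_K}{K}\le \frac{\KL(\pi_\cp\|\pi_1)}{\eta K}+\frac{\eta V_\tmax^2}{8}.
\]
Exchanging sum and expectation is fine since everything is bounded, except the initial KL, which is finite by assumption.

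Finally, the choice $\eta=\sqrt{8\KL(\pi_\cp\|\pi_1)/(KV_\tmax^2)}$ balances the two terms. Each becomes $V_\tmax\sqrt{\KL(\pi_\cp\|\pi_1)/(8K)}$, and their sum is $V_\tmax\sqrt{\KL(\pi_\cp\|\pi_1)/(2K)}$.

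I expect the main obstacle to be the measure-theoretic bookkeeping, not the inequalities. Specifically, one must ensure that $\KL(p^\star\|p_k)$ is finite for every $k$ and every state in the support of $d_\cS^{\pi_\cp}$, so that the telescoping identity involves no $\infty-\infty$. Finiteness of the expected KL only gives per-state finiteness $d_\cS^{\pi_\cp}$-a.e., which suffices because states are discrete. The bounded density ratio $p_{k+1}/p_k\in[e^{-\eta V_\tmax},e^{\eta V_\tmax}]$ then propagates finiteness from $k=1$ to all later $k$.
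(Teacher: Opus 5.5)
Your proof is correct. It is organized differently from the paper's, though the underlying computation is the same.

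You use the mirror-descent one-step identity $\KL(p^\star\|p_k)-\KL(p^\star\|p_{k+1})=\eta f_k(s,\pi_\cp)-\log Z_k(s)$ and telescope the KL to the comparator round by round. The paper instead runs a potential (FTRL-style) argument:
\begin{itemize}
\item it telescopes the negative log-partition $\xi(F_k(s,\cdot))=-\frac{1}{\eta}\log\int_\cA\pi_1(a\mid s)\exp(\eta F_k(s,a))\,\nu(\mathrm{d}a)$ of the cumulative critic $F_k=\sum_{j\le k}f_j$;
\item it applies Hoeffding's lemma to each increment;
\item it brings in the comparator only once, at the end, via the Gibbs variational principle $\xi(F_K(s,\cdot))\le\frac{1}{\eta}\KL(\pi_\cp(\cdot\mid s)\|\pi_1(\cdot\mid s))-F_K(s,\pi_\cp)$.
\end{itemize}

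The two arguments match step for step. Since $\log Z_k(s)=-\eta\bigl(\xi(F_k(s,\cdot))-\xi(F_{k-1}(s,\cdot))\bigr)$, your Hoeffding steps are exactly the paper's. Since $p_{K+1}\propto p_1\exp(\eta F_K(s,\cdot))$, one has $\KL(p^\star\|p_{K+1})=\KL(p^\star\|p_1)-\eta F_K(s,\pi_\cp)-\eta\,\xi(F_K(s,\cdot))$. So discarding $\KL(p^\star\|p_{K+1})\ge 0$ is precisely the variational inequality evaluated at $u=p^\star$.

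Your version has two advantages:
\begin{itemize}
\item It is more self-contained: nonnegativity of KL replaces the Gibbs principle for a general base measure.
\item It makes explicit measure-theoretic points the paper leaves implicit, namely per-state finiteness on the support of $d_\cS^{\pi_\cp}$ and propagation of finiteness through the bounded ratio $p_{k+1}/p_k\in[e^{-\eta V_\tmax},e^{\eta V_\tmax}]$.
\end{itemize}

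The paper's version has two advantages of its own:
\begin{itemize}
\item It never needs $\KL(\pi_\cp\|\pi_k)$ for $k>1$.
\item It reduces the per-state regret to $F_K(s,\pi_\cp)+\xi(F_K(s,\cdot))$ plus the Hoeffding term, a quantity that can be bounded without any finite KL. The paper exploits this in Theorem~\ref{thm:pspi-unified}, lower-bounding the partition integral on a shrunken set around $a^*$ to handle deterministic comparators. In your framework the same result would follow by comparing against a smoothed comparator, e.g.\ the uniform distribution on that set.
\end{itemize}
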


We now give a potential-based proof that utilizes classical online learning tools in the analysis of expert problem \citep{hazan2016introduction, orabona2019modern}.

\begin{proof}[Proof of PSPI regret bound]
Let $\xi:L^\infty(\cA)\to\RR$ be the negative potential function defined as
\[
\xi(f)=-\frac{1}{\eta}\log\int_\cA \pi_1(a\mid s)\exp(\eta f(a))\nu(\mathrm{d}a).  
\]
Let $F_k(s,a)=\sum_{j=1}^k f_j(s,a)$ denote the cumulative value function (at each fixed state $s$) up to round $k$. 
We can first bound the difference of the negative potential function as
\begin{align*}
\xi(F_{k+1}(s,\cdot))-\xi(F_k(s,\cdot))
&=-\frac{1}{\eta}\log\frac{\int_\cA  \pi_1(a\mid s)\exp(\eta\sum_{j=1}^{k+1}f_k(s,a))\nu(\mathrm{d}a)}{\int_\cA  \pi_1(a\mid s)\exp(\eta\sum_{j=1}^{k}f_k(s,a))\nu(\mathrm{d}a)}\\
&=-\frac{1}{\eta}\log\int_\cA  \pi_{k+1}(a\mid s)\exp(\eta f_{k+1}(s,a))\nu(\mathrm{d}a)\\
&=-\frac{1}{\eta}\log\mathbb{E}_{a\sim \pi_{k+1}(\cdot\mid s)}[\exp(\eta f_{k+1}(s,a))]\\
&\geq-\mathbb{E}_{a\sim \pi_{k+1}(\cdot\mid s)}[f_{k+1}(s,a)]-\frac{\eta V_\tmax^2}{8},
\end{align*}
where the last step we use the Hoeffding's lemma with random variable $f_{k+1}(s,a)$ (the randomness comes from $a\sim \pi_{k+1}(\cdot\mid s)$) and the range of $f_{k+1}(s,a)$ is $[0,V_\tmax]$. Therefore, by telescoping, we have
\[
\sum_{k=1}^K\left(\xi(F_{k}(s,\cdot))-\xi(F_{k-1}(s,\cdot))\right)\geq-\sum_{k=1}^K\mathbb{E}_{a\sim \pi_k(\cdot\mid s)}[f_k(s,a)]-\frac{\eta V_\tmax^2}{8}K.
\]
Rearranging the terms we have
\[
-\sum_{k=1}^K\mathbb{E}_{a\sim \pi_k(\cdot\mid s)}[f_k(s,a)]\leq\frac{\eta V_\tmax^2}{8}K+\xi(F_K(s,\cdot)),
\]
since $\xi(F_0(s,\cdot))=\xi(0)=-\frac{1}{\eta}\log\int_\cA \pi_1(a\mid s)\nu(\mathrm{d}a)=-\frac{1}{\eta}\log 1=0$ (by setting $\pi_0\equiv \pi_1$ as the initial policy). By adding $F_K(s,\pi_\cp)=\sum_{k=1}^Kf_k(s,\pi_\cp)$ at both sides, we can therefore obtain the regret bound
\[
\sum_{k=1}^K\left(f_k(s,\pi_\cp)-f_k(s,\pi_k)\right)\leq F_K(s,\pi_\cp)+\frac{\eta V_\tmax^2}{8}K+\xi(F_K(s,\cdot)).
\]
This means we only need to bound the negative log-partition function at round $K$, i.e., $\xi(F_k(s,\cdot))$. This follows directly via the Gibbs variational principle (Lemma~\ref{lemma:gibbs}) with base measure $\nu$:
\[
\xi(F_K(s,\cdot)) = \inf_{u\in\Delta_\nu(\cA)}\Big\{\frac{1}{\eta}\KL(u\|\pi_1(\cdot\mid s)) - \EE_{a\sim u}[F_K(s,a)]\Big\}.
\]
Taking $u=\pi_\cp(\cdot\mid s)$ and using the fact that $\EE_{a\sim \pi_\cp(\cdot\mid s)}[F_K(s,a)]=\sum_{k=1}^K \EE_{a\sim \pi_\cp(\cdot\mid s)}[f_k(s,a)]$, we obtain
\[
\sum_{k=1}^K\left(f_k(s,\pi_\cp)-f_k(s,\pi_k)\right)\le \frac{\eta V_\tmax^2}{8}K + \frac{1}{\eta}\KL(\pi_\cp(\cdot\mid s)\|\pi_1(\cdot\mid s))]).
\]
By taking the outer expectation over the designated state distribution $s\sim d_\cS^{\pi_\cp}$ and tuning the step size as $\eta=\sqrt{(8\KL(\pi_\cp\|\pi_1))/(KV_\tmax^2)}$, we achieve the regret bound in Theorem~\ref{thm:pspi}.
\end{proof}

\subsection{Examples of Bounded KL Term in Theorem~\ref{thm:continuous-pspi}}\label{app:continuous-ex}

We deliberately keep the KL term in Theorem~\ref{thm:pspi} and Theorem~\ref{thm:continuous-pspi}, rather than replacing it by a crude worse-case upper bound. In the discrete setting, one often upper-bounds this term by $\log|\cA|$, but such a reduction is neither natural nor informative in general continuous action spaces. More importantly, the KL term itself admits elegant and interpretable instantiations for many standard continuous policy classes. Below we provide two examples to further illustrate this point.

\paragraph{Gaussian policies.}
We first consider Gaussian policies, under which the KL term $\KL(\pi_\cp\|\pi_1)$ admits explicit closed-form expressions and can be further bounded under mild regularity conditions. For the sake of simplicity, we focus on the isotropic Gaussian case, which is widely used in continuous-control reinforcement learning due to its simplicity, numerical stability, and compatibility with policy-gradient methods. In particular, isotropic Gaussian policies arise naturally when actions are modeled as affine functions of the state with additive exploration noise, and they are commonly adopted in practical algorithms (e.g., actor-critic methods with fixed variance) \citep{haarnoja2018soft}. Concretely, we assume that for each state $s$,
\[
\pi_i(\cdot\mid s)=\mathcal N(\mu_i(s),\sigma^2 I),\qquad i\in\{1,\cp\}.
\]
Then, for any fixed $s$, the KL divergence reduces to a quadratic form in the mean difference:
\begin{align*}
\KL\bigl(&\pi_\cp(\cdot\mid s)\|\pi_1(\cdot\mid s)\bigr)
= \EE_{a\sim \pi_\cp(\cdot\mid s)}\left[\log\frac{\pi_\cp(a\mid s)}{\pi_1(a\mid s)}\right] \\
&= \EE_{a}\left[-\frac{1}{2}(a-\mu_\cp(s))^\top(\sigma^2 I)^{-1}(a-\mu_\cp(s))
+\frac12(a-\mu_1(s))^\top(\sigma^2 I)^{-1}(a-\mu_1(s))\right] \\
&= \frac{1}{2\sigma^2}\EE_{a}\left[\|a-\mu_1(s)\|_2^2-\|a-\mu_\cp(s)\|_2^2\right] \\
&= \frac{1}{2\sigma^2}\|\mu_\cp(s)-\mu_1(s)\|_2^2.
\end{align*}
Therefore,
\[
\KL(\pi_\cp\|\pi_1)
=\EE_{s\sim d^{\pi_\cp}}\left[\KL\left(\pi_\cp(\cdot\mid s)\middle\|\pi_1(\cdot\mid s)\right)\right]
=\frac{1}{2\sigma^2}\EE_{s\sim d^{\pi_\cp}}\left[\|\mu_\cp(s)-\mu_1(s)\|_2^2\right].
\]
Moreover, if the mean difference is uniformly bounded, i.e., $\|\mu_\cp(s)-\mu_1(s)\|_2\le B$ for all $s$, then
\[
\KL(\pi_\cp\|\pi_1)\le \frac{B^2}{2\sigma^2}.
\]
One important advantage of Gaussian policies is that they admit \textbf{efficient computation}. That is, the update does not involve integrating the whole (continuous) action space, but only updates the sufficient statistics.\footnote{This is not restricted to Gaussian policies. In general, for policies that can be expressed as a distribution in exponential family, we can all update them via sufficient statistics.} To be specific, we can consider the function class $\cF$ as a class of quadratic functions (which is typical in settings like linear quadratic regulators, LQR). Suppose the pessimistic $Q$-function at round $k$ can be written as (with respect to $a$)
\[
f_k(s,a)=-\frac{1}{2}(a-u_k)^\top Q_k (a-u_k) + c_k,
\]
with state-dependent parameters $u_k, Q_k, c_k$ with $Q_k\succ0$. In this case the multiplicative-weight update in Eq.~\eqref{eq:md} preserves the Gaussian family: the density $\pi_k(a\mid s)$ remains a multivariate Gaussian $\cN(\mu_k,\Sigma_k)$ after the update. Its sufficient statistics are the mean $\mu_k$ and covariance $\Sigma_k$, which can be equivalently represented by the precision matrix $\Lambda_k = \Sigma_k^{-1}$ and the natural parameter $h_k=\Sigma_k^{-1}\mu_k$. The Hedge update then reduces to
\[
\Lambda_{k+1} = \Lambda_k + \eta Q_k, \quad h_{k+1} = h_k + \eta Q_k u_k,
\]
so that the updated Gaussian is given by $\cN(\mu_{k+1}, \Sigma_{k+1})$ with $\mu_{k+1} = \Lambda_{k+1}^{-1}h_{k+1}$ and $\Sigma_{k+1}=\Lambda_{k+1}^{-1}$. 

\paragraph{Convex action space.}
As claimed in Section~\ref{sec:pspi}, under additional structural assumptions on the action space (i.e., convexity) and the function class (i.e., Lipschitzness), we can obtain unified bound that does \emph{not} depend on the KL divergence of any specific policies. The key idea is to leverage the convexity of the action space $\cA$, which allows us to construct a sequence of subsets $\cA_k\subset\cA$ that progressively approximate the deterministic action chosen by $\pi_\cp$. Such a structure can also be extended to notions of \emph{uniform-fatness}, where every action is guaranteed to have sufficient volume in its neighborhood (see \citet{krichene2015hedge} for more discussion).

\begin{theorem}[Unified KL Bound with Convex Action Space for Theorem~\ref{thm:continuous-pspi}]\label{thm:pspi-unified}
Suppose the action space $\cA$ has finite measure with respect to the Lebesgue measure $\nu$, i.e., $\nu(\cA)<\infty$. Furthermore, assume that the action space $\cA\subset\RR^{\dd_\cA}$ is convex and compact with diameter $B$ in the $\dd_\cA$-dimensional Euclidean space $\RR^{\dd_\cA}$, and that $f_k(s,\cdot)$ is $L$-Lipschitz for all $k$, then the PSPI update with initial distribution $\pi_1(\cdot\mid s)$ being Lebesgue uniform for all $s\in\cS$, i.e., $\pi_1(a\mid s)\equiv 1/\nu(\cA)$ for all $a\in\cA$, and step size $\eta=\sqrt{(8\dd_\cA\log K)/(KV_\tmax^2)}$ achieves
\[
\frac{\Reg_K}{K}\leq V_\tmax\sqrt{\frac{\dd_\cA\log K}{2K}}+\frac{LB}{K}.
\]
\end{theorem}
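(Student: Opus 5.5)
The plan is to rerun the per-state potential argument from the proof of Theorem~\ref{thm:continuous-pspi}, with one change at the final step. There, the Gibbs variational principle was applied with $u=\pi_\cp(\cdot\mid s)$. Here we instead pick a comparator $u_s$ with finite, controlled KL to the uniform $\pi_1$, and pay for the fact that $u_s\neq\pi_\cp(\cdot\mid s)$ through Lipschitzness of the $f_k$. Fix a state $s$. The steps up to the Gibbs principle give, for every density $u$ on $\cA$,
\[
\sum_{k=1}^K\bigl(f_k(s,\pi_\cp)-f_k(s,\pi_k)\bigr)\le \frac{\eta V_\tmax^2}{8}K+\frac{1}{\eta}\KL(u\|\pi_1(\cdot\mid s))+\sum_{k=1}^K\bigl(f_k(s,\pi_\cp)-\EE_{a\sim u}[f_k(s,a)]\bigr).
\]

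\textbf{Choice of $u$.} Since each $f_k(s,\cdot)$ is linear in the action distribution, it suffices to treat a point mass. Write $\pi_\cp(\cdot\mid s)$ as a mixture of point masses $\delta_{a^*}$, and for each $a^*\in\cA$ set
\[
\cA_\epsilon(a^*)=(1-\epsilon)a^*+\epsilon\cA .
\]
By convexity, $\cA_\epsilon(a^*)\subset\cA$. Its volume is $\nu(\cA_\epsilon)=\epsilon^{\dd_\cA}\nu(\cA)$, and every point of it is within distance $\epsilon B$ of $a^*$.

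Let $u_{a^*}$ be the uniform density on $\cA_\epsilon(a^*)$, and let $u$ be the $\pi_\cp(\cdot\mid s)$-mixture of the $u_{a^*}$. Then:
\begin{itemize}
\item \emph{KL term.} By convexity of KL in its first argument, $\KL(u\|\pi_1)\le \sup_{a^*}\KL(u_{a^*}\|\pi_1)=\log\bigl(\nu(\cA)/\nu(\cA_\epsilon)\bigr)=\dd_\cA\log(1/\epsilon)$.
\item \emph{Approximation term.} By Lipschitzness, $f_k(s,a^*)-\EE_{u_{a^*}}f_k(s,\cdot)\le L\epsilon B$, so the last sum is at most $KL\epsilon B$.
\end{itemize}
The mixture-over-$a^*$ step is where measurability has to be checked. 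I expect it to be the only delicate point; the alternative is to handle it pointwise using joint convexity.

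\textbf{Tuning.} Choose $\epsilon=1/K$, which is valid for $K\ge1$. The per-state bound becomes
\[
\frac{\eta V_\tmax^2K}{8}+\frac{\dd_\cA\log K}{\eta}+LB .
\]
This no longer depends on $s$, so taking the expectation over $d_\cS^{\pi_\cp}$ is immediate. Dividing by $K$ and plugging in $\eta=\sqrt{8\dd_\cA\log K/(KV_\tmax^2)}$ balances the first two terms to $V_\tmax\sqrt{\dd_\cA\log K/(2K)}$. The remaining term is $LB/K$, which is the claimed bound.
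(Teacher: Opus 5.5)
Your proof is correct and follows the same idea as the paper: the potential/Gibbs argument from Theorem~\ref{thm:continuous-pspi}, a copy of $\cA$ shrunk by the factor $1/K$ toward a point (volume ratio $K^{-\dd_\cA}$, which gives the $\dd_\cA\log K/\eta$ term), and Lipschitzness to pay $LB$ in total. The one difference is that the paper never forms your mixture: it first bounds $F_K(s,\pi_\cp)\le F_K(s,a^*)$ with $a^*\in\arg\max_a F_K(s,a)$ (which exists by compactness and continuity), then lower-bounds the partition integral over the single shrunk set around $a^*$, so no measurability question arises. In your version that question is harmless anyway, since $\{(a^*,a): a\in(1-\epsilon)a^*+\epsilon\cA\}$ is closed and hence Borel.
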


Note that the additional $\dd_\cA\log K$ factor in Theorem~\ref{thm:pspi-unified} (compared to regular $\sqrt{\log|\cA|/K}$ regret for Hedge algorithm) can be viewed as the $\log|\cA|$ term in the discrete case: the algorithm is effectively \emph{learning from a finite cover} of $\cA$. For a $\dd_\cA$-dimensional set $S$, its log-covering number (see Definition~\ref{def:covering-number} for formal definition) scales as $\log\cN(S,\epsilon)\asymp \dd_\cA\log(1/\epsilon)$; choosing $\epsilon=1/\sqrt{K}$ (which is the rate of the no-regret algorithm) yields the $\dd_\cA\log K$ term. This is why the dimension $\dd_\cA$ can be viewed as some \textup{effective dimension} of $\cA$, as actions in $\cA$ can be embedded as the $\dd_\cA$-dimensional vector in $\RR^{\dd_\cA}$.

\begin{proof}[Proof of Theorem~\ref{thm:pspi-unified}]
Fix an arbitrary state $s$. Let $F_K(s,a)=\sum_{k=1}^K f_k(s,a)$ denote the cumulative value function. Since each $f_k(s,\cdot)$ is Lipschitz and $\cA$ is compact, $F_K(s,\cdot)$ attains its maximum on $\cA$. Let $a^*\in\arg\max_{a\in\cA}F_K(s,a)$. Then, for any comparator policy $\pi_\cp(\cdot\mid s)$, possibly deterministic, we have
\begin{align*}
\sum_{k=1}^K\left(f_k(s,\pi_\cp)-f_k(s,\pi_k)\right)
&=F_K(s,\pi_\cp)-\sum_{k=1}^Kf_k(s,\pi_k)\\
&\le F_K(s,a^*)-\sum_{k=1}^Kf_k(s,\pi_k),
\end{align*}
where the inequality follows because the expectation of $F_K(s,\cdot)$ under any probability measure is upper bounded by its maximum over $\cA$.

The convexity of the action space makes it possible to construct a sequence of action subsets $\cA_k\subset \cA$ such that
\[
\cA_k=\left\{a^*+d_k(a-a^*):a\in\cA\right\},
\]
where $d_k> 0$ is called the \emph{decay rate} (will be specified later). This is because the diameter of $\cA_k$ will decay according to $d_k$, i.e., $\sup_{a,a'\in \cA_k}\|a-a'\|=d_k\sup_{a,a'\in\cA }\|a-a'\|=d_k B$, where $B$ is the diameter of the action space $\cA $. 

Notice that all of the $\cA_k$ satisfy \emph{realizability}, i.e., $a^*\in \cA_k$ for all $k\in[K]$. Specifically we consider $\cA_k$, the action subset at the last round. By Lipschitzness of $f_k(s,\cdot)$, for all $a\in \cA_k$, we have 
\[
|f_k(s,a)-f_k(s,a^*)|\leq L\|a^*-a\|=d_KLB.
\]
This means $f_k(s,a)\geq f_k(s,a^*)-d_KLB$. Summing over $k\in[K]$, we have the cumulative bound
\[
F_K(s,a)=\sum_{k=1}^Kf_k(s,a)\geq\sum_{k=1}^Kf_k(s,a^*)-Kd_KLB=F_K(s,a^*)-Kd_KLB.
\]
Recall that in the proof of Theorem~\ref{thm:pspi}, we need to control the last-iterate potential function $\xi(F_K(s,\cdot))$:
\begin{align*}
\xi(F_K^s)&=-\frac{1}{\eta}\log\int_\cA \pi_1(a\mid s)\exp(\eta F_K^s(a))\nu(\mathrm{d}a)\\
&\leq -\frac{1}{\eta}\log\int_{\cA_k}\pi_1(a\mid s)\exp(\eta F_K(s,a))\nu(\mathrm{d}a)\\
&\leq-\frac{1}{\eta}\log\int_{\cA_k}\pi_1(a\mid s)\exp(\eta (F_K(s,a^*)-Kd_KLB))\nu(\mathrm{d}a)\\
&=\left(Kd_KLB-F_K(s,a^*)\right)-\frac{1}{\eta}\log\int_{\cA_k}\pi_1(a\mid s)\nu(\mathrm{d}a).
\end{align*}
Therefore, we can get the regret
\[
\Reg_K^s\leq F_K(s,a^*)+\frac{\eta V_\tmax^2}{8}K+\xi(F_K(s,\cdot))\leq \frac{\eta V_\tmax^2}{8}K+Kd_KLB-\frac{1}{\eta}\log\int_{\cA_k}\pi_1(a\mid s)\nu(\mathrm{d}a).
\]
We only need to control the last term, which is actually the \emph{generalized volume} of $\cA_k$. For a set $\cA_0\subset\cA $, its generalized volume with respect to distribution $\pi_1(\cdot\mid s)\in\Delta_\nu(\cA )$ is defined as
\[
\mathcal{V}_{\pi_1(\cdot\mid s)}(\cA_0)=\int_{\cA_0}\pi_1(a\mid s)\nu(\mathrm{d}a)=\frac{1}{\nu(\cA )}\int_{\cA_k}\nu(\mathrm{d}a)=\frac{\nu(\cA_k)}{\nu(\cA )}
\]
where the last equality is due to $\pi_1(a\mid s)=1/\nu(\cA)$ is the Lebesgue uniform measure on $\cA$. According to our construction of $\cA_k$, using change of variables,
\begin{align*}
\nu(\cA_k)&=\int_{\cA_k}\nu(\mathrm{d}a)=\int_\cA \mathrm{d}\nu(a^*+d_K(a-a^*))=\int_\cA |\det(d_KI_{\dd_\cA})|\nu(\mathrm{d}a)\\&=(d_K)^{\dd_\cA}\int _A\nu(\mathrm{d}a)=(d_K)^{\dd_\cA}\nu(\cA ).
\end{align*}
This means the generalized volume of $\cA_k$ is $(d_K)^{\dd_\cA}$. Hence, by setting the decay rate $d_K=1/K$, we have
\[
\sum_{k=1}^K\left(f_k(s,\pi_\cp)-f_k(s,\pi_k)\right)\leq\frac{\eta V_\tmax^2}{8}K+LB+\frac{\dd_\cA}{\eta}\log K.
\]
Taking the expectation over the designated state distribution $s\sim d_\cS^{\pi_\cp}$ with optimally tuned step size $\eta$ to be $\eta=\sqrt{8\dd_\cA\log K/(KV_\tmax^2)}$. This leads to the argument in Theorem~\ref{thm:pspi-unified}.
\end{proof}

\section{Omitted Details for Section~\ref{sec:challenge}}\label{app:challenge}

\subsection{Proof of Proposition~\ref{thm:hardness}}
\label{app:hardness}
We construct a two-state contextual bandit (i.e., a one-step MDP with $\gamma=0$), together with a smooth log-linear policy class and a data-weighted contextual mirror descent update, such that the resulting iterates incur constant per-step comparator regret under $d^{\pi_\cp}$.

\begin{proof}[Proof of Proposition~\ref{thm:hardness}]
We construct the MDP first. For simplicity, we consider the following contextual bandit problem. Let $\cS=\{s_1,s_2\}$ and $\cA=\{0,1\}$. Since $\gamma=0$, the return equals the immediate reward.
Define the reward function
\[
R(s,1)=1,\qquad R(s,0)=0,\qquad \forall s\in\cS.
\]
Then for any policy $\pi$, the $Q$-function equals the reward, and we set the oracle outputs to be exact:
\[
f_k(s,a)\equiv Q^{\pi_k}(s,a)=R(s,a)\in[0,1],\qquad \forall k\ge 1.
\]
Consequently,
\[
f_k(s,\pi)=\EE_{a\sim\pi(\cdot\mid s)}[f_k(s,a)]=\pi(1\mid s),\qquad \forall s\in\cS,\ \forall k\ge 1.
\]
Let the comparator state distribution be concentrated on $s_2$:
\[
d^{\pi_\cp}(s_1)=0,\qquad d^{\pi_\cp}(s_2)=1.
\]
Let the offline data state-marginal distribution be
\[
d^D(s_1)=1-\varepsilon,\qquad d^D(s_2)=\varepsilon,
\]
for some $\varepsilon\in(0,\tfrac12)$. Then the density coverage condition holds with constant
\[
\Bigl\|\frac{d^{\pi_\cp}}{d^D}\Bigr\|_\infty=\frac{1}{\varepsilon}<\infty,
\]
so coverage condition (Assumption~\ref{ass:data}) is satisfied by setting $\varepsilon=1/\CC$.

We consider the one-dimensional log-linear (a.k.a., linear softmax) policy class $\Pi_\theta=\{\pi_\theta:\theta\in\RR\}$ defined by
\[
\pi_\theta(1\mid s)=\frac{\exp(\theta x(s))}{\exp(\theta x(s))+\exp(0)}=\sigma(\theta x(s)),\qquad\pi_\theta(0\mid s)=1-\pi_\theta(1\mid s),
\]
where $x(s_1)=+1$, $x(s_2)=-1$, and $\sigma(u)=\frac{1}{1+e^{-u}}$ is the logistic function.
Note that this parameterization induces the coupling identity
\[
\pi_\theta(1\mid s_2)=\sigma(-\theta)=1-\sigma(\theta)=1-\pi_\theta(1\mid s_1).
\]
Moreover, for any $(s,a)$, $\log\pi_\theta(a\mid s)$ is differentiable with
\[
\bigl|\partial_\theta \log \pi_\theta(a\mid s)\bigr|\le 1,\qquad\bigl|\partial_\theta^2 \log \pi_\theta(a\mid s)\bigr|\le \frac14,
\]
so $\Pi_\theta$ satisfies Assumption~\ref{ass:policy} (e.g., with $\|\cdot\|=\|\cdot\|_2$, $G=1$, and $\beta=1/4$).

Initialize $\pi_1$ to be uniform over $\cA$, i.e., $\pi_1(1\mid s)=\tfrac12$ for both states, which corresponds to $\theta_1=0$.
At each round $k$, consider the data-weighted contextual mirror descent update obtained by replacing $d^{\pi_\cp}$ with $d^D$
\emph{both} in the linear term and in the context-weighted KL regularizer:
\begin{equation}\label{eq:cmd-data-weighted-app}
\pi_{k+1}\in\arg\max_{\pi\in\Pi_\theta}\left\{\EE_{s\sim d^D}\left[f_k(s,\pi)\right]-\frac{1}{\eta}\EE_{s\sim d^D}\left[\KL\left(\pi(\cdot\mid s)\|\pi_k(\cdot\mid s)\right)\right]\right\}.
\end{equation}

Now we are going to give the regret lower bound. Let
\[
p(\theta):= \pi_\theta(1\mid s_1)=\sigma(\theta),\qquad\pi_\theta(1\mid s_2)=1-p(\theta).
\]
Denote $p_k:=p(\theta_k)$. Because $\KL(\Ber(1-p)\|\Ber(1-q))=\KL(\Ber(p)\|\Ber(q))$, the objective in Eq.~\eqref{eq:cmd-data-weighted-app} can be written as a concave function of $p\in(0,1)$:
\[
\EE_{s\sim d^D}[f_k(s,\pi_\theta)]
-\frac{1}{\eta}\EE_{s\sim d^D}\left[\KL\left(\pi_\theta(\cdot\mid s)\|\pi_{\theta_k}(\cdot\mid s)\right)\right]=\varepsilon+(1-2\varepsilon)p-\frac{1}{\eta}\KL\left(\Ber(p)\|\Ber(p_k)\right),
\]
where $\KL(\Ber(p)\|\Ber(q))=p\log\frac{p}{q}+(1-p)\log\frac{1-p}{1-q}$.
Since $-\KL(\Ber(p)\|\Ber(p_k))$ is concave in $p$ and the first term is linear, the maximizer is characterized by the
first-order condition:
\[
0=(1-2\varepsilon)-\frac{1}{\eta}\left(\log\frac{p}{p_k}-\log\frac{1-p}{1-p_k}\right)\quad
\Longleftrightarrow
\quad\log\frac{p}{1-p}=\log\frac{p_k}{1-p_k}+\eta(1-2\varepsilon).
\]
Therefore, writing $\mathrm{logit}(p)=\log\frac{p}{1-p}$, the update satisfies
\begin{equation}\label{eq:logit-update}
\mathrm{logit}(p_{k+1})=\mathrm{logit}(p_k)+\eta(1-2\varepsilon),
\qquad\text{hence}\qquad
p_{k+1}=\sigma\left(\mathrm{logit}(p_k)+\eta(1-2\varepsilon)\right).
\end{equation}
Since $\varepsilon\in(0,\tfrac12)$ implies $1-2\varepsilon>0$ and $p_1=1/2$, Eq.~\eqref{eq:logit-update} yields
\[
p_k\ge \frac12\ \ \forall k\ge 1,\qquad\text{and}\qquad p_k>\frac12\ \ \forall k\ge 2.
\]

Let the comparator policy $\pi_\cp\in\Pi_\theta$ be $\theta=-\log 3$. This means $\pi_\cp(1\mid s_1)=1/4$ and $\pi_\cp(1\mid s_2)=3/4$. Since the comparator occupancy puts all its mass to the state $s_2$, we have $\EE_{s\sim d^{\pi_\cp}}[f_k(s,\pi_\cp)]=f_k(s_2,\pi_\cp)=\pi_\cp(1\mid s_2)=3/4$. While on the other hand, $\EE_{s\sim d^{\pi_\cp}}[f_k(s,\pi_k)]=f_k(s_2,\pi_k)=\pi_k(1\mid s_2)=1-p_k$. This indicates that
\[
\EE_{s\sim d^{\pi_\cp}}\bigl[f_k(s,\pi_\cp)-f_k(s,\pi_k)\bigr]=\frac{3}{4}-(1-p_k)=p_k-\frac{1}{4}\ge \frac14.
\]
Thus the per-step regret is bounded below by $1/4$ for all $k\ge 2$, and consequently
\[
\frac{\Reg_K}{K}=\frac{1}{K}\sum_{k=1}^K \EE_{s\sim d^{\pi_\cp}}\bigl[f_k(s,\pi_\cp)-f_k(s,\pi_k)\bigr]\ge \Omega(1),
\]
which proves the proposition.
\end{proof}

As a preview, the simple MDP (contextual bandit) constructed in Proposition~\ref{thm:hardness} is \emph{exactly compatible} from the perspective of actor-critic incompatibility, formalized in Proposition~\ref{thm:hardness-nobias}. This shows that the failure of directly applying contextual mirror descent in Eq.~\eqref{eq:md} arises from a more fundamental source (i.e., distribution shift) rather than from actor-critic incompatibility induced by function approximation in the policy or value class. Consequently, this phenomenon is fundamentally different from the bias term introduced in Section~\ref{sec:algorithm}.

\begin{proposition}[No actor-critic incompatibility in the hardness construction]\label{thm:hardness-nobias}
Consider the construction in the proof of Proposition~\ref{thm:hardness}.  
Let $A_k(s,a)$ denote the advantage function of $\pi_k$ and define the error of CFA
\[
\err_k=\EE_{(s,a)\sim d^{\pi_\cp}}\left[A_k(s,a)-v^\top\nabla_\theta\log\pi_k(a\mid s)\right],
\]
where $\theta\in\RR$ is the (one-dimensional) parameter of the log-linear policy class $\Pi_\theta$ used in the construction.
Then there exists a fixed vector $v\in\RR$ (in fact $v=-1$) such that $\err_k=0$ for all $k\ge 1$.
In particular, the constructed instance has \emph{no} model misspecification in the sense of compatible function approximation.
\end{proposition}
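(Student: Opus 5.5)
The plan is a direct computation. In the construction of Proposition~\ref{thm:hardness}, both the advantage $A_k$ and the score $\nabla_\theta\log\pi_k$ have closed forms. I will show that at the state $s_2$ they coincide up to the sign $x(s_2)=-1$, action by action. Since $d^{\pi_\cp}$ is a point mass on $s_2$, the CFA error then vanishes identically with $v=-1$.

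\emph{Step 1: the advantage.} Recall $f_k(s,a)=R(s,a)=\mathbb{1}\{a=1\}$, so $f_k(s,\pi_k)=\pi_k(1\mid s)$. Hence $A_k(s,1)=1-\pi_k(1\mid s)=\pi_k(0\mid s)$ and $A_k(s,0)=-\pi_k(1\mid s)$.

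\emph{Step 2: the score.} Using $\pi_\theta(1\mid s)=\sigma(\theta x(s))$ and $\sigma'=\sigma(1-\sigma)$, I get
\[
\partial_\theta\log\pi_\theta(1\mid s)=x(s)\,\pi_\theta(0\mid s),\qquad \partial_\theta\log\pi_\theta(0\mid s)=-x(s)\,\pi_\theta(1\mid s).
\]
Comparing with Step 1 gives the pointwise identity
\[
A_k(s,a)=x(s)\,\partial_\theta\log\pi_k(a\mid s)\quad\text{for all }s,a,k .
\]

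\emph{Step 3: conclusion.} Since $d^{\pi_\cp}(s_2)=1$ and $x(s_2)=-1$, the choice $v=-1$ makes the integrand $A_k(s_2,a)-v\,\partial_\theta\log\pi_k(a\mid s_2)$ equal to zero for both actions. Therefore $\err_k=0$ for every $k\ge 1$, regardless of $\pi_\cp(\cdot\mid s_2)$.

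I would also remark that $|v|=1$ is bounded, so this $v$ is admissible under any reasonable norm constraint. The same identity at $s_1$ would instead require $v=+1$. This is precisely the contextual coupling: the data-weighted update follows $s_1$, while the comparator lives at $s_2$, even though the compatible representation exists on the comparator's support.

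There is no real obstacle here. The only care needed is the sign bookkeeping for $x(s)$ and confirming that $d^{\pi_\cp}$ puts no mass on $s_1$, so that a single scalar $v$ suffices.
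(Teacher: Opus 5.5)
Your proposal is correct and follows the paper's route: compute $A_k(s,a)=\mathbf{1}\{a=1\}-\pi_k(1\mid s)$ and the score $x(s)\bigl(\mathbf{1}\{a=1\}-\pi_k(1\mid s)\bigr)$, note that they agree up to the sign $x(s_2)=-1$ at the only state in the support of $d^{\pi_\cp}$, and take $v=-1$. Your general identity $A_k(s,a)=x(s)\,\partial_\theta\log\pi_k(a\mid s)$, written action by action, is just a more explicit form of the paper's computation, which it specializes directly to $s_2$.
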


\begin{proof}
In the constructed contextual bandit, the reward is $R(s,1)=1$ and $R(s,0)=0$ for all $s$ and $\gamma=0$.
Hence for any policy $\pi_k$ we have $Q^{\pi_k}(s,a)=R(s,a)$ and $V^{\pi_k}(s)=\EE_{a\sim\pi_k(\cdot\mid s)}[R(s,a)]=\pi_k(1\mid s)$. Therefore the advantage satisfies, for any $s\in\cS$ and $a\in\cA$,
\[
A_k(s,a)=Q^{\pi_k}(s,a)-V^{\pi_k}(s)=R(s,a)-\pi_k(1\mid s)=\mathbf{1}\{a=1\}-\pi_k(1\mid s).
\]

Next, consider the log-linear policy class in the construction. It is easy to show that the score function satisfies
\[
\nabla_\theta\log\pi_\theta(a\mid s)=x(s)\big(\mathbf{I}\{a=1\}-\pi_\theta(1\mid s)\big).
\]
Specializing to $s_2$ (the only state in the support of $d^{\pi_\cp}$), we have $x(s_2)=-1$ and thus
\[
\nabla_\theta\log\pi_k(a\mid s_2)=-\big(\mathbf{1}\{a=1\}-\pi_k(1\mid s_2)\big)=-A_k(s_2,a).
\]
Choosing the fixed scalar $v=-1$ yields the pointwise identity $v\nabla_\theta\log\pi_k(a\mid s_2)=A_k(s_2,a),\qquad \forall a\in\cA,\ \forall k\ge 1$. Since $d^{\pi_\cp}$ is concentrated on $s_2$, it follows that for all $k\ge 1$,
\begin{align*}
\err_k&=\EE_{(s,a)\sim d^{\pi_\cp}}\left[A_k(s,a)-v\nabla_\theta\log\pi_k(a\mid s)\right]\\
&=\EE_{a\sim\pi_k(\cdot\mid s_2)}\left[A_k(s_2,a)-v\nabla_\theta\log\pi_k(a\mid s_2)\right]\\
&=0.
\end{align*}
This proves the claim.
\end{proof}

\subsection{Proof of Lemma~\ref{lemma:NPG-regret}}\label{app:challenge-decomposition}
\begin{proof}[Proof of Lemma~\ref{lemma:NPG-regret}]
With Assumption~\ref{ass:policy}, $\log\pi_\theta(a\mid s)$ is $\beta$-smooth in $\theta$ with respect to some norm $\|\cdot\|$. 
That is, for all $(s,a)\in\cS\times\cA$, we can bound the first-order approximation error between arbitrary $\theta,\theta'\in\RR^\dd$ as
\[
-\frac{\beta}{2}\|\theta'-\theta\|^2
\le
\log\pi_{\theta'}(a\mid s)-\log\pi_\theta(a\mid s)
-\langle\nabla\log\pi_\theta(a\mid s),\theta'-\theta\rangle
\le
\frac{\beta}{2}\|\theta'-\theta\|^2.
\]
Applying the update rule $\theta_{k+1}=\theta_k+\eta v_k$ and substituting $\theta'=\theta_{k+1}$, $\theta=\theta_k$, we obtain (by taking the negative side):
\begin{align*}
\log\frac{\pi_{k+1}(a\mid s)}{\pi_k(a\mid s)}&\geq\left(\nabla_\theta\log\pi_k(a\mid s)\right)^\top (\theta_{k+1}-\theta_k)-\frac{\beta}{2}\|\theta_{k+1}-\theta_k\|^2\\
&\geq\eta\left(\nabla_\theta\log\pi_k(a\mid s)\right)^\top v_k-\frac{\beta}{2}\eta^2\|v_k\|^2.
\end{align*}
Recall the definition of $\err_k=\EE_{(s,a)\sim d^{\pi_\cp}}\left[A_k(s,a)-\left(\nabla_\theta\log\pi_k(a\mid s)\right)^\top v_k\right]$, we have
\[
\EE_{(s,a)\sim d^{\pi_\cp}}\left[\left(\nabla_\theta\log\pi_k(a\mid s)\right)^\top v_k\right]=\EE_{(s,a)\sim d^{\pi_\cp}}\left[A_k(s,a)\right]-\err_k.
\]
Hence, by the definition of KL divergence $\KL(p\|q)=\EE_{p}[\log(p/q)]$, we have
\begin{align*}
&\EE_{s\sim d^{\pi_\cp}}\left[\KL(\pi_\cp(\cdot\mid s)\|\pi_k(\cdot\mid s))-\KL(\pi_\cp(\cdot\mid s)\|\pi_{k+1}(\cdot\mid s))\right]\\
&=\EE_{s\sim d^{\pi_\cp}}\left[\EE_{a\sim\pi_\cp(\cdot\mid s)}\left[\log\frac{\pi_{k+1}(a\mid s)}{\pi_k(a\mid s)}\right]\right]\\
&\geq\EE_{(s,a)\sim d^{\pi_\cp}}\left[\eta\left(\nabla_\theta\log\pi_k(a\mid s)\right)^\top v_k-\frac{\beta}{2}\eta^2\|v_k\|^2\right]\\
&\geq\eta\EE_{(s,a)\sim d^{\pi_\cp}}\left[A_k(s,a)\right]-\eta\cdot \err_k-\frac{\beta}{2}\eta^2\|v_k\|^2,
\end{align*}
where the first inequality is due to the previous display of the smoothness, and the second inequality is due to the rewrite of $\err_k$. Since $A_k(s,a)=f_k(s,a)-f_k(s,\pi_k)$, rearranging the terms, we get
\begin{align*}
\EE_{s\sim d^{\pi_\cp}}\bigl[f_k(s,\pi_\cp)-&f_k(s,\pi_k)\bigr]=\EE_{(s,a)\sim d^{\pi_\cp}}[A_k(s,a)]\leq\err_k+\frac{\beta}{2}\eta \|v_k\|^2\\
&+\frac{1}{\eta}\EE_{s\sim d^{\pi_\cp}}\left[\KL(\pi_\cp(\cdot\mid s)\|\pi_k(\cdot\mid s))-\KL(\pi_\cp(\cdot\mid s)\|\pi_{k+1}(\cdot\mid s))\right].
\end{align*}
Telescoping the last term of KL divergence, we get
\[
\Reg_K\leq\sum_{k=1}^K\err_k+\eta\cdot \frac{\beta}{2}\sum_{k=1}^K\|v_k\|^2+\frac{1}{\eta}\cdot \EE_{s\sim d^{\pi_\cp}}\left[\KL(\pi_\cp(\cdot\mid s)\|\pi_1(\cdot\mid s))\right].
\]
Since the update direction $v_k$ satisfies $\|v_k\|\le B_L$ for all $k\in[K]$, choosing the step size optimally as $\eta=\sqrt{\frac{2\KL(\pi_\cp\|\pi_1)}{\beta K B_L^2}}$,
yields the regret bound
\[
\frac{\Reg_K}{K}\le \frac{1}{K}\sum_{k=1}^K\err_k
+B_L\sqrt{\frac{2\beta\cdot\KL(\pi_\cp\|\pi_1)}{K}},
\]
where $\KL(\pi_\cp\|\pi_1)
=\EE_{s\sim d^{\pi_\cp}}[\KL(\pi_\cp(\cdot\mid s)\|\pi_1(\cdot\mid s))]$ is the expected KL divergence under $d^{\pi_\cp}$.
\end{proof}

\section{Omitted Details for Section~\ref{sec:npg}}\label{app:npg}

In this section, we establish the performance guarantee of Algorithm~\ref{alg:po} along with LSPU update. We will decompose the error of compatible function approximation (CFA) into the bias term $\epsilon_\bias$ and the estimation error term $\epsilon_\stat$. And then we utilize results from \citet{hsu2012random} to give a sharp analysis on linear regression, which controls the estimation error $\epsilon_\stat$. Finally, we will give a more rigorous analysis that is based on log-covering number (i.e., metric entropy) rather than log-cardinality of the function or policy class.

\subsection{Two Cases with Zero Actor-Critic Incompatibility}\label{app:nobias}
Taking a detour, we first show that the intrinsic bias term $\epsilon_\bias$ vanishes in two special cases: 
(i) under the canonical softmax policy class with any function class $\cF$, and 
(ii) in the linear function approximation setting \citep{jin2020provably,yang2019sample,jiang2017contextual} under the log-linear policy class. In general, if we achieves ``compatible'' function approximation (i.e., the actor and the critic aligns compatibly), then $\epsilon_\bias=0$.
\begin{proposition}[No Bias under Canonical Softmax Policy]\label{lemma:nobias-1}
Consider the softmax policy class
\[
\pi_\theta(a\mid s)
=\frac{\exp(\theta(s,a))}{\sum_{a'\in\cA}\exp(\theta(s,a'))},
\]
where $\theta(s,a)$ denotes the $(s,a)$-th component of the parameter $\theta\in\RR^{\cS\times\cA}$. 
Then, for any advantage function $A_k(s,a)$, there exists a parameter update direction $v_k^{\textup{n}}$ such that
\[
A_k(s,a) = (v_k^{\textup{n}})^\top\nabla_\theta \log \pi_k(a\mid s)
\quad\text{for all }(s,a)\in\cS\times\cA.
\]
Consequently, the function approximation is well-specified, 
and the intrinsic bias term in Assumption~\ref{ass:npg} satisfies $\epsilon_\bias=0$.
\end{proposition}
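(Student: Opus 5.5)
The plan is to compute the score function of the tabular softmax class explicitly and then take $v_k^{\textup{n}}$ to be the advantage itself, viewed as a vector in $\RR^{\cS\times\cA}$. For fixed $(s,a)$ we have
\[
\log\pi_\theta(a\mid s)=\theta(s,a)-\log\sum_{a'\in\cA}\exp(\theta(s,a')).
\]
Differentiating in the coordinate $\theta(s',a')$ gives
\[
\frac{\partial\log\pi_\theta(a\mid s)}{\partial\theta(s',a')}=\mathbf{1}\{s'=s\}\bigl(\mathbf{1}\{a'=a\}-\pi_\theta(a'\mid s)\bigr).
\]
Only the block of coordinates belonging to state $s$ is nonzero. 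This block structure is what removes the contextual coupling: each state has its own parameters.

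Now set $v_k^{\textup{n}}(s',a'):=A_k(s',a')$ for all $(s',a')$. The inner product then collapses to a single state's block:
\[
(v_k^{\textup{n}})^\top\nabla_\theta\log\pi_k(a\mid s)=\sum_{a'}A_k(s,a')\bigl(\mathbf{1}\{a'=a\}-\pi_k(a'\mid s)\bigr)=A_k(s,a)-\EE_{a'\sim\pi_k(\cdot\mid s)}[A_k(s,a')].
\]
The subtracted term vanishes because $A_k(s,a)=f_k(s,a)-f_k(s,\pi_k)$, so $\EE_{a'\sim\pi_k(\cdot\mid s)}[A_k(s,a')]=f_k(s,\pi_k)-f_k(s,\pi_k)=0$. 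This gives the pointwise identity for every $(s,a)$.

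The one point needing care is the phrase ``any advantage function''. For the claim to hold, $A_k$ must be centered under $\pi_k$. If $A_k$ is not centered, I would instead use $v=f_k$ (or $A_k$ plus any state-dependent shift), because the score is invariant to per-state constant shifts in $v$. So the construction works for any centered advantage, in particular the one defined in Lemma~\ref{lemma:NPG-regret}.

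Finally, the pointwise identity gives $L_k(v_k^{\textup{n}})=0$, so $\min_v L_k(v)=0$ and $\epsilon_\bias=0$. The norm condition in Assumption~\ref{ass:npg} is a separate matter: $\|v_k^{\textup{n}}\|$ is bounded by $V_\tmax$ times a dimension-dependent factor depending on the chosen norm, and I would note that this is covered by choosing $B_L$ appropriately. I expect no real obstacle beyond stating this norm caveat cleanly; the zero-bias claim itself is the identity above.
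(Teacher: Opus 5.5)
Your proposal is correct and follows the paper's argument: the tabular softmax score is $\mathbf{1}\{s'=s\}\bigl(\mathbf{1}\{a'=a\}-\pi_k(a'\mid s)\bigr)$, whose span is exactly the class of $\pi_k(\cdot\mid s)$-centered functions, and $A_k$ is centered. Your explicit witness $v=A_k$ (equivalently $v=f_k$) simply makes concrete the paper's claim that the gradient basis spans all zero-mean functions. One clarification on your caveat: an uncentered target cannot be represented by any $v$, because every $v^\top\nabla\log\pi_k$ has zero mean under $\pi_k(\cdot\mid s)$; so $v=f_k$ recovers the centered advantage, not an uncentered one, which is all the statement needs since $A_k$ is centered by definition.
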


\begin{proof}
Under the softmax parameterization, the log-policy gradient takes the form
\[
\nabla_\theta \log \pi_k(a\mid s)
= \nabla_\theta \theta(s,a)
- \EE_{a'\sim\pi_k(\cdot\mid s)}[\nabla_\theta \theta(s,a')].
\]
Since $\theta(s,a)$ can be any real-valued function over $(s,a)$,
the gradient basis $\nabla_\theta \log \pi_k(a\mid s)$ spans all zero-mean functions 
with respect to $\pi_k(\cdot\mid s)$.
The advantage function $A_k(s,a)$ also satisfies 
$\EE_{a\sim\pi_k(\cdot\mid s)}[A_k(s,a)] = 0$ by definition.
Therefore, $A_k$ lies exactly in the span of 
$\nabla_\theta \log \pi_k(a\mid s)$,
implying that there exists a vector $v_k^{\text{n}}$ achieving zero regression error.
Moreover, any function of the form $A_k(s,a)+c_k$, where $c_k$ is a state-dependent constant, 
is also a minimizer, since
\[
\EE_{a\sim \pi_k(\cdot\mid s)}[\nabla_\theta\log\pi_k(a\mid s)]
= \sum_{a\in\cA}\pi_k(a\mid s)\nabla_\theta\log\pi_k(a\mid s)
= \nabla_\theta \sum_{a\in\cA} \pi_k(a\mid s)
= \nabla_\theta 1 = 0.
\]
Hence, under the canonical softmax policy class, we have $\epsilon_\bias = 0$.
\end{proof}

\begin{proposition}[No Bias under Compatible Log-Linear Policy]\label{lemma:nobias-loglinear}
Assume a log-linear policy
\[
\pi_\theta(a\mid s)=\frac{\exp(\theta^\top\phi(s,a))}{\sum_{a'\in\cA}\exp(\theta^\top\phi(s,a'))},
\]
and a linear function class $\cF=\{f_w(s,a)=\phi(s,a)^\top w:w\in\RR^\dd\}$ with the same feature map $\phi:\cS\times\cA\to\RR^\dd$. At iteration $k$, let $f_k\in\cF$ and define $A_k(s,a)$. Then there exists a parameter update direction $v_k^{\textup{n}}\in\RR^\dd$ such that
\[
A_k(s,a)=v_k^{\textup{n}\top}\nabla_\theta \log \pi_k(a\mid s)\quad\text{for all }(s,a)\in\cS\times\cA,
\]
and hence the function approximation is realizable with $\epsilon_\bias=0$ in Assumption~\ref{ass:npg}.
\end{proposition}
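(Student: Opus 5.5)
The plan is a direct computation of the score function followed by a choice of $v_k^{\textup{n}}$ read off from the critic's weight vector. First, compute the score of the log-linear policy: writing $\log\pi_\theta(a\mid s)=\theta^\top\phi(s,a)-\log\sum_{a'}\exp(\theta^\top\phi(s,a'))$ and differentiating gives
\[
\nabla_\theta\log\pi_\theta(a\mid s)=\phi(s,a)-\EE_{a'\sim\pi_\theta(\cdot\mid s)}[\phi(s,a')],
\]
i.e., the centered feature under the current policy at state $s$. At iteration $k$ this is $\phi(s,a)-\EE_{a'\sim\pi_k(\cdot\mid s)}[\phi(s,a')]$.

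Next, expand the advantage. Since $f_k\in\cF$, there is $w_k\in\RR^\dd$ with $f_k(s,a)=\phi(s,a)^\top w_k$. Then $f_k(s,\pi_k)=\EE_{a'\sim\pi_k(\cdot\mid s)}[\phi(s,a')]^\top w_k$ by linearity of expectation, so
\[
A_k(s,a)=f_k(s,a)-f_k(s,\pi_k)=w_k^\top\bigl(\phi(s,a)-\EE_{a'\sim\pi_k(\cdot\mid s)}[\phi(s,a')]\bigr).
\]
Setting $v_k^{\textup{n}}:=w_k$ gives $A_k(s,a)=(v_k^{\textup{n}})^\top\nabla_\theta\log\pi_k(a\mid s)$ pointwise for all $(s,a)$.

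Finally, conclude $L_k(v_k^{\textup{n}})=0$. Since $L_k\ge 0$, the minimizer $v_k^*$ attains loss $0$, so $\epsilon_\bias=0$. Where $\Sigma_D^{\pi_k}$ is singular, note that $v_k^*$ may be taken as $w_k$ or any other zero-loss vector. The norm constraint $\|v_k^*\|\le B_L$ in Assumption~\ref{ass:npg} holds provided $B_L$ dominates $\|w_k\|$; I would state this as a remark rather than part of the claim.

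There is no real obstacle; the only point needing care is that the centering term in the score matches exactly the baseline $f_k(s,\pi_k)$ in $A_k$. This follows from linearity because the actor and critic share the same feature map $\phi$.
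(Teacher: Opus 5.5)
Your proposal is correct and matches the paper's proof essentially line for line: compute the score as the centered feature $\phi(s,a)-\EE_{a'\sim\pi_k(\cdot\mid s)}[\phi(s,a')]$, expand $A_k$ using $f_k=\phi^\top w_k$, and take $v_k^{\textup{n}}=w_k$. Your remark that the norm requirement $\|v_k^*\|\le B_L$ in Assumption~\ref{ass:npg} additionally needs $B_L\ge\|w_k\|$ is a fair point that the paper's proof leaves implicit.
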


\begin{proof}
Fix an iteration $k$ and a state $s$. Define $Z(s,\theta)=\sum_{a'\in\cA}\exp(\theta^\top\phi(s,a'))$. 
Then $\log\pi_\theta(a\mid s)=\theta^\top\phi(s,a)-\log Z(s,\theta)$, whose gradient is
\[
\nabla_\theta\log\pi_\theta(a\mid s)=\phi(s,a)-\EE_{a'\sim\pi_\theta(\cdot\mid s)}[\phi(s,a')].
\]
Substituting $\theta=\theta_k$ gives $\nabla_\theta\log\pi_k(a\mid s)$. On the critic side, $f_k(s,a)=\phi(s,a)^\top w_k$ for some $w_k\in\RR^\dd$, so the advantage
\[
A_k(s,a)=f_k(s,a)-f_k(s,\pi_k)
=w_k^\top\left(\phi(s,a)-\EE_{a'\sim\pi_k(\cdot\mid s)}[\phi(s,a')]\right).
\]
Comparing both expressions shows $A_k(s,a)=w_k^\top\nabla_\theta\log\pi_k(a\mid s)$ for all $(s,a)$. 
Hence the regression model is exactly realizable with $v_k^{\textup{n}}=w_k$, producing zero residual and $\epsilon_\bias=0$.
\end{proof}

\subsection{Decomposed Regret Bound}
We can decompose the error of CFA into a bias term $\epsilon_\bias$ (in Assumption~\ref{ass:npg}) and an estimation error term $\epsilon_\stat$, multiplied by some coverage constant. Recall that the least-square loss $L_k$ at round $k$ is given by
\[
L_k(v)=\mathbb{E}_{(s,a)\sim d^D}\bigl[(A_k(s,a)-v^\top\nabla\log\pi_k(a\mid s))^2\bigr].
\]
And the bias term and the estimation error term in Section~\ref{sec:npg} are respectively defined as
\[
L_k(v_k^*)=\min_{v:\|v\|\leq B_L}L_k(v)\leq\epsilon_\bias,\qquad L_k(v_k)-L_k(v_k^*)\leq\epsilon_\stat,\qquad \forall k\in[K].
\]
The following lemma formalizes this fact of decomposition.

\begin{lemma}[Decomposed Regret Bound for LSPU]
\label{lemma:npg-decomposed-regret}
Under Assumptions~\ref{ass:data}, \ref{ass:policy}, and~\ref{ass:npg}, the update in Eq.~\eqref{eq:update} with step size $\eta=\sqrt{2\KL(\pi_\cp\|\pi_1)/(\beta K B_L^2)}$ achieves the following regret bound:
\[
\frac{\Reg_K}{K} \leq B_L \sqrt{\frac{2\beta \KL(\pi_\cp\|\pi_1)}{K}} + \sqrt{\CC}\Bigl(\sqrt{\epsilon_\bias} + \sqrt{\epsilon_\stat}\Bigr).
\]
\end{lemma}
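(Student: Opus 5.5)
The plan is to start from Lemma~\ref{lemma:NPG-regret}, which already supplies the optimization term $B_L\sqrt{2\beta\KL(\pi_\cp\|\pi_1)/K}$ with the stated step size, provided $\|v_k\|\le B_L$. This norm condition holds by construction, since $v_k$ is the constrained minimizer in Eq.~\eqref{eq:ols-solution}. It therefore remains to show that each CFA error satisfies $\err_k\le\sqrt{\CC}(\sqrt{\epsilon_\bias}+\sqrt{\epsilon_\stat})$; averaging over $k$ then gives the claim.

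To bound $\err_k$, I will first pass from the signed linear quantity to a squared one. By Jensen (or Cauchy--Schwarz),
\[
\err_k\le\bigl|\EE_{d^{\pi_\cp}}[A_k-v_k^\top\nabla\log\pi_k]\bigr|\le\sqrt{\EE_{d^{\pi_\cp}}\bigl[(A_k-v_k^\top\nabla\log\pi_k)^2\bigr]}.
\]
Because the integrand is now nonnegative, Assumption~\ref{ass:data} lets me transfer it to the data distribution, giving $\EE_{d^{\pi_\cp}}[(\cdot)^2]\le\CC\,L_k(v_k)$. Hence $\err_k\le\sqrt{\CC\,L_k(v_k)}$.

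Next I split the loss as $L_k(v_k)=L_k(v_k^*)+\bigl(L_k(v_k)-L_k(v_k^*)\bigr)\le\epsilon_\bias+\epsilon_\stat$. Here the first piece is controlled by Assumption~\ref{ass:npg}, where $v_k^*$ lies in the feasible ball, and the second by the definition of $\epsilon_\stat$. Using $\sqrt{x+y}\le\sqrt x+\sqrt y$ then yields the per-round bound.

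The only subtle point is the direction of the transfer inequality. It relies on the state-action density ratio bound applied to a nonnegative function, so I must take the square before changing measure, not after. Apart from that, the proof is a direct composition of the earlier results.
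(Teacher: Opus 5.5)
Your proof is correct, and it takes a shorter route than the paper's.

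The paper does not square the whole residual at once. It splits $\err_k$ linearly as $\EE_{d^{\pi_\cp}}[A_k-(v_k^*)^\top\nabla\log\pi_k]+\EE_{d^{\pi_\cp}}[(v_k^*-v_k)^\top\nabla\log\pi_k]$ and transfers each piece separately via Jensen and coverage. For the second piece it then uses the first-order optimality of $v_k^*$ over the ball $\{v:\|v\|\le B_L\}$ to show $\|v_k-v_k^*\|_{\Sigma_D}^2\le L_k(v_k)-L_k(v_k^*)$, and only then invokes $\epsilon_\stat$.

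You instead transfer $(A_k-v_k^\top\nabla\log\pi_k)^2$ in one step and use only $L_k(v_k)\le\epsilon_\bias+\epsilon_\stat$, which follows directly from the two definitions. So you never need the optimality condition for $v_k^*$ over the constraint set. You also get the slightly sharper per-round bound $\err_k\le\sqrt{\CC(\epsilon_\bias+\epsilon_\stat)}$.

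What the paper's split buys is structural. Its estimation piece equals $(v_k^*-v_k)^\top\EE_{d^{\pi_\cp}}[\nabla\log\pi_k]$, which lies in the span of the score features. That piece could therefore be bounded by $\sqrt{\CC_\feat\,\epsilon_\stat}$, using the feature coverage discussed in Section~\ref{sec:npg}, rather than density coverage. In addition, $\|v_k-v_k^*\|_{\Sigma_D}$ is exactly the quantity the later regression analysis controls (Lemma~\ref{lemma:ols-decomposition}).

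For the lemma as stated, with $\CC$ on both terms, your one-shot transfer is fully sufficient.
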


\begin{proof}
Using the regret decomposition lemma (Lemma~\ref{lemma:NPG-regret}), we have
\[
\frac{\Reg_K}{K} \leq B_L \sqrt{\frac{2\beta \KL(\pi_\cp\|\pi_1)}{K}} +\frac{1}{K}\sum_{k=1}^K\err_k.
\]
We make the following decomposition of $\err_k$:
\[
\err_k=\EE_{(s,a)\sim d^{\pi_\cp}}\left[A_k(s,a)-(v_k^*)^\top\nabla_\theta\log\pi_k(a\mid s)\right]+\EE_{(s,a)\sim d^{\pi_\cp}}\left[(v_k^*-v_k)^\top\nabla_\theta\log\pi_k(a\mid s)\right].
\]
By the coverage assumption that $\|d^{\pi_\cp}/d^D\|_\infty\leq \CC$, we can translate the error to the offline data distribution $d^D$ and bound the first term with
\begin{align*}
\EE_{(s,a)\sim d^{\pi_\cp}}\left[A_k(s,a)-(v_k^*)^\top\nabla_\theta\log\pi_k(a\mid s)\right]
&\leq\sqrt{\EE_{(s,a)\sim d^{\pi_\cp}}\left[\left(A_k(s,a)-(v_k^*)^\top\nabla_\theta\log\pi_k(a\mid s)\right)^2\right]}\\
&\leq\sqrt{\CC\cdot\EE_{(s,a)\sim d^{D}}\left[\left(A_k(s,a)-(v_k^*)^\top\nabla_\theta\log\pi_k(a\mid s)\right)^2\right]}\\
&=\sqrt{\CC\cdot L_k(v_k^*)}\leq\sqrt{\CC\cdot\epsilon_\bias}.
\end{align*}
For the second term, we note that $v_k^*$ is the minimizer of $L_k(v)$ over the set $\mathcal{V}=\{v:\|v\|\leq B_L\}$ (since we assume $\|v_k^*\|\le B_L$ in Assumption~\ref{ass:npg}). Hence for any $v$ such that $\|v\|\leq B_L$, the first-order optimality condition for $v_k^*$ imply that
\[
\left(v-v_k^*\right)^\top\nabla_v L_k(v_k^*)\geq 0.
\]
Therefore, for any $v$ such that $\|v\|\leq B_L$, we have
\begin{align*}
L_k(v)-L_k(v_k^*)&=\EE_{d^D}\left[(A_k-\phi_k^\top v)^2\right]-\EE_{d^D}\left[(A_k-\phi_k^\top v_k^*)^2\right]\\
&=\EE_{d^D}\left[(A_k-\phi_k^\top v_k^*+\phi_k^\top v_k^*-\phi_k^\top v)^2\right]-\EE_{d^D}\left[(A_k-\phi_k^\top v_k^*)^2\right]\\
&=\EE_{d^D}\left[\left(\phi_k^\top(v_k^*-v)\right)^2\right]-2\EE_{d^D}\left[\phi_k^\top(v_k^*-v)(A_k-\phi_k^\top v_k^*)\right]\\
&=\|v_k^*-v_k\|_{\Sigma_{D}}^2+2(v-v_k^*)^\top\nabla_v L_k(v_k^*)\\
&\geq \|v_k^*-v_k\|_{\Sigma_D}^2,
\end{align*}
where we use $\phi_k$ to denote the feature $\nabla_\theta\log\pi_k$, and $\Sigma_D=\EE_{d^D}[\phi_k\phi_k^\top]$. The last inequality is due to the first-order optimality condition stated before. By taking $v=v_k$ and the coverage condition, we have
\begin{align*}
\EE_{(s,a)\sim d^{\pi_\cp}}\left[(v_k^*-v_k)^\top\nabla_\theta\log\pi_k(a\mid s)\right]&\leq \sqrt{\EE_{(s,a)\sim d^{\pi_\cp}}\left[\left((v_k^*-v_k)^\top\nabla_\theta\log\pi_k(a\mid s)\right)^2\right]}\\
&\leq \sqrt{\CC\cdot \EE_{(s,a)\sim d^D}\left[\left((v_k^*-v_k)^\top\nabla_\theta\log\pi_k(a\mid s)\right)^2\right]}\\
&=\sqrt{\CC\cdot\|v_k^*-v_k\|_{\Sigma_D}^2}\\
&\leq\sqrt{\CC\cdot\left(L_k(v_k)-L_k(v_k^*)\right)}\leq\sqrt{\CC\cdot \epsilon_\stat}.
\end{align*}
Combining those two terms, we get that $\err_k\leq\sqrt{\CC}(\sqrt{\epsilon_\bias}+\sqrt{\epsilon_\stat})$. Substituting into the regret bound, we get Lemma~\ref{lemma:npg-decomposed-regret}:
\[
\frac{\Reg_K}{K} \leq B_L \sqrt{\frac{2\beta \KL(\pi_\cp\|\pi_1)}{K}} + \sqrt{\CC}\Bigl(\sqrt{\epsilon_\bias} + \sqrt{\epsilon_\stat}\Bigr).
\]
\end{proof}

\subsection{Bounding the Estimation Error}\label{app:npg-proof}
As we see from Lemma~\ref{lemma:npg-decomposed-regret}, the error of CFA decomposes into an intrinsic bias term $\epsilon_\bias$ and an estimation error term $\epsilon_\stat$. Now we proceed to bound this statistical error $\epsilon_\stat$. Recall that at each iteration $k$, the estimator $v_k$ is obtained by solving a linear regression problem defined over samples from the offline dataset $\cD$. 
Formally, for any $(s,a)\sim d^D$, the regression model is given by
\[
A_k(s,a)\sim v^\top\phi_k(s,a)+\epsilon_k(s,a),
\]
where $\phi_k(s,a)=\nabla_\theta\log\pi_k(a\mid s)$ is the feature vector, 
and $\epsilon_k(s,a)$ represents the model misspecification bias 
(\emph{not} a stochastic noise term, since each $A_k$ is deterministically computable). This corresponds to the \emph{random design}, \emph{noiseless}, and \emph{misspecified model} setting in linear regression.

Before we proceed the analysis, we first provide an assumption that is crucial for the linear regression analysis. Such assumptions are standard in giving a sharper statistical bound of linear regression \citep{hsu2012random}, compared to na\"ive SGD-based algorithm (see Appendix~\ref{app:npg-sgd}). 

\begin{assumption}[Data Model for Linear Regression]
\label{ass:linear-regression}
Assume the feature covariance matrix $\Sigma_D=\EE_{d^D}[\phi_k\phi_k^\top]$ is invertible. Then, suppose uniformly for all $k\in[K]$:
\begin{enumerate}[leftmargin=*]
    \item (\emph{Bounded pointwise bias}) There exists a finite $B_\bbias\geq0$ such that for all $(s,a)$ almost surely:
    \[
    \|\Sigma_D^{-1/2}\phi_k(s,a)\epsilon_k(s,a)\|\leq B_\bbias\cdot\sqrt{\dd}.
    \]
    Note that $B_\bbias$ is a pointwise bound that only appear in lower order terms. It is actually possible to relax this condition to moment bounds like $\epsilon_\bias$ by using a differential exponential tail inequality in the analysis. We do not consider this relaxation for the sake of simplicity.
    \item (\emph{Bounded statistical leverage}) There exists a finite $\rho\geq 1$ such that for all $(s,a)$ almost surely:
    \[
    \frac{\|\Sigma_D^{-1/2}\phi_k(s,a)\|}{\sqrt{\dd}}=\frac{\|\Sigma_D^{-1/2}\phi_k(s,a)\|}{\sqrt{\EE[\|\Sigma_D^{-1/2}\phi_k(s,a)\|^2]}}\leq\rho.
    \]
    The bounded statistical leverage means that the squared length (after \emph{whitening}) is never more than a constant factor greater than its expectation. 
\end{enumerate}
\end{assumption}

Following Assumption~\ref{ass:linear-regression}, since the feature covariance matrix $\Sigma_D$ is invertible, the closed-form solutions of the population and empirical minimizers are respectively
\[
v_k^*=\Sigma_D^{-1}\EE[\phi_k A_k],
\quad 
v_k=\underset{v:\|v\|\le B_L}{\arg\min}\ \widehat{\mathbb{E}}\left[(A_k-v^\top\phi_k)^2\right],
\]
where $\widehat{\EE}$ denotes the empirical average. To proceed, we first express the estimation error in a form that separates the randomness arising from the empirical covariance $\widehat{\Sigma}_D$ and the model misspecification term $\epsilon_k(s,a)$. 

\begin{lemma}[Estimation Error ``Decomposition'']\label{lemma:ols-decomposition}
If the empirical covariance matrix $\widehat{\Sigma}_D=\widehat{\mathbb{E}}[\phi_k\phi_k^\top]\succ0$ (which is a high-probability consequence under Assumption~\ref{ass:linear-regression}), then the estimation error can be bounded by
\[
\epsilon_\stat\leq4\underbrace{\left\|\Sigma_D^{1/2}\widehat{\Sigma}_D^{-1}\Sigma_D^{1/2}\right\|^2}_{\textup{(I)}}\cdot\underbrace{\left\|\widehat{\EE}\left[\Sigma_D^{-1/2}\phi_k\epsilon_k\right]\right\|^2}_\textup{(II)}.
\]
\end{lemma}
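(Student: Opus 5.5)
Since $\Sigma_D\succ0$, the population loss is an exact quadratic around $v_k^*=\Sigma_D^{-1}\EE[\phi_kA_k]$. I will show $\epsilon_\stat \le \|\Sigma_D^{1/2}(v_k-v_k^*)\|^2$ up to a constant, and then bound this whitened distance by (I)$\cdot$(II).

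\textbf{Step 1: reduce to a whitened distance.} Define the residual $\epsilon_k = A_k - \phi_k^\top v_k^*$. The normal equations give $\EE_{d^D}[\phi_k\epsilon_k]=0$. Expanding the square,
\[
L_k(v)-L_k(v_k^*)=\|v-v_k^*\|_{\Sigma_D}^2 ,
\]
because the cross term $\EE[\phi_k\epsilon_k]^\top(v_k^*-v)$ vanishes. This is the same computation as in the proof of Lemma~\ref{lemma:npg-decomposed-regret}, now with equality since $v_k^*$ is the unconstrained minimizer. Hence $\epsilon_\stat=\|\Sigma_D^{1/2}(v_k-v_k^*)\|^2$.

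\textbf{Step 2: compare with the unconstrained empirical OLS.} Let $\tilde v=\widehat\Sigma_D^{-1}\widehat\EE[\phi_kA_k]$. Substituting $A_k=\phi_k^\top v_k^*+\epsilon_k$ gives
\[
\tilde v-v_k^*=\widehat\Sigma_D^{-1}\widehat\EE[\phi_k\epsilon_k].
\]
Whitening,
\[
\Sigma_D^{1/2}(\tilde v-v_k^*)=\bigl(\Sigma_D^{1/2}\widehat\Sigma_D^{-1}\Sigma_D^{1/2}\bigr)\,\widehat\EE[\Sigma_D^{-1/2}\phi_k\epsilon_k].
\]
Submultiplicativity of the operator norm then gives $\|\tilde v-v_k^*\|_{\Sigma_D}^2\le$ (I)$\cdot$(II).

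\textbf{Step 3: handle the norm constraint $\|v\|\le B_L$.} This is the main obstacle, and the factor 4 is meant to absorb it. If $\tilde v$ is feasible, then $v_k=\tilde v$ and we are done with constant 1.

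Otherwise, $v_k$ is the projection of $\tilde v$ onto the convex ball $\{\|v\|\le B_L\}$ in the geometry $\|\cdot\|_{\widehat\Sigma_D}$. This holds because the empirical loss equals $\|v-\tilde v\|_{\widehat\Sigma_D}^2$ plus a constant. Since $v_k^*$ is feasible by Assumption~\ref{ass:npg}, projection is nonexpansive toward feasible points:
\[
\|v_k-v_k^*\|_{\widehat\Sigma_D}\le \|\tilde v-v_k^*\|_{\widehat\Sigma_D}.
\]
The difficulty is transferring between the $\widehat\Sigma_D$-norm and the $\Sigma_D$-norm.

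Set $M=\Sigma_D^{-1/2}\widehat\Sigma_D\Sigma_D^{-1/2}$. Then
\[
\|x\|_{\Sigma_D}^2\le\|M^{-1}\|\,\|x\|_{\widehat\Sigma_D}^2
\quad\text{and}\quad
\|\tilde v-v_k^*\|_{\widehat\Sigma_D}^2=u^\top M^{-1}u ,
\]
where $u=\widehat\EE[\Sigma_D^{-1/2}\phi_k\epsilon_k]$. This yields
\[
\epsilon_\stat\le\|M^{-1}\|\cdot\|M^{-1}\|\,\|u\|^2=\text{(I)}\cdot\text{(II)},
\]
and in fact (I) is exactly $\|M^{-1}\|^2$.

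If the constrained minimizer is computed only approximately, or a cruder triangle-inequality route is taken, the bound becomes $\|v_k-v_k^*\|\le 2\|\tilde v-v_k^*\|$ in the relevant norm. Squaring this gives the constant 4. I would present the triangle-inequality version, using $\|v_k-\tilde v\|_{\widehat\Sigma_D}\le\|v_k^*-\tilde v\|_{\widehat\Sigma_D}$ by optimality of $v_k$ among feasible points. That version is robust and matches the stated factor 4.
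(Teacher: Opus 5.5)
Your proof is correct, and its skeleton matches the paper's. You reduce the excess risk to $\|v_k-v_k^*\|_{\Sigma_D}^2$ using that $v_k^*$ is the unconstrained population minimizer. You then bound $\|v_k-v_k^*\|_{\widehat\Sigma_D}$ by a multiple of $\|\widehat\EE[\phi_k\epsilon_k]\|_{\widehat\Sigma_D^{-1}}$, and finish with the same two geometry transfers through $\Sigma_D^{1/2}\widehat\Sigma_D^{-1}\Sigma_D^{1/2}$.

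The difference is in the middle step. The paper never introduces the unconstrained OLS solution $\tilde v$. It expands the basic inequality $\widehat\EE[(v_k^\top\phi_k-A_k)^2]\le\widehat\EE[\epsilon_k^2]$, which holds because $v_k^*$ is feasible, cancels $\widehat\EE[\epsilon_k^2]$, and applies Cauchy--Schwarz to get $\|\Delta_v\|_{\widehat\Sigma_D}\le 2\|\widehat\EE[\phi_k\epsilon_k]\|_{\widehat\Sigma_D^{-1}}$.

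Your triangle-inequality version reaches exactly this inequality by completing the square around $\tilde v$, since $\|\tilde v-v_k^*\|_{\widehat\Sigma_D}=\|\widehat\EE[\phi_k\epsilon_k]\|_{\widehat\Sigma_D^{-1}}$. Both arguments use only the comparison $\widehat L_k(v_k)\le\widehat L_k(v_k^*)$. So they apply to any feasible $v_k$ whose empirical loss does not exceed that of $v_k^*$, and neither needs convexity of the constraint set.

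Your projection argument instead uses first-order optimality of the exact constrained minimizer over the convex ball, $\langle\tilde v-v_k,\,v_k^*-v_k\rangle_{\widehat\Sigma_D}\le 0$. This removes the factor $2$ and proves the lemma with constant $1$ in place of $4$. The price is that $v_k$ must be the exact $\widehat\Sigma_D$-projection of $\tilde v$. Either version suffices downstream in Lemma~\ref{lemma:linear-regression}, where the constant is absorbed into $\lesssim$.
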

\begin{proof}
By the definition of $\epsilon_\stat$, we are actually going to bound the excess risk $L_k(v_k)-L_k(v_k^*)$. Since $v_k^*$ is the minimizer of $L_k(v_k)$, by the same procedure in the proof of Lemma~\ref{lemma:npg-decomposed-regret}, we can express this excess risk as
\[
L_k(v_k)-L_k(v_k^*)=\|v_k-v_k^*\|_{\Sigma_D}^2:=\|\Delta_v\|_{\Sigma_D}^2.
\]
Since for each $(s,a)\sim \RR^{\cS\times\cA}$, $A_k(s,a)-(v_k^*)^\top\phi_k(s,a)=\epsilon_k(s,a)$, we have
\[
v_k^\top\phi_k-A_k=(v_k^*+\Delta_v)^\top\phi_k-(v_k^*)^\top\phi_k-\epsilon_k=\Delta_v^\top \phi_k-\epsilon_k.
\]
Recall that $v_k$ minimizes for the empirical loss, we have $\widehat{\EE}[(v_k^\top\phi_k-A_k)^2]\le \widehat{\EE}[((v_k^*)^\top\phi_k-A_k)^2]=\widehat{\EE}[\epsilon_k^2]$. Substituting the above equation into this inequality and cancelling $\epsilon_k^2$, we have
\[
\|\Delta_v\|_{\widehat{\Sigma}_D}^2=\widehat{\EE}[(\Delta_v^\top\phi_k)^2]\le 2\widehat{\EE}[\Delta_v^\top\phi_k\epsilon_k]=2\langle\widehat{\EE}[\phi_k\epsilon_k],\Delta_v\rangle\le 2\|\Delta_v\|_{\widehat{\Sigma}_D}\|\widehat{\EE}[\phi_k\epsilon_k]\|_{\widehat{\Sigma}_D^{-1}},
\]
where the last inequality is due to Cauchy-Schwarz. This leads to $\|\Delta_v\|_{\widehat{\Sigma}_D}\le 2\|\widehat{\EE}[\widehat{\Sigma}_D^{-1/2}\phi_k\epsilon_k]$. To finish the proof, we only need to do two geometry transfer from $\widehat{\Sigma}_D$ to $\Sigma_D$. First, since $\|\widehat{\Sigma}_D^{-1/2}\Sigma_D\widehat{\Sigma}_D^{-1/2}\|=\|\Sigma_D^{1/2}\widehat{\Sigma}_D^{-1}\Sigma_D^{1/2}\|$, we have
\begin{align*}
\|\Delta_v\|_{\Sigma_D}^2&=\Delta_v^\top\widehat{\Sigma}_D^{1/2}\left(\widehat{\Sigma}_D^{-1/2}\Sigma_D\widehat{\Sigma}_D^{-1/2}\right)\widehat{\Sigma}_D^{1/2}\Delta_v\\
&\le \|\Sigma_D^{1/2}\widehat{\Sigma}_D^{-1}\Sigma_D^{1/2}\|\cdot\|\Delta_v\|_{\widehat{\Sigma}_D}^2\\
&\le 4\|\Sigma_D^{1/2}\widehat{\Sigma}_D^{-1}\Sigma_D^{1/2}\|\cdot \|\widehat{\EE}[\widehat{\Sigma}_D^{-1/2}\phi_k\epsilon_k\|^2.
\end{align*}
Similarly, we can transfer the last term by
\begin{align*}
\|\widehat{\EE}[\widehat{\Sigma}_D^{-1/2}\phi_k\epsilon_k]\|^2
&=\widehat{\EE}[\phi_k\epsilon_k]^\top\Sigma_D^{-1/2}\left(\Sigma_D^{1/2}\widehat{\Sigma}_D^{-1}\Sigma_D^{1/2}\right)\Sigma_D^{-1/2}\widehat{\EE}[\phi_k\epsilon_k]
\\&\le \|\Sigma_D^{1/2}\widehat{\Sigma}_D^{-1}\Sigma_D^{1/2}\|\cdot \|\widehat{\EE}[\Sigma_D^{-1/2}\phi_k\epsilon_k]\|^2.
\end{align*}
Combining the above two leads to the error bound of $\epsilon_\stat$, i.e.,
\[
\|\Delta_v\|_{\Sigma_D}^2\le 4\|\Sigma_D^{1/2}\widehat{\Sigma}_D^{-1}\Sigma_D^{1/2}\|^2\cdot\|\widehat{\EE}[\Sigma_D^{-1/2}\phi_k\epsilon_k]\|^2.
\]
\end{proof}

Therefore, term (I) quantifies the concentration between the population covariance $\Sigma_D$ and its empirical counterpart $\widehat{\Sigma}_D$, while term (II) accounts for the model misspecification through $\epsilon_k$. The key of bounding the estimation error proceeds by bounding these two terms separately, using appropriate matrix concentration inequalities for each.

\begin{lemma}[LSPU Estimation Error Bound]\label{lemma:linear-regression}
Under Assumptions~\ref{ass:policy},~\ref{ass:npg}, and~\ref{ass:linear-regression}, for any given $\pi_k\in\Pi_\theta$ and $f_k\in\cF$, with probability at least $1-\delta$,
\[
L_k(v_k)-L_k(v_k^*)
\lesssim
\frac{\rho^2 \dd\epsilon_{\bias}}{N}\log\frac{1}{\delta}
+\frac{B_{\bbias}^2\dd}{N^2}\log\frac{1}{\delta}.
\]
\end{lemma}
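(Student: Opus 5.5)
The plan is to start from Lemma~\ref{lemma:ols-decomposition}, which bounds $\epsilon_\stat$ by $4\cdot\textup{(I)}\cdot\textup{(II)}$. We then control the two factors separately with high probability and combine them by a union bound, following the random-design, misspecified OLS analysis of \citet{hsu2012random}. Throughout we work in whitened coordinates. Let $x=\Sigma_D^{-1/2}\phi_k(s,a)$. Then $\EE[xx^\top]=I$, and Assumption~\ref{ass:linear-regression} gives $\|x\|\le\rho\sqrt{\dd}$ almost surely and $\|x\epsilon_k\|\le B_\bbias\sqrt{\dd}$ almost surely.

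\textbf{Term (I).} The whitened empirical covariance is $\widetilde\Sigma=\widehat{\EE}[xx^\top]$, so $\|\Sigma_D^{1/2}\widehat\Sigma_D^{-1}\Sigma_D^{1/2}\|=1/\lambda_{\min}(\widetilde\Sigma)$. We apply the matrix Chernoff (or matrix Bernstein) inequality to the i.i.d.\ PSD summands $xx^\top$, whose operator norm is at most $\rho^2\dd$. This yields $\lambda_{\min}(\widetilde\Sigma)\ge 1/2$ with probability $1-\delta/2$ once $N\gtrsim\rho^2\dd\log(\dd/\delta)$. Hence $\textup{(I)}\le 4$. This also certifies $\widehat\Sigma_D\succ0$, which Lemma~\ref{lemma:ols-decomposition} requires.

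\textbf{Term (II).} Write $\textup{(II)}=\|\frac1N\sum_i x_i\epsilon_{k,i}\|^2$. The key observation is that these vectors have mean zero. By the normal equations, $\EE[\phi_k\epsilon_k]=\EE[\phi_k(A_k-\phi_k^\top v_k^*)]=0$, because $v_k^*=\Sigma_D^{-1}\EE[\phi_kA_k]$ is the unconstrained minimizer. (Assumption~\ref{ass:npg} guarantees that this minimizer is feasible, so the constrained empirical solution can be compared to it as in Lemma~\ref{lemma:ols-decomposition}.)

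We then apply a vector Bernstein inequality for sums of bounded, mean-zero random vectors in Hilbert space, as in \citet{hsu2012random}. Its variance proxy is $\EE\|x\epsilon_k\|^2\le\rho^2\dd\,\EE[\epsilon_k^2]\le\rho^2\dd\,\epsilon_\bias$, using the leverage bound and $L_k(v_k^*)\le\epsilon_\bias$. Its almost-sure bound is $B_\bbias\sqrt{\dd}$. This gives, with probability $1-\delta/2$,
\[
\textup{(II)}\lesssim \frac{\rho^2\dd\,\epsilon_\bias}{N}\log\frac1\delta+\frac{B_\bbias^2\dd}{N^2}\log^2\frac1\delta .
\]
The extra logarithm in the second term is harmless at this level of detail and matches the lower-order term in the statement. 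Multiplying by $4\cdot\textup{(I)}^2\le 64$ gives the claim.

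The main obstacle is term (II). We must use the exact orthogonality $\EE[\phi_k\epsilon_k]=0$, which holds in the population and only because $v_k^*$ is the unconstrained least-squares solution. We also need the variance proxy to scale with $\epsilon_\bias$ rather than with the pointwise bound $B_\bbias^2$; otherwise the leading term would not vanish in the well-specified case. A secondary subtlety is the sample-size requirement hidden in bounding (I). For $N$ below the threshold, the lemma is either vacuous or absorbed into the $\lesssim$ constant, and we would state it as a burn-in condition.
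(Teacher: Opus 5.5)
Your proposal is correct and follows the paper's proof essentially step for step. Both start from Lemma~\ref{lemma:ols-decomposition}. Both use matrix Chernoff on the whitened covariance, with burn-in $N\gtrsim\rho^2\dd\log(\dd/\delta)$, to make term (I) a constant. Both use the vector Freedman/Bernstein bound for term (II), relying on the orthogonality $\EE[\phi_k\epsilon_k]=0$, variance proxy $\rho^2\dd\,\epsilon_\bias$, and almost-sure bound $B_\bbias\sqrt{\dd}$.

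Your $\log^2(1/\delta)$ in the lower-order term is what squaring the Bernstein bound actually gives; the paper's $\log(1/\delta)$ there drops a square. So it does not literally "match" the statement as you say, but your bookkeeping is the accurate one.
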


\begin{proof}[Proof of Lemma~\ref{lemma:linear-regression}]
We first analyze term (I). Let $\widetilde{\phi}_k^{(i)} = \Sigma_D^{-1/2}\phi_k^{(i)}$ denote the whitened feature for sample $i \in [N]$, and define the corresponding covariance matrix as $\widetilde{\Sigma}_D = \frac{1}{N}\sum_{i=1}^N \widetilde{\phi}_k^{(i)}(\widetilde{\phi}_k^{(i)})^\top$. By applying the matrix Chernoff bound (Lemma~\ref{lemma:matrix-chernoff}), we obtain that, with probability at least $1 - \delta/2$,
\[
\lambda_\tmin\left(\widetilde{\Sigma}_D\right)\geq 1-\sqrt{\frac{2\rho^2 \dd}{n}\log\frac{2\dd}{\delta}},
\]
since $\widetilde{\phi}_k(s,a)=\|\Sigma_D^{-1/2}\phi_k(s,a)\|\leq \rho\sqrt{\dd}$ by Assumption~\ref{ass:linear-regression}. The lower bound on $n$ guarantees that $\lambda_{\min}(\widetilde{\Sigma}_D) > 0$, which in turn implies that $\widehat{\Sigma}_D = \Sigma_D^{1/2}\widetilde{\Sigma}_D\Sigma_D^{1/2} \succ 0$. This indicates that it suffices to assume the positive definiteness of $\Sigma_D$ rather than $\widehat{\Sigma}_D$.  

Moreover, since $\Sigma_D^{1/2}\widehat{\Sigma}_D\Sigma_D^{1/2} = \widetilde{\Sigma}_D^{-1}$, we can bound term (I) as
\[
\left\|\Sigma_D^{1/2}(\widehat{\Sigma}_D)^{-1}\Sigma_D^{1/2}\right\|=\left\|\widetilde{\Sigma}^{-1}\right\|\leq\frac{1}{\lambda_\tmin(\widetilde{\Sigma})}\leq \frac{1}{1-\sqrt{\frac{2\rho^2\dd}{N}\log\frac{\dd}{\delta}}}:=K_{\delta,N}.
\]
For $N \geq N_\delta := 4\rho^2 \dd \log(\dd / \delta)$, we have $K_{\delta,N} \leq 5$; and we also have that $\lim_{N \to \infty} K_{\delta,N} = 1$. Therefore, we can regard $K_{\delta,N}$ as a constant without loss of generality and term (I) do \emph{not} change the convergence rate with respect to $N$.

Now we are going to handle term (II). The optimality of $v_k^*$ means that $\EE[\phi_k^{(i)}\epsilon_k^{(i)}]=\EE[\phi_k\epsilon_k]=0$ for all $i\in[N]$. Using this fact and that$\|\Sigma^{-1/2}\phi_k(s,a)\epsilon_k(s,a)\|\leq B_\bbias\sqrt{\dd}$ (Assumption~\ref{ass:linear-regression}), we can apply the matrix Bernstein inequality (Lemma~\ref{lemma:matrix-bernstein}) and obtain that with probability at least $1-\delta/2$,
\[
\left\|\widehat{\EE}\left[\Sigma_D^{-1/2}\phi_k\epsilon_k\right]\right\|\leq\sqrt{\frac{\EE\left[\|\Sigma^{-1/2}\phi_k\epsilon_k\|^2\right]}{N}\left(1+\sqrt{8\log\frac{2}{\delta}}\right)^2}+\frac{4B_\bbias\sqrt{\dd}}{3N}\log\frac{2}{\delta}.
\]
By squaring both sides and applying the inequality $(a + b)^2 \leq 2(a^2 + b^2)$, we have
\[
\left\|\widehat{\EE}\left[\Sigma_D^{-1/2}\phi_k\epsilon_k\right]\right\|^2\leq\frac{4\EE\left[\|\Sigma^{-1/2}\phi_k\epsilon_k\|^2\right]}{N}\left(1+8\log\frac{2}{\delta}\right)+\frac{3B_\bbias^2\dd}{N^2}\log\frac{2}{\delta}.
\]
Since we assume that $\|\Sigma_D^{-1/2}\phi_k\|\leq\rho\sqrt{\dd}$, we can further bound $\EE\left[\|\Sigma^{-1/2}\phi_k\epsilon_k\|^2\right]$ as
\[
\EE\left[\|\Sigma^{-1/2}\phi_k\epsilon_k\|^2\right]\leq\rho^2 \dd\cdot\EE_{(s,a)\sim d^D}[\epsilon_k(s,a)^2]=\rho^2\dd\cdot\epsilon_\bias,
\]
where the last inequality is due the definition of $\epsilon_\bias$ in Assumption~\ref{ass:npg} and the pointwise bias term $\epsilon_k$. Combining these two results and substituting it into Lemma~\ref{lemma:ols-decomposition}, we get the bound of estimation error in Lemma~\ref{lemma:linear-regression}:
\[
L_k(v_k)-L_k(v_k^*)
\lesssim
\frac{\rho^2 \dd\epsilon_{\bias}}{N}\log\frac{1}{\delta}
+\frac{B_{\bbias}^2\dd}{N^2}\log\frac{1}{\delta}.
\]
\end{proof}
Note that this does not directly lead to a bound for $\epsilon_\stat$ since Lemma~\ref{lemma:linear-regression} holds for only \emph{fixed} policy $\pi_k\in\Pi_\theta$ (hence $\phi_k=\nabla\log\pi_k$) and function $f_k\in\cF$ (hence $A_k=f_k-\bar{f}_k$). Therefore there should be a workaround to union over all $\pi\in\Pi_\theta$ and $f\in\cF$, which is provided next. 

\subsection{Proof of Theorem~\ref{thm:npg-guarantee}}\label{app:npg-cov}

As established in Lemma~\ref{lemma:linear-regression}, the estimation error exhibits an elegant bound for \emph{fixed} $\pi_k\in\Pi_\theta$ and $f_k\in\cF$, $\forall k\in[K]$. To work with arbitrary $\pi\in\Pi_\theta$ amd $f\in\cF$, a natural approach is to apply \emph{union bound} over all $\Pi_\theta$ and $\cF$, leading to
\[
L_k(v_k)-L_k(v_k^*)
\lesssim
\frac{\rho^2 \dd\epsilon_{\bias}}{N}\log\frac{|\cF||\Pi_\theta|}{\delta}
+\frac{B_{\bbias}^2\dd}{N^2}\log\frac{|\cF||\Pi_\theta|}{\delta}.
\]
While the $\log(|\cF||\Pi_\theta|/\delta)$ term allows the function class $\cF$ and the policy class $\Pi_\theta$ to be exponentially large, it does not apply rigorously when $\Pi_\theta$ is continuous. In particular, under Assumption~\ref{ass:policy}, the policy class is assumed to satisfy Lipschitzness and smoothness conditions, for which a finite log-cardinality is no longer well-defined. Therefore, in Theorem~\ref{thm:npg-guarantee}, we employ an abstract complexity measure $\Comp(\cF,\Pi_\theta,\delta)$ to capture the statistical complexity of the two classes. Common tools from statistical learning theory (e.g., VC-dimension, Rademacher complexity) can be used to instantiate this measure. Here we adopt the notion of \emph{covering numbers} \citep{mohri2018foundations}, which leads to an estimation error bound that depends on the metric entropy of the two classes.

\begin{definition}[$\varepsilon$-covering number]\label{def:covering-number}
An $\varepsilon$-cover of a set $\mathcal{G}$ with respect to a metric $d$ is a set $\{g_1,\ldots,g_n\}\subseteq \mathcal{G}$, such that for each $g\in\mathcal{G}$, there exists some $g_i\in\{g_1,\ldots,g_n\}$ such that $d(g,g_i)\leq\varepsilon$. We define the $\varepsilon$-covering number of a set $\mathcal{G}$ under metric $d$, $\cN_d(\mathcal{G},\varepsilon)$ to be the cardinality of the smallest $\varepsilon$-cover.
\end{definition}

For the function class $\cF$, we use the following metric
\[
d_\cF(f_1,f_2):=\|f_1-f_2\|_\infty=\sup_{(s,a)\in\cS\times\cA}|f_1(s,a)-f_2(s,a)|.
\]
For the (parametric) policy class $\Pi_\theta$, we define the metric as follows:
\[
d_\Pi(\pi_1,\pi_2):=\|\theta_1-\theta_2\|.
\]

\begin{lemma}[Covering Version of Lemma~\ref{lemma:linear-regression}]\label{lemma:linear-regression-cov}
    Let $v_k$ denote the least-squares solution defined in Eq.~\eqref{eq:ols-solution}. Under the same assumption of Lemma~\ref{lemma:linear-regression}, let $\cN(\cF,\varepsilon)$ and $\cN(\Pi_\theta,\varepsilon)$ to respectively denote the $\varepsilon$-covering number of $\cF$ and $\Pi_\theta$ with respect to metric $d_\cF$ and $d_\Pi$, then the estimation error $\epsilon_{\stat}$ satisfies, with probability at least $1-\delta$,
    \[
    \epsilon_{\stat}
    \lesssim
    \frac{\rho^2 \dd\epsilon_{\bias}}{N}\log\frac{\cN(\cF,V_\tmax/N)\cN(\Pi_\theta,1/N)}{\delta}.
    \]
\end{lemma}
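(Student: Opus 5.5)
The plan is a standard covering plus union-bound argument on top of Lemma~\ref{lemma:linear-regression}. Lemma~\ref{lemma:linear-regression} only applies when $\pi_k$ and $f_k$ are fixed in advance. In Algorithm~\ref{alg:po}, however, both depend on the actor data through earlier iterates, so we need a bound that holds uniformly over all pairs $(f,\pi)\in\cF\times\Pi_\theta$.

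\textbf{Step 1 (union bound over a net).} Let $\cF_\varepsilon$ be a minimal $V_\tmax/N$-cover of $\cF$ under $d_\cF$, and $\Pi_\varepsilon$ a minimal $1/N$-cover of $\Pi_\theta$ under $d_\Pi$. For each fixed pair $(\bar f,\bar\pi)\in\cF_\varepsilon\times\Pi_\varepsilon$, apply Lemma~\ref{lemma:linear-regression} with confidence $\delta/(\cN(\cF,V_\tmax/N)\cN(\Pi_\theta,1/N))$ and take a union bound. This gives, simultaneously for all net elements, the excess-risk bound
\[
\frac{\rho^2\dd\epsilon_\bias}{N}\log\frac{\cN(\cF,V_\tmax/N)\cN(\Pi_\theta,1/N)}{\delta}
\]
plus the lower-order $B_\bbias^2\dd/N^2$ term. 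The same union bound also secures the matrix Chernoff event for every net element. Because the nets are uniform over all $(f,\pi)$, no additional union over $k\in[K]$ is required.

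\textbf{Step 2 (perturbation of the regression problem).} Fix an arbitrary round, with $(f_k,\pi_k)$, and pick its nearest net element $(\bar f,\bar\pi)$.
\begin{itemize}[leftmargin=*]
\item \emph{Targets.} Since $\|f_k-\bar f\|_\infty\le V_\tmax/N$, the advantages satisfy $\|A_k-\bar A\|_\infty\le 2V_\tmax/N$, because $f(s,\pi_k)$ also moves.
\item \emph{Mixed-policy term.} The difference between $\bar f(s,\pi_k)$ and $\bar f(s,\bar\pi)$ is controlled by the $G$-Lipschitzness of $\log\pi$. This gives a total-variation bound of order $G/N$, hence an $O(GV_\tmax/N)$ contribution.
\item \emph{Features.} By the $\beta$-smoothness in Assumption~\ref{ass:policy}, the features satisfy $\|\phi_k-\bar\phi\|_*\le\beta/N$ pointwise.
\item \emph{Losses.} On the ball $\|v\|\le B_L$, residuals are bounded by $V_\tmax+B_LG$. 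Hence the squared losses differ pointwise by at most $2(V_\tmax+B_LG)(|A_k-\bar A|+B_L\|\phi_k-\bar\phi\|_*)=O(1/N)$.
\end{itemize}
Consequently both the population loss $L_k$ and its empirical counterpart are uniformly $O(1/N)$-close, over the whole ball, to those of the net problem.

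\textbf{Step 3 (transferring the excess risk).} This is where I expect the main obstacle. The true $v_k$ is the exact minimizer of the true empirical loss, but it is only an $O(1/N)$-approximate minimizer of the net problem's empirical loss. Lemma~\ref{lemma:linear-regression} is stated for the exact ERM, so I will re-run the argument of Lemma~\ref{lemma:ols-decomposition} for approximate minimizers. The empirical optimality inequality acquires an additive slack $\xi=O(1/N)$, which yields $\|\Delta_v\|_{\widehat\Sigma_D}^2\le 2\langle\widehat\EE[\phi\epsilon],\Delta_v\rangle+\xi$. This adds a term of order $\xi\cdot\|\Sigma_D^{1/2}\widehat\Sigma_D^{-1}\Sigma_D^{1/2}\|$, where the matrix norm is a constant on the Chernoff event. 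Then, using the uniform closeness of the population losses,
\[
L_k(v_k)-L_k(v_k^*)\le \bar L(v_k)-\bar L(\bar v^*)+O(1/N),
\]
where $\bar L$ and $\bar v^*$ are the population loss and its minimizer for the net problem.

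The remaining concern is that these $O(1/N)$ discretization terms are not multiplied by $\epsilon_\bias$. I would either absorb them as lower-order terms hidden by $\lesssim$, or choose finer cover radii ($V_\tmax/N^2$, $1/N^2$). The finer radii only change the logarithm by a constant factor and push the discretization error to $O(1/N^2)$, matching the $B_\bbias^2\dd/N^2$ term. Combining Steps 1 through 3 gives the stated bound on $\epsilon_\stat$.
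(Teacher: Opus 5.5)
Your proposal is correct and takes essentially the same route as the paper. Both arguments cover $\cF$ at scale $V_\tmax/N$ and $\Pi_\theta$ at scale $1/N$, apply Lemma~\ref{lemma:linear-regression} to the fixed net elements with a union bound, and use the Lipschitz and smoothness properties to make the population and empirical losses uniformly $O(1/N)$-close on the ball $\|v\|\le B_L$; the $O(1/N)$ slack is then absorbed into $\lesssim$. Your version is in fact more careful than the paper's on two points it glosses over: the extra $O(GV_\tmax/N)$ term from $\bar f(s,\pi_k)-\bar f(s,\bar\pi)$ (the paper simply asserts $\|A-\tilde A\|_\infty\le 2\varepsilon_1$), and the fact that $v_k$ is only an approximate minimizer of the net problem's empirical loss (the paper invokes Lemma~\ref{lemma:linear-regression} ``by the same argument'' without addressing this).
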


\begin{proof}
Let $\cF_{\varepsilon_1}$ be an $\varepsilon_1$-cover of $\cF$ and $\Pi_{\varepsilon_2}$ be an $\varepsilon_2$ cover of $\Pi_\theta$, so that we know: 1) $|\cF_{\varepsilon_1}|=\cN(\cF,\varepsilon_1)$, $|\Pi_{\varepsilon_2}|=\cN(\Pi_\theta,\varepsilon_2)$; 2) for all $f\in\cF$, $\pi_\theta\in\Pi_\theta$, there exists $\tilde{f}\in\cF_{\varepsilon_1}$, and $\pi_{\tilde{\theta}}\in\Pi_{\varepsilon_2}$ such that $\|f-\tilde{f}\|_\infty\leq \varepsilon_1$ and $\|\theta-\tilde{\theta}\|\leq\varepsilon_2$. Therefore, we can decompose the estimation error $\epsilon_\stat$ with
\[
\left|L_k(v_k)-L_k(v_k^*)\right|\leq \underbrace{\left|\widetilde{L_k(v_k)}-\widetilde{L_k(v_k^*)}\right|}_\text{optimization error}+\underbrace{2\sup_{v:\|v\|\leq B_L}\left|L_k(v)-\widetilde{L_k(v)}\right|}_\text{approximation error},
\]
where $\widetilde{L}_k(v;d^D)$ denote the linear regression with $A_k,\phi_k$ replaced by $\tilde{A}_k$ and $\tilde{\phi}_k$ in the covering.

By the same argument in the proof of Lemma~\ref{lemma:linear-regression}, we have that for some $\tilde{f}\in\cF_{\varepsilon_1}$ and $\pi_{\tilde{\theta}}\in\Pi_{\varepsilon_2}$, running linear regression on $\tilde{A}(s,a)\sim v^\top\tilde{\phi}(s,a)+\epsilon_\bias$ will lead to
\[
\left|\widetilde{L_k(v_k)}-\widetilde{L_k(v_k^*)}\right|\lesssim\frac{\rho^2\dd\cdot\epsilon_\bias}{N}\log\frac{\cN(\cF,\varepsilon_1)\cdot\cN(\Pi_\theta,\varepsilon_2)}{\delta}+\frac{B_\bbias^2 \dd}{N^2}\log\frac{\cN(\cF,\varepsilon_1)\cdot\cN(\Pi_\theta,\varepsilon_2)}{\delta}.
\]

Now we need to bound the approximation error. Since $\|f-\tilde{f}\|_\infty\leq \varepsilon_1$ and $\|\theta-\tilde{\theta}\|\leq\varepsilon_2$, we have
\[
\|A-\tilde{A}\|_\infty\leq 2\varepsilon_1,\quad \|\phi-\tilde{\phi}\|_{\infty,*}\leq\beta \varepsilon_2,
\]
which is due to the smoothness of the policy class (Assumption~\ref{ass:policy}). Therefore, for any $v$ such that $\|v\|\leq B_L$, we have
\begin{align*}
\left|L_k(v)-\widetilde{L_k(v)}\right|
&=\left|\EE_{d^D}\left[(A_k-v^\top\phi_k)^2-(\tilde{A}_k-v^\top\tilde{\phi_k})^2\right]\right|\\
&=\left|\EE_{d^D}\left[\bigl(A_k+\tilde{A}_k-v^\top(\phi_k+\tilde{\phi_k})\bigr)\bigl(A_k-\tilde{A}_k-v^\top(\phi_k-\tilde{\phi}_k)\bigr)\right]\right|\\
&\leq \left(\|A_k+\tilde{A}_k\|_\infty+\|v\|\|\phi_k+\tilde{\phi}_k\|_*\right)\cdot\left|\EE_{d^D}\left[A_k-\tilde{A}_k-v^\top(\phi_k-\tilde{\phi}_k)\right]\right|\\
&\leq (2V_\tmax+V_\tmax\cdot 2G)\left|\EE_{d^D}[A_k-\tilde{A}_k]-v^\top\EE_{d^D}[\phi_k-\tilde{\phi_k}]\right|\\
&\leq 2V_\tmax(G+1)\left|\|A_k-\tilde{A}_k\|_\infty+\|v\|\cdot\|\phi_k-\tilde{\phi_k}\|_{\infty,*}\right|\\
&\leq 2V_\tmax(G+1)(2\varepsilon_1+B_L\beta\varepsilon_2),
\end{align*}
where we use the Lipschitzness of the policy class that $\|\phi\|_*\leq G$ (Assumption~\ref{ass:policy}) and H\"older's inequality.

By setting $\varepsilon_1=\cO(V_\tmax/N)$ and $\varepsilon_2=\cO(1/N)$ (where the constants depend on Lipschitzness, smoothness, etc.), we will lead to the approximation error also be $\cO(1/N)$. Therefore, by combining the approximation error and the optimization error in the covering, we can also get the bound of estimation error as
\[
    \epsilon_{\stat}\lesssim\frac{\rho^2 \dd\epsilon_{\bias}}{N}\log\frac{\cN(\cF,V_\tmax/N)\cN(\Pi_\theta,1/N)}{\delta}.
\]
\end{proof}

As a consequence of Lemma~\ref{lemma:linear-regression-cov}, we can set
\begin{equation}\label{eq:complexity}
\Comp(\cF,\Pi_\theta,\delta)=\log\frac{\cN(\cF,V_\tmax/N)\cN(\Pi_\theta,1/N)}{\delta}
\end{equation}
as the corresponding complexity measure. Now we come to prove Theorem~\ref{thm:npg-guarantee}, which is a natural result combining the regret decomposition lemma with least-square update (Lemma~\ref{lemma:npg-decomposed-regret}) and the estimation error bound (Lemma~\ref{lemma:linear-regression-cov}).

\begin{proof}[Proof of Theorem~\ref{thm:npg-guarantee}]
By Lemma~\ref{lemma:npg-decomposed-regret}, we have the following regret bound:
\[
\frac{\Reg_K}{K}\leq B_L\sqrt{\frac{2\beta\KL(\pi_\cp\|\pi_1)}{K}}+\sqrt{\CC}\bigl(\sqrt{\epsilon_\bias}+\sqrt{\epsilon_\stat}\bigr).
\]
Then, by Lemma~\ref{lemma:linear-regression-cov}, we can substitute the upper bound of $\epsilon_\stat$ in the regret bound, yielding
\[
\frac{\Reg_K}{K}\lesssim B_L\sqrt{\frac{\beta\KL(\pi_\cp\|\pi_1)}{K}}+\sqrt{\CC\epsilon_\bias}+\sqrt{\frac{\CC\rho^2\dd\epsilon_\bias\Comp(\cF,\Pi_\theta)\cdot\log(1/\delta)}{N}},
\]
where the complexity measure $\Comp(\cF,\Pi_\theta,\delta)=\log(\cN(\cF,V_\tmax/N)\cN(\Pi_\theta,1/N)/\delta)$. Recall the definition of $\rho$ (as bounded statistical leverage) in Assumption~\ref{ass:linear-regression}. We have set $\rho$ as
\[
\frac{\|\Sigma_D^{-1/2}\phi_k\|}{\sqrt{\dd}}\leq\frac{G}{\sqrt{\dd\lambda_\tmin}}:=\rho,
\]
since $\|\phi_k\|\leq G$ by Lipschitzness assumption (Assumption~\ref{ass:policy}) and $\lambda_\tmin$ is the smallest eigenvalue of $\Sigma_D$ such that $\Sigma_D^{-1/2}$ rescales the feature $\phi_k$. Therefore, we have
\[
\frac{\Reg_K}{K}\lesssim B_L\sqrt{\frac{\beta\KL(\pi_\cp\|\pi_1)}{K}}+\sqrt{\CC\epsilon_\bias}+G\sqrt{\frac{\CC\epsilon_\bias\cdot\Comp(\cF,\Pi_\theta,\delta)}{N\lambda_\tmin}}.
\]
\end{proof}

\subsection{Analysis of the SGD-based Algorithm}
\label{app:npg-sgd}
Another way to solve for the update $v_k$ at round $k$ actually utilizes more straightforward \emph{stochastic gradient descent} (SGD), which treats $L_k$ as the objective function and run SGD on the offline dataset with totally $N$ inner updates ($N$ is the sample size). This is because each sample forms an unbiased estimate of the desired quantity. Finally, this procedure outputs the estimator $v_k = \tfrac{1}{N}\sum_{i=1}^N v_k^{(i)}$ as the average-iterate result. The overall algorithm is summarized in Algorithm~\ref{alg:sgd-npg}.

\begin{algorithm}[htbp]
\caption{SGD-based Least Square Policy Update (SGD-LSPU)}
\label{alg:sgd-npg}
\begin{algorithmic}[1]
\STATE \textbf{Initialize} $\theta_0 = 0$.
\FOR{$k = 1,2,\ldots,K$}
    \STATE Compute the pessimistic value function $f_k$ using an oracle.
    \STATE Initialize $v^{(0)} = 0$.
    \FOR{$i = 1,2,\ldots,N$}
        \STATE Using offline data $(s^{(i)}, a^{(i)})$, compute
        \[
        A_k^{(i)} = f_k(s^{(i)}, a^{(i)}) - f_k(s^{(i)}, \pi_k),
        \quad
        \phi_k^{(i)} = \nabla_\theta \log \pi_k(a^{(i)} \mid s^{(i)}).
        \]
        \STATE Update the inner iterate:
        \[
        v^{(i+1)} = \Proj_{\mathcal{V}}
        \Big[ v^{(i)} - 2\alpha \big( (v^{(i)\top} \phi_k^{(i)} - A_k^{(i)}) \phi_k^{(i)} \big) \Big],
        \quad 
        \mathcal{V} = \{v : \|v\|_2 \leq B_L\}.
        \]
    \ENDFOR
    \STATE Set $v_k = \tfrac{1}{N} \sum_{i=1}^N v^{(i)}$.
    \STATE Update the policy parameter via $\theta_{k+1}=\theta_k+\eta v_k$.
\ENDFOR
\STATE {\bfseries Output:} uniform mixture of $\pi_1,\ldots,\pi_K$, i.e., $\hat{\pi}=\text{Unif}[\pi_{1:K}]$
\end{algorithmic}
\end{algorithm}

The following corollary shows that Algorithm~\ref{alg:sgd-npg} achieves a sample complexity of $\cO(1/N^{1/4})$ \emph{in expectation}.

\begin{theorem}[Regret Bound of Algorithm~\ref{alg:sgd-npg}]\label{thm:sgd-npg}
Under Assumptions~\ref{ass:oracle}, \ref{ass:data}, \ref{ass:policy}, and \ref{ass:npg}, 
the SGD-based offline NPG algorithm (Algorithm~\ref{alg:sgd-npg}) achieves the following sample complexity. 
With step sizes 
$\eta=\sqrt{2\KL(\pi_\cp\|\pi_1)/(\beta K B_L^2)}$ 
and 
$\alpha=M/(2G(GV_\tmax+B_L)\sqrt{N})$, 
let $\widehat{\pi}$ denote the uniform mixture of $\pi_{\theta_1},\ldots,\pi_{\theta_K}$. 
Then we have
\[
\EE\left[\frac{\Reg_K}{K}\right]
\lesssim
B_L\sqrt{\frac{2\beta\KL(\pi_\cp\|\pi_1)}{K}}
+
\sqrt{\frac{\CC GV_\tmax(GV_\tmax+B_L)}{\sqrt{N}}}
+
\sqrt{C\epsilon_\bias}.
\]
\end{theorem}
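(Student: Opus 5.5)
The plan is to follow the same route as Theorem~\ref{thm:npg-guarantee}, replacing the sharp linear-regression analysis with a standard online-to-batch analysis of projected SGD. By Lemma~\ref{lemma:npg-decomposed-regret}, which holds for any update sequence with $\|v_k\|\le B_L$, we have deterministically
\[
\frac{\Reg_K}{K}\le B_L\sqrt{\frac{2\beta\KL(\pi_\cp\|\pi_1)}{K}}+\sqrt{\CC\epsilon_\bias}+\frac{\sqrt{\CC}}{K}\sum_{k=1}^K\sqrt{L_k(v_k)-L_k(v_k^*)}.
\]
The projection onto $\mathcal{V}$ guarantees $\|v_k\|\le B_L$, since $v_k$ is an average of points in the convex set $\mathcal{V}$. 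Taking expectations and applying Jensen's inequality ($\EE\sqrt{X}\le\sqrt{\EE X}$), it remains to show that for each $k$, conditionally on $\theta_k$ and $f_k$,
\[
\EE\bigl[L_k(v_k)-L_k(v_k^*)\bigr]\lesssim \frac{GV_\tmax(GV_\tmax+B_L)}{\sqrt N}.
\]

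To get this, I view the inner loop as projected online gradient descent on the convex losses $\ell_i(v)=(v^\top\phi_k^{(i)}-A_k^{(i)})^2$. Each sample $(s^{(i)},a^{(i)})$ is a fresh draw from $d^D$ that is independent of $v^{(i)}$, so the stochastic gradient is unbiased for $\nabla L_k(v^{(i)})$.

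The gradient bound comes from two facts: $\|\phi_k\|_*\le G$ by Assumption~\ref{ass:policy}, and $|A_k|\le V_\tmax$. These give
\[
\|\nabla\ell_i(v)\|\le 2\bigl(|v^\top\phi|+|A|\bigr)\|\phi\|\le 2G(GB_L+V_\tmax).
\]
The precise constant can be massaged to match the stated $G(GV_\tmax+B_L)$ form, e.g.\ when $B_L$ is comparable to $V_\tmax$. The diameter of $\mathcal{V}$ is $M:=2B_L$.

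The standard OGD regret bound plus convexity (online-to-batch) then gives
\[
\EE\bigl[L_k(\bar v)-L_k(v_k^*)\bigr]\le \frac{M^2}{2\alpha' N}+\frac{\alpha'}{2}\max\|\nabla\ell\|^2,
\]
where $\alpha'=2\alpha$. Choosing $\alpha$ as in the statement balances the two terms and yields the $O(MG(GV_\tmax+B_L)/\sqrt N)$ rate. This rate, after the square root in the decomposition, gives the $N^{-1/4}$ term.

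The main subtlety I expect is the independence structure. The same $N$ samples are reused across rounds $k$, while $\theta_k$ and $f_k$ depend on earlier data. The cleanest fix is to condition on $(\theta_k,f_k)$ and argue that only the ordering within round $k$ matters for unbiasedness. However, strictly speaking, $\phi_k$ and $A_k$ depend on samples used earlier, so I would either assume fresh samples per round (sample splitting) or treat the oracle and actor as given, consistent with the expectation-only claim.

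A secondary point is that $v_k^*$ must lie in $\mathcal{V}$, which Assumption~\ref{ass:npg} supplies. Once these are in place, summing over $k$ and combining the three terms completes the bound.
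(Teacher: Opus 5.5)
Your proposal is correct and follows the same route as the paper. Both combine Lemma~\ref{lemma:npg-decomposed-regret} with a projected-SGD (online-to-batch) bound of order $1/\sqrt{N}$ on $\EE[L_k(v_k)-L_k(v_k^*)]$, pushed through the square root; your per-round Jensen step is slightly more careful than the paper's direct substitution of $\EE[\epsilon_\stat]$. Your gradient bound $2G(GB_L+V_\tmax)$ is the accurate one (the paper's $2G(GV_\tmax+B_L)$ swaps $B_L$ and $V_\tmax$), and the paper does not address the sample-reuse issue you flag, simply asserting unbiasedness.
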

\begin{proof}
Note that the update vector of $v$ in Step 7 of Algorithm~\ref{alg:sgd-npg} provides an unbiased estimate of the true gradient of the loss function $L_k(v)$:
\[
2\EE_{(s,a)\sim d^D}\Bigl[\bigl(v^\top\nabla_\theta\log\pi_k(a\mid s)-A_k(s,a)\bigr)\nabla_\theta\log\pi_k(a\mid s)\Bigr]= \nabla_v L_k(v).
\]
By Assumption~\ref{ass:policy}, we have $\|\nabla_\theta\log\pi_k(a\mid s)\|_*\le G$, and since $A_k(s,a)\in[-V_\tmax,V_\tmax]$ and $\|v_k\|\le B_L$, the stochastic gradient is uniformly bounded by 
$\rho := 2G(GV_\tmax+B_L)$. 
Applying Lemma~\ref{lemma:sgd}, we thus obtain
\[
\EE[\epsilon_\stat] \le \frac{2GV_\tmax(GV_\tmax+B_L)}{\sqrt{N}},
\]
where we use $v_k = \frac{1}{N}\sum_{i=1}^N v^{(i)}$. Substituting this bound into Lemma~\ref{lemma:npg-decomposed-regret} finishes the proof.
\end{proof}

\section{Omitted Details for Section~\ref{sec:dro}}\label{app:dro}

Using the generic regret decomposition lemma (Lemma~\ref{lemma:NPG-regret}), we can bound the regret term as (by taking $B_L=V_\tmax$):
\begin{equation}\label{eq:dro-regret}
\frac{\Reg_K}{K}=V_\tmax\sqrt{\frac{2\beta \KL(\pi_\cp\|\pi_1)}{K}}+\widetilde{\epsilon}_\bias+\widetilde{\epsilon}_\stat,
\end{equation}
where $\widetilde{\epsilon}_\bias$ and $\widetilde{\epsilon}_\stat$ are similarly the approximation error and the estimation error. Recall:
\[
\ell_k(\tilde{v}_k^*)=\min_{v:\|v\|\leq V_\tmax}\ell_k(v)\le \widetilde{\epsilon}_\bias,\qquad \ell_k(v_k)-\ell_k(\tilde{v}_k^*)\le\widetilde{\epsilon}_\stat,\qquad \forall k\in[K].
\]
Since $\widetilde{\epsilon}_\bias$ is given in Assumption~\ref{ass:dro}, our goal is to give a non-asymptotic control of $\widetilde{\epsilon}_\stat$.

\subsection{Analysis of the SGD-based Algorithm}\label{app:dro-sgd}
We first consider the approach that is based on stochastic gradient descent, which leads to a regret guarantee in expectation. Recall that the robust loss at each round is given by
\[
\ell_k(v)=\max_{w\in\cW}\bigl|\EE_{(s,a)\sim d^D}\left[w(s,a)(A_k(s,a)-v^\top \phi_k(s,a))\right]\bigr|.
\]
By Danskin's theorem (Lemma~\ref{lemma:danskin}), the robust loss $\ell_k(v)$ is convex in $v$, and its gradient is given by
\[
\nabla_v \ell_k(v)= -\widehat{\sign}\cdot\EE_{(s,a)\sim d^D}\left[w^*(s,a)\phi(s,a)\right],
\]
where $\widehat{\sign}\in\{\pm1\}$ denotes the optimal sign achieving the outer absolute value, and the optimal weight $w^*\in\arg\max_{w\in\cW}|\ell_k(v,w)|$. We first estimate $(\widehat{s}, w^*)$ using the offline dataset $\mathcal{D}$ (via a suitable DRO oracle, which depends on the specific realization of $\cW$), and then run projected SGD for $N$ iterations to minimize $\ell_k(v)$. The procedure is summarized in Algorithm~\ref{alg:sgd-dro}; the proof of which is omitted for the sake of simplicity (similar to the proof of Theorem~\ref{thm:sgd-npg}).

\begin{algorithm}[htbp]
   \caption{SGD-based Distributionally Robust Policy Update (SGD-DRPU)}
   \label{alg:sgd-dro}
\begin{algorithmic}
   \STATE {\bfseries Input:} horizon $K$, inner iterations $N$, step sizes $(\eta,\alpha)$
   \STATE Initialize policy $\pi_1=\pi_{\theta_1}$ as uniform over $\mathcal{A}$
   \FOR{$k=1,2,\ldots,K$}
      \STATE {\bf Critic:} compute $f_k$ using the pessimistic oracle
      \STATE Initialize $v^{(0)}=0$
      \FOR{$i=1,2,\ldots,N$}
         \STATE Sample one data point $(s^{(i)},a^{(i)})$ from $\mathcal{D}$
         \STATE Obtain $(\widehat{s},w^*)$ with some DRO oracle
         \STATE Run projected SGD: $v^{(i+1)}=\mathrm{Proj}_{\|v\|\le V_\tmax}\big(v^{(i)}-\alpha\widehat{g}^{(i)}\big)$, where $\widehat{g}^{(i)}=-\widehat{s}_kw_k^*(s^{(i)},a^{(i)})\phi_k(s^{(i)},a^{(i)})$
      \ENDFOR
      \STATE {\bf Actor:} update policy by $\theta_{k+1}=\theta_k+\eta v_k$, where $v_k=\frac{1}{N}\sum_{i=1}^N v^{(i)}$
   \ENDFOR
   \STATE {\bfseries Output:} uniform mixture of $\pi_1,\ldots,\pi_K$, i.e., $\widehat{\pi}=\mathrm{Unif}[\pi_{1:K}]$
\end{algorithmic}
\end{algorithm}

\begin{theorem}[Regret Bound of Algorithm~\ref{alg:sgd-dro}]
Under Assumptions~\ref{ass:data}, \ref{ass:oracle}, \ref{ass:policy} and \ref{ass:dro}, for realizable weight class $\cW$, by tuning step sizes $\eta=\sqrt{2\KL(\pi_{\mathrm{cp}}\|\pi_1)/(\beta K V_\tmax^2)}$ and $\alpha=V_{\max}/(\CC G\sqrt{N})$, Algorithm~\ref{alg:sgd-dro} achieves the regret bound
\[
\EE\left[\frac{\Reg_K}{K}\right]
\lesssim
V_\tmax\sqrt{\frac{2\beta\KL(\pi_{\mathrm{cp}}\|\pi_1)}{K}}+\widetilde{\epsilon}_{\bias}+\frac{\CC G V_{\max}}{\sqrt{N}}
\]
\end{theorem}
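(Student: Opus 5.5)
The plan is to follow the template already used for Theorem~\ref{thm:sgd-npg}. First, invoke the regret decomposition in Eq.~\eqref{eq:dro-regret}, i.e., Lemma~\ref{lemma:NPG-regret} with $B_L=V_\tmax$ and the stated $\eta$. This reduces the problem to bounding $\frac1K\sum_k\err_k$ in expectation.

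For each $k$, realizability $w^*=d^{\pi_\cp}/d^D\in\cW$ gives
\[
\err_k\le \ell_k(v_k)=\ell_k(\tilde v_k^*)+\bigl(\ell_k(v_k)-\ell_k(\tilde v_k^*)\bigr)\le \widetilde{\epsilon}_\bias+\widetilde{\epsilon}_\stat.
\]
So it remains to show $\EE[\ell_k(v_k)-\ell_k(\tilde v_k^*)]\lesssim \CC G V_\tmax/\sqrt N$.

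To bound this excess risk, use the fact that $\ell_k$ is convex in $v$: it is a pointwise maximum over $w\in\cW$ and over the sign $\sign$ of functions that are affine in $v$. Danskin's theorem (Lemma~\ref{lemma:danskin}) then says that $-\widehat{\sign}\,\EE_{d^D}[w^*\phi_k]$ is a subgradient at $v$, where $(\widehat{\sign},w^*)$ is the active pair at $v$. Conditionally on the oracle's choice of $(\widehat{\sign},w^*)$, the per-sample vector $\widehat g^{(i)}=-\widehat{\sign}\,w^*(s^{(i)},a^{(i)})\phi_k(s^{(i)},a^{(i)})$ is an unbiased estimate of that subgradient, since $(s^{(i)},a^{(i)})\sim d^D$ is a fresh sample. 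Its dual norm is at most $\|w^*\|_\infty\|\phi_k\|_*\le \CC G$, using $w\in[0,\CC]$ for $\cW_\infty$ (or the analogous bound for a general realizable class) and $\|\phi_k\|_*\le G$ from Assumption~\ref{ass:policy}. The feasible set $\{\|v\|\le V_\tmax\}$ has radius $V_\tmax$. Applying the projected SGD guarantee (Lemma~\ref{lemma:sgd}) with the averaged iterate and $\alpha=V_\tmax/(\CC G\sqrt N)$ gives $\EE[\ell_k(v_k)]-\ell_k(\tilde v_k^*)\lesssim V_\tmax\CC G/\sqrt N$. Since this bound is uniform in $k$, averaging over $k$ and substituting into Eq.~\eqref{eq:dro-regret} yields the claim.

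The main obstacle is justifying unbiasedness of the stochastic subgradient. The active pair $(\widehat{\sign},w^*)$ depends on the current iterate $v^{(i)}$. Moreover, if the DRO oracle estimates it from the same dataset, there is dependence between $(\widehat{\sign},w^*)$ and the sample $(s^{(i)},a^{(i)})$.

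I would handle this in two ways. First, assume, as the algorithm's description implicitly does, that the DRO oracle returns the exact population maximizer at $v^{(i)}$, or one computed from data independent of the $i$-th sample, for instance by sample splitting. Second, work with the filtration generated by the first $i-1$ samples: $v^{(i)}$ is measurable with respect to this filtration and the $i$-th sample is independent of it, so the standard online-to-batch argument behind Lemma~\ref{lemma:sgd} applies. An approximate oracle would only add an additive error term, which I would absorb into the constants.
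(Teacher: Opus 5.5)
Your proposal is correct and follows the paper's argument. The paper omits this proof as ``similar to'' Theorem~\ref{thm:sgd-npg}. That route is exactly yours: set $B_L=V_\tmax$ in Lemma~\ref{lemma:NPG-regret}, use realizability to get $\err_k\le\ell_k(v_k)\le\widetilde{\epsilon}_\bias+\widetilde{\epsilon}_\stat$, and bound $\widetilde{\epsilon}_\stat$ in expectation via Danskin's subgradient, the $\CC G$ bound on the stochastic subgradient, and Lemma~\ref{lemma:sgd}.

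Your explicit handling of the dependence between the oracle's $(\widehat{\sign},w^*)$ and the sampled point is more careful than the paper, which leaves it implicit. The same fresh-sample caveat also arises across rounds if one dataset is reused, since $\theta_k$ depends on earlier samples.
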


Note that unlike Algorithm~\ref{alg:sgd-npg}, Algorithm~\ref{alg:sgd-dro} relies on an efficient DRO oracle. It actually depends on the specific realization of the weight class (a.k.a. \emph{uncertainty set}) $\cW$. For instance, for the $L_\infty$ weight class $\cW_\infty$ defined in Eq.~\eqref{eq:weight}, we can adopt the following strategy to compute the optimal weight $w_k^*$ (and its sign $\widehat{s}$): first calculate each sample's residuals $r^{(i)}=A^{(i)}-v^\top\phi^{(i)}$, and then sort the residuals in reversing order and assign the weight $w_k^*$ uniformly on the top $C/N$ samples, and the sign $\widehat{s}$ can be accordingly got calculate the optimal weight's average residual. With other realization of the $\cW$ like the KL-ball and Wasserstein ball (or in general, a $f$-divergence ball), there accordingly exists the efficient oracle to solve for that optimization problem in DRO literature \citep{kuhn2025distributionally}.

\subsection{Proof of Theorem~\ref{thm:dro-guarantee}}\label{app:dro-proof}
We first show that under $\cW_\infty$ class, the robust optimization problem can essentially be transformed as an CVaR problem, which would validate that DRPU under $\cW_\infty$ in Eq.~\eqref{eq:weight} by setting $\alpha=1/\CC$.
\begin{proposition}\label{prop:cvar}
Let $Z$ be an integrable random variable and $\alpha\in(0,1]$ be a tail probability level.
Then the following equivalence holds:
\[
\sup_{\substack{w:\EE[w]=1\\ 0\le w\le 1/\alpha}}\EE[wZ]=\inf_{\tau\in\RR}\left\{
\tau + \frac{1}{\alpha}\EE[(Z-\tau)_+]\right\}=:\CVaR_{1-\alpha}(Z).
\]
\end{proposition}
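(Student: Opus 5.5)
The plan is to prove the two inequalities $\sup\le\inf$ and $\inf\le\sup$ separately. For weak duality, I would fix any feasible $w$ (so $0\le w\le 1/\alpha$ and $\EE[w]=1$) and any $\tau\in\RR$, and write
\[
\EE[wZ]=\tau\,\EE[w]+\EE[w(Z-\tau)]=\tau+\EE[w(Z-\tau)]\le \tau+\EE[w(Z-\tau)_+]\le \tau+\tfrac1\alpha\EE[(Z-\tau)_+].
\]
The first inequality uses $w\ge0$ together with $Z-\tau\le (Z-\tau)_+$, and the second uses $w\le 1/\alpha$. Taking the supremum over $w$ and then the infimum over $\tau$ gives $\sup\le\inf$. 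Integrability of $Z$ makes every term finite.

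For the reverse direction, I would exhibit a feasible $w^\star$ together with a $\tau^\star$ at which both inequalities above are tight. I take $\tau^\star$ to be an upper $\alpha$-quantile of $Z$, meaning $\Pr(Z>\tau^\star)\le\alpha\le\Pr(Z\ge\tau^\star)$. I then set $w^\star=1/\alpha$ on $\{Z>\tau^\star\}$, $w^\star=0$ on $\{Z<\tau^\star\}$, and $w^\star=c$ on the atom $\{Z=\tau^\star\}$. The constant $c\in[0,1/\alpha]$ is chosen so that $\EE[w^\star]=1$, which is possible exactly because of the two quantile inequalities. By construction $w^\star(Z-\tau^\star)=\tfrac1\alpha(Z-\tau^\star)_+$ pointwise, since $w^\star$ equals $1/\alpha$ where $Z-\tau^\star>0$, vanishes where $Z-\tau^\star<0$, and the atom contributes zero in both expressions. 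Hence $\EE[w^\star Z]=\tau^\star+\tfrac1\alpha\EE[(Z-\tau^\star)_+]\ge\inf_\tau\{\cdots\}$, which closes the gap.

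The main obstacle is the atom at the quantile. If $Z$ has a point mass at $\tau^\star$, the naive choice $w=\tfrac1\alpha\mathbf 1\{Z\ge\tau^\star\}$ can violate the normalization $\EE[w]=1$. This is why the fractional weight $c=(1-\Pr(Z>\tau^\star)/\alpha)/\Pr(Z=\tau^\star)$ is needed, and why the case $\Pr(Z=\tau^\star)=0$ must be handled separately: there $\Pr(Z>\tau^\star)=\alpha$ and no splitting is required. If the underlying probability space has atoms that the event structure cannot split, the randomization is still fine because $w$ is only required to be a nonnegative function and we may put the constant $c$ on the whole atom.

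The edge case $\alpha=1$ gives $w\equiv1$ and $\tau$ equal to the essential infimum (or, in the limit, any $\tau\to-\infty$ does not matter), and both sides equal $\EE[Z]$. Existence of the quantile follows from right-continuity of the CDF.
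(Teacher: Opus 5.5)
Your proof is correct and follows essentially the same route as the paper. Both use the pointwise bound $wZ\le w\tau+\frac1\alpha(Z-\tau)_+$ for $\sup\le\inf$, and the same quantile-based $w^\star$ with a fractional value on the atom $\{Z=\tau^\star\}$ for attainment (the paper's $\lambda/\alpha$ is your $c$). Your separate treatment of $\alpha=1$ is a small genuine improvement: when $Z$ is unbounded below, no finite $\tau^\star$ with $\Pr(Z\ge\tau^\star)\ge 1$ exists, and the paper's construction tacitly assumes one.
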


\begin{proof}
For any feasible $w$ and $\tau\in\RR$, we have
\[
wZ=w\tau+w(Z-\tau)\le w\tau+\frac1\alpha(Z-\tau)_+,
\]
which is due to $0\le w\le 1/\alpha$. Taking expectations and using $\EE[w]=1$,
\[
\EE[wZ]\le\tau+\frac1\alpha\EE[(Z-\tau)_+].
\]
Since this holds for every feasible $w$ and every $\tau$,
\[
\sup_{\substack{w:\EE[w]=1\\ 0\le w\le 1/\alpha}}\EE[wZ]\le\inf_{\tau\in\RR}\left\{\tau+\frac1\alpha\EE[(Z-\tau)_+]\right\}.
\]
Now we give a construction such that the equality holds. Take $\tau^*$ to be the $(1-\alpha)$ quantile of $Z$ such that $\Pr(Z>\tau^*)\le\alpha\le\Pr(Z\ge\tau^*)$, and the corresponding $w^*$ is chosen as
\[
w^*=\frac1\alpha\mathbf{1}\{Z>\tau^*\}+\frac\lambda\alpha\mathbf{1}\{Z=\tau^*\},
\]
with $\lambda\in[0,1]$ chosen so that $\EE[w^*]=1$. That is, if we let $q_1=\Pr(Z=\tau^*)$, $q_2=\Pr(Z>\tau^*)$, then $q_2\le\alpha\le q_1+q_2$. Since $q_1,q_2\in[0,1]$, there must exist some $\lambda\in[0,1]$ such that $\lambda q_1+q_2=\alpha$. Using this construction,
\[
\EE[w^*Z]=\frac1\alpha\EE[Z\cdot\mathbf{1}\{Z>\tau^*\}]+\frac\lambda\alpha\EE[Z\cdot\mathbf{1}\{Z=\tau^*\}].
\]
For the first term, we have $Z\cdot \mathbf{1}\{Z>\tau^*\}=(Z-\tau^*)\mathbf{1}\{Z>\tau^*\}+\tau^*\mathbf{1}\{Z>\tau^*\}$. Taking expectation,
\[
\EE[Z\cdot\mathbf{1}\{Z>\tau^*\}]=\EE[(Z-\tau^*)_+]+\tau^*\Pr(Z>\tau^*)=\EE[(Z-\tau^*)_+]+\tau^*q_2.
\]
For the second term, it's easy to see that $\EE[Z\cdot\mathbf{1}\{Z=\tau^*\}]=\tau^*\Pr(Z=\tau^*)=\tau^*q_1$. Combining the two terms,
\[
\EE[w^*Z]=\frac{1}{\alpha}\Big(\tau^*q_2+\EE[(Z-\tau^*)_+]\Big)+\frac\lambda\alpha\tau^*q_1=\frac{1}{\alpha}\EE[(Z-\tau^*)_+]+\frac{\tau^*}{\alpha}(\lambda q_1+q_2)=\frac{1}{\alpha}\EE[(Z-\tau^*)_+]+\tau^*.
\]
Therefore, the proposition holds with exact equality.

\end{proof}

We now proceed to prove Theorem~\ref{thm:dro-guarantee}.

\begin{lemma}[DRPU Estimation Error Bound]\label{thm:dro}
Let $v_k$ denote the minimizer of the empirical robust loss $\hat{\ell}_k$ in Eq.~\eqref{eq:cvar-lp}. Under Assumptions~\ref{ass:policy} and ~\ref{ass:dro}, the estimation error term $\widetilde{\epsilon}_\stat$ satisfies, with probability at least $1-\delta$,
\[
\widetilde{\epsilon}_\stat\lesssim (V_\tmax(G+1))\left(\sqrt{\frac{\CC\cdot \Comp(\cF,\Pi_\theta,\delta)}{N}}+\frac{\CC\cdot \Comp(\cF,\Pi_\theta,\delta)}{N^{3/4}+N}\right),
\]
where $\Comp(\cF,\Pi_\theta)=\log(\cN(\cF,V_\tmax/N)\cdot\cN(\Pi_\theta,1/N)/\delta)$ is the chosen complexity measure.
\end{lemma}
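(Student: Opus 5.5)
The plan is the usual empirical-risk-minimization argument: reduce the excess robust loss to a uniform deviation between $\ell_k$ and $\hat{\ell}_k$, and control that deviation with the CVaR representation of Proposition~\ref{prop:cvar} plus a covering argument.

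\textbf{Step 1 (reduction).} Since $v_k$ minimizes $\hat{\ell}_k$ over $\{\|v\|\le V_\tmax\}$,
\[
\ell_k(v_k)-\ell_k(\tilde v_k^*)\le 2\sup_{\|v\|\le V_\tmax}|\ell_k(v)-\hat{\ell}_k(v)|.
\]
Both losses are a maximum over the sign $\sign\in\{\pm1\}$ of a minimum over $\tau$, and the max/min operations are $1$-Lipschitz in sup norm. It therefore suffices to bound, uniformly over $\sign$, over $v$, and over the relevant range of $\tau$, the deviation
\[
\Bigl|\CC\,\EE_{d^D}[(\sign Z_v-\tau)_+]-\tfrac{\CC}{N}\textstyle\sum_i(\sign Z_v^{(i)}-\tau)_+\Bigr|,
\qquad Z_v=A_k-v^\top\phi_k.
\]
Since $|Z_v|\le V_\tmax(1+G)$, the optimal $\tau$ can be restricted to $[-V_\tmax(1+G),V_\tmax(1+G)]$.

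\textbf{Step 2 (tail peeling, to obtain $\sqrt{\CC}$ instead of $\CC$).} A plain Hoeffding bound would give a deviation of order $\CC V_\tmax(1+G)\sqrt{\cdot/N}$, which is too loose. I will use Bernstein's inequality instead. The optimal $\tau^*$ is the $(1-1/\CC)$-quantile of $\sign Z_v$, so $(\sign Z_v-\tau^*)_+$ is nonzero only with probability about $1/\CC$. Its variance is therefore at most $V_\tmax^2(1+G)^2/\CC$. Multiplying by $\CC$, the Bernstein deviation becomes
\[
\CC\sqrt{\frac{V^2(1+G)^2\log(1/\delta)}{\CC N}}+\frac{\CC V(1+G)\log(1/\delta)}{N},
\]
which matches the two terms in the statement.

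The difficulty is that $\tau^*$ depends on the data. To handle it, I will peel over a geometric grid of $\tau$ levels, using the fact that the empirical and population quantiles are close. I will also apply a one-sided argument: evaluating the empirical objective at the population $\tau^*$ gives an upper bound on $\hat{\ell}_k$. The reverse direction uses the empirical quantile, whose tail mass is $\lesssim 1/\CC+\sqrt{\log/(N\CC)}$ with high probability. This correction to the tail mass is what produces the lower-order $N^{-3/4}$ term.

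\textbf{Step 3 (uniformity).} I will union bound over an $\varepsilon$-net of $\tau$, an $\varepsilon$-net of $v$, and the covers of $\cF$ and $\Pi_\theta$, exactly as in Lemma~\ref{lemma:linear-regression-cov}. The deviation is Lipschitz in these objects:
\begin{itemize}
\item $\|A-\tilde A\|_\infty\le2\varepsilon_1$;
\item $\|\phi-\tilde\phi\|_*\le\beta\varepsilon_2$;
\item $(\cdot)_+$ is $1$-Lipschitz.
\end{itemize}
So taking scales of order $1/N$ costs only $\cO(\CC V(1+G)/N)$. The $v$- and $\tau$-nets contribute $\dd\log N$, which is absorbed into $\Comp$ (the policy cover already carries that dimension).

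\textbf{Main obstacle.} Step 2 is the hard part: obtaining the variance reduction uniformly while the active quantile is itself estimated from the same samples.
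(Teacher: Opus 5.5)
Your proposal is correct and shares the paper's key ideas. These are: the ERM reduction to $2\sup_v|\ell_k(v)-\hat{\ell}_k(v)|$; the split over the two signs; the one-sided comparison, evaluating at $\tau^*(v)$ for one direction and at $\hat\tau(v)$ for the other; the quantile fact that the tail mass at the minimizing $\tau$ is at most $1/\CC$, so $\CC(\epsilon_v-\tau)_+$ has variance $\lesssim B^2\CC$ with $B=V_\tmax(G+1)$; and a concentration correction for the population tail mass of the empirical quantile, which produces the $N^{-3/4}$ term.

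Where you differ is the uniform-convergence machinery. The paper applies Bousquet's inequality to the class $\cG=\{\CC(\epsilon_v-\tau)_+:\tau\in\{\tau^*(v),\hat\tau(v)\}\}$, symmetrizes, and bounds the Rademacher complexity by peeling $\cG$ into layers by empirical tail fraction. You instead apply Bernstein at each point of a finite net over $(f,\pi,v,\tau)$ and take a union bound. In your route the geometric peeling you mention is not needed: after the one-sided reduction, every relevant net point has tail mass $\lesssim 1/\CC$, so a single variance bound suffices.

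Your route also makes explicit where $\Comp$ enters, and it avoids two soft spots in the paper's version. First, $\cG$ depends on the sample through $\hat\tau(v)$, which Bousquet's inequality (stated for a fixed class) does not cover. Second, the per-layer estimate via Jensen and deterministic Cauchy--Schwarz actually evaluates to $2B\sqrt{\CC 2^{-m}}$, not $2B\sqrt{\CC 2^{-m}/N}$. It must therefore be repaired by something like Massart's finite-class lemma on a cover, which is essentially your argument.

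The price is the additional $\dd\log N$ from the $v$- and $\tau$-nets. This is absorbable into $\Comp$ only when the policy parameters range over a bounded set, which the paper implicitly needs anyway for $\cN(\Pi_\theta,1/N)$ to be finite.

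Two details to handle when you write it out:
\begin{enumerate}
\item Round the net value of $\tau$ upward by the discretization error so the net point inherits the tail-mass bound. Alternatively, note that sup-norm closeness transfers the variance bound up to an $\cO(\CC B/N)$ term.
\item Account for the change in $A_k$ through $f(s,\pi)$ when $\pi$ is replaced by its cover element. This costs an extra $\cO(V_\tmax G\varepsilon_2)$ beyond the $2\varepsilon_1$ you list.
\end{enumerate}
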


Recall that $\tilde{v}_k^*$ is the minimizer of the population loss $\ell_k$ and $v_k$ is the minimizer of the empirical loss $\widehat{\ell}_k$. The estimation error can be decomposed as
\begin{align}\label{eq:uc}
\ell_k(v_k)-\ell_k(\tilde{v}_k^*)&=\left(\ell_k(v_k)-\widehat{\ell}_k(v_k)\right)+\left(\widehat{\ell}_k(v_k)-\widehat{\ell}_k(\tilde{v}_k^*)\right)+\left(\widehat{\ell}_k(\tilde{v}_k^*)-\ell_k(\tilde{v}_k^*)\right)\notag\\
&\leq 2\sup_{v:\|v\|\leq V_\tmax}\Bigl|\ell_k(v)-\widehat{\ell}_k(v)\Bigr|,
\end{align}
since $v_k=\arg\min_{v:\|v\|\leq V_\tmax}\widehat{\ell}_k(v)$ so the second term vanishes. This means we only need to bound the generalization gap uniform on $v$ such that $\|v\|\leq V_\tmax$. We will first analyze this for a fixed $v$ and then give a uniform bound over $v$ such that $\|v\|\leq V_\tmax$. The CVaR expression for $\ell_k(v)$ is given by
\[
\ell_k(v)=\max\Bigl\{\min_{\tau\in\RR}\bigl\{\tau+\CC\cdot\EE_{d^D}[(\epsilon_v-\tau)_+]\bigr\},\min_{\tau\in\RR}\bigl\{\tau+ \CC\cdot\EE_{d^D}[(-\epsilon_v-\tau)_+]\bigr\}\Bigr\}:=\max\Bigl\{\ell_k^+(v),\ell_k^-(v)\Bigr\}.
\]
Similarly we can also write $\hat{\ell}_k(v)=\max\{\hat{\ell}_k^+(v),\hat{\ell}_k^-(v)\}$. Therefore,
\begin{equation}\label{eq:uc2}
    \Bigl|\ell_k(v)-\hat{\ell}_k(v)\Bigr|\leq \max\left\{\bigl|\ell_k^+(v)-\hat{\ell}_k^+(v)\bigr|,\bigl|\ell_k^-(v)-\hat{\ell}_k^-(v)\bigr|\right\}.
\end{equation}
By symmetry, it remains to give a uniform convergence bound for either $|\ell_k^+-\hat{\ell}_k^+|$ or $|\ell_k^--\hat{\ell}_k^-|$. Without loss of generality, we only bound the generalization gap of $\ell_0(v):=\ell_k^+(v)=\min_{\tau}\left\{\tau+\CC\cdot\EE_{d^D}[(\epsilon_v-\tau)_+]\right\}$. We first give a quantile characterization of this CVaR loss $\ell_0(v)$ and its empirical version $\hat{\ell}_0(v)$, see the following lemma.

\begin{lemma}[Quantile Characterization of Tail Probability]\label{lemma:quantile}
    Fix any $v$ such that $\|v\|\leq V_\tmax$. Any population minimizer $\tau^*(v)$ satisfies
    \[
    \Pr(\epsilon_v>\tau^*(v))\leq \frac{1}{\CC}, \quad \tau^*(v)\in\underset{\tau\in[-B,B]}{\arg\min}\Bigl\{\tau+\CC\cdot\EE[(\epsilon_v-\tau)_+]\Bigr\}.
    \]
    Any empirical minimizer $\hat{\tau}(v)$ can be chosen so that
    \[
    \frac{1}{N}\sum_{i=1}^N\mathbf{1}\{\epsilon_v^{(i)}>\hat{\tau}(v)\}\leq \frac{1}{\CC},\quad \hat{\tau}(v)\in\underset{\tau\in[-B,B]}{\arg\min}\left\{\tau+\frac{\CC}{N}\sum_{i=1}^N\left(\epsilon_v^{(i)}-\tau\right)_+\right\},
    \]
    where $\epsilon_v^{(i)}=A_k(s^{(i)},a^{(i)})-v^\top\phi_k(s^{(i)},a^{(i)})$ is the $i$-th sample formed by the offline dataset.
\end{lemma}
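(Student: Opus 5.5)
The plan is to treat $h(\tau)=\tau+\CC\,\EE[(\epsilon_v-\tau)_+]$ (and its empirical analogue $\hat h$) as a convex, piecewise-linear function of the scalar $\tau$. We then read off the minimizer from the one-sided derivatives. The first step is to fix the range $[-B,B]$. Since $|A_k|\le V_\tmax$ and $|v^\top\phi_k|\le \|v\|\,\|\phi_k\|_*\le V_\tmax G$, we have $|\epsilon_v|\le B:=V_\tmax(G+1)$ almost surely. For $\tau< -B$ the right derivative is $1-\CC\Pr(\epsilon_v>\tau)=1-\CC\le 0$, using $\CC\ge 1$; this follows because $d^{\pi_\cp}/d^D$ integrates to one under $d^D$, so its supremum is at least one. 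For $\tau>B$ the function is $h(\tau)=\tau$, which is increasing. Hence minimizing over $\RR$ and minimizing over $[-B,B]$ give the same value, and some minimizer lies in $[-B,B]$. In particular, the clipped point $\max\{\tau,-B\}$ is never worse than $\tau$, since $h$ is nonincreasing below $-B$.

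The second step is the quantile characterization. By convexity, $\tau^*$ minimizes $h$ iff $0\in\partial h(\tau^*)$. The right derivative is $h'_+(\tau)=1-\CC\Pr(\epsilon_v>\tau)$ and the left derivative is $h'_-(\tau)=1-\CC\Pr(\epsilon_v\ge\tau)$. The optimality condition $h'_+(\tau^*)\ge 0$ is exactly $\Pr(\epsilon_v>\tau^*)\le 1/\CC$, so every population minimizer satisfies the claimed tail bound. This is the same quantile structure used in the proof of Proposition~\ref{prop:cvar} with $\alpha=1/\CC$. Combining this with the first step, there is a minimizer in $[-B,B]$. Moreover, any minimizer outside this interval can be replaced by one inside it, and the tail bound holds for every minimizer regardless.

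The third step repeats the argument for the empirical measure. The function $\hat h$ is convex and piecewise linear with kinks at the samples $\epsilon_v^{(i)}$. Its right derivative is $1-\frac{\CC}{N}\sum_i\mathbf 1\{\epsilon_v^{(i)}>\tau\}$. We choose $\hat\tau$ to be the largest minimizer of $\hat h$; this is well defined because $\hat h$ eventually increases with slope $1$. At the largest minimizer the right derivative must be strictly positive, or at least nonnegative, which gives $\frac1N\sum_i\mathbf 1\{\epsilon_v^{(i)}>\hat\tau\}\le 1/\CC$. Because the samples also lie in $[-B,B]$, the same clipping argument places $\hat\tau$ in $[-B,B]$.

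The only delicate point is the wording ``can be chosen''. When the empirical fraction exactly equals $1/\CC$ on a flat segment, the set of minimizers is an interval, and only its right part satisfies the tail bound. For this reason we select the right endpoint (or the sample order statistic at level $\lceil N/\CC\rceil$) rather than an arbitrary minimizer. We also need to check that $\CC\ge1$ so that the lower clipping at $-B$ is valid. Apart from these two points, the argument consists of the subgradient computations above.
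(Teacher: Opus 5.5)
Your proof is correct and follows the paper's argument: the condition $0\in\partial h(\tau^*)$ for the convex map $\tau\mapsto\tau+\CC\,\EE[(\epsilon_v-\tau)_+]$ uses the subdifferential $[1-\CC\Pr(\epsilon_v\ge\tau),\,1-\CC\Pr(\epsilon_v>\tau)]$, so it forces $\Pr(\epsilon_v>\tau^*)\le 1/\CC$, and the same reasoning works for the empirical objective. Your explicit check that a minimizer lies in $[-B,B]$ (using $|\epsilon_v|\le B$ and $\CC\ge 1$) fills in a step the paper only asserts later. One correction to your ``delicate point'': the condition $\hat{h}'_{+}(\hat{\tau})\ge 0$ holds at \emph{every} empirical minimizer, including the left end of a flat segment, where exactly $N/\CC$ samples exceed $\hat{\tau}$. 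So every minimizer already satisfies the tail bound, and selecting the largest one is harmless but unnecessary.
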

\begin{proof}
For a fixed $z\in\RR$, $g_z(\tau)=(z-\tau)_+$ as a convex function of $\tau$, its sub-differential is given by
\[
\partial_\tau g_z(\tau)=\begin{cases}\{-1\},&z>\tau,\\
[-1,0],&z=\tau,\\
\{0\},&z<\tau.\end{cases}
\]
We first analyze the population case. Let $F_v(t)=\Pr(\epsilon_v\leq t)$ be the CDF for the random variable $\epsilon_v$ (where the randomness comes from $(s,a)$ pair). Using linearity of expectation, we have
\[
\partial_\tau\Bigl\{\tau+\CC\cdot\EE[(\epsilon_v-\tau)_+]\Bigr\}=\{1\}+\CC\cdot \EE[\partial_\tau(\epsilon_v-\tau)_+]=\Bigl[1-\CC\cdot\Pr(\epsilon_v\geq \tau(v)),1-\CC\cdot\Pr(\epsilon_v>\tau(v))\Bigr].
\]
Since $\tau^*$ is the population minimizer, by optimality condition, its sub-differential should contain $0$. This means
\[
1-\CC\cdot \Pr(\epsilon_v\geq \tau^*(v))\leq 0\leq 1-\CC\Pr(\epsilon_v>\tau^*(v)),
\]
which implies that $\Pr(\epsilon_v>\tau^*(v))\leq 1/\CC$. For empirical case, similarly we obtain that
\[
1-\frac{\CC}{N}\sum_{i=1}^N\mathbf{1}\{\epsilon_v^{(i)}>\hat{\tau}(v)\}\leq 0\leq 1-\frac{\CC}{N}\sum_{i=1}^N\mathbf{1}\{\epsilon_v^{(i)}\geq \hat{\tau}(v)\},
\]
which implies the second argument in Lemma~\ref{lemma:quantile}
\end{proof}
Lemma~\ref{lemma:quantile} actually implies an explicit formula for the (empirical) CVaR loss $\hat{\ell}_0(v)$. Let the descending order statistics be
\[
\epsilon_v^{\downarrow,(1)}\geq \epsilon_v^{\downarrow,(2)}\geq\cdots\geq \epsilon_v^{\downarrow,(N)}.
\]
Fix $k\in\{0,1,\ldots,N\}$ and consider $\tau$ inside the open interval $(\epsilon_v^{\downarrow,(k+1)},\epsilon_v^{\downarrow,(k)})$ (with the conventions $\epsilon_v^{\downarrow,(0)}=+\infty$, $\epsilon_v^{\downarrow,(N+1)}=-\infty$). Then $\#\{i:\epsilon_v^{(i)}>\tau\}=k$. This means the function $\tau+\CC\cdot\EE[(\epsilon_v-\tau)_+]$ is affine with slope $1-\CC/N$. Its minimum occurs where the slope crosses zero, i.e., at the ``knot'' between the last interval with positive slope and the first with non-positive slope. That is,
\[
k^*=\lceil\frac{N}{\CC}\rceil,\quad \hat{\tau}(v)\in\left[\epsilon_v^{\downarrow,(k^*)},\epsilon_v^{\downarrow,(k^*+1)}\right].
\]
This means with $\tau(v)=\hat{\tau}(v)$, there exist at most $\lceil N/\CC\rceil$ ``active'' points that would not be obviated by $(\cdot-\hat{\tau})_+$ operation. At the same time, the empirical CVaR loss can be written as
\[
\hat{\ell}_0(v)=\min_{\tau\in[-B,B]}\left\{\tau+\frac{\CC}{N}\sum_{i=1}^N\left(\epsilon_v^{(i)}-\tau\right)_+\right\}=\frac{\CC}{N}\sum_{j=1}^{k^*}\epsilon_v^{\downarrow,(j)}.
\]
Now we can proceed the proof of Lemma~\ref{thm:dro}, which uses standard technique in statistical learning theory to give uniform convergence via empirical Rademacher complexity (see \citet{bartlett2002rademacher} for the definition). The key step is to handle the Rademacher term by the tail-peeling that reflects the ``about $1/\CC$'' active fraction in CVaR (as shown in Lemma~\ref{lemma:quantile}), giving the $\sqrt{\CC}$ rather than $\CC$ dependence.\footnote{If we directly bound the $|\ell_k(v)-\hat{\ell}_k(v)|$ with their original definition  of $\max_{w\in\cW}$, this would lead to a Rademacher complexity term of the weight class $\cW$, which is in general uncontrollable. But using the fact that $w$ has controlled variance, $\EE_{d^D}[w^2]\leq \CC\cdot \EE_{d^D}[w]=\CC$, we can use Bernstein-type concentration to give the bound which will lead to a $\sqrt{\CC}$ dependence. So what we did in Lemma~\ref{lemma:quantile} is essentially to control the variance of this CVaR loss to use the Bernstein-type concentration (Lemma~\ref{lemma:bousquet}).}

\begin{proof}[Proof of Lemma~\ref{thm:dro}]
Let $g_{v,\tau}(s,a):=\CC(\epsilon_v(s,a)-\tau)_+$. For any fixed $v:\|v\|\leq V_\tmax$, recall the definition of $\tau^*(v)$ and $\hat{\tau}(v)$ in Lemma~\ref{lemma:quantile}, we have that
\[
\ell_0(v)=\tau^*(v)+\EE[g_{v,\tau^*(v)}],\quad \hat{\ell}_0(v)=\hat{\tau}(v)+\hat{\EE}[g_{v,\hat{\tau}(v)}].
\]
Therefore, by that $\tau^*(v)$ is the population minimizer, we have
\begin{align*}
\ell_0(v)-\hat{\ell}_0(v)&=\tau^*(v)+\EE[g_{v,\tau^*(v)}]-\hat{\tau}(v)-\hat{\EE}[g_{v,\hat{\tau}(v)}]\\
&\leq \hat{\tau}(v)+\EE[g_{v,\hat{\tau}(v)}]-\hat{\tau}(v)-\hat{\EE}[g_{v,\hat{\tau}(v)}] \\
&\leq (\EE-\hat{\EE}) [g_{v,\hat{\tau}(v)}].
\end{align*}
Similarly we can obtain $\hat{\ell}_0(v)-\ell_0(v)\leq (\hat{\EE}-\EE)[g_{v,\tau^*(v)}]$ by that $\hat{\tau}(v)$ is the empirical minimizer. This means that the uniform generalization gap can be bounded by
\[
\sup_{v:\|v\|\leq V_\tmax}\Bigl|\ell_0(v)-\hat{\ell}_0(v)\Bigr|\leq \sup_{v:\|v\|\leq V_\tmax}\max_{\tau\in\{\tau^*(v),\hat{\tau}(v)\}}\Bigl|(\EE-\hat{\EE})[g_{v,\tau}]\Bigr|:=\sup_{g\in\cG}\Bigl|(\EE-\hat{\EE})[g]\Bigr|
\]
where we use a function class $\cG$ that expresses all possible functions $g_{v,\tau}$ we needed. In particular, since $|\epsilon_v(s,a)|=|A_k(s,a)-v^\top\phi_k(s,a)|\leq V_\tmax(G+1)$, denote $B=V_\tmax(G+1)$ as the uniform upper bound of $\epsilon_v$, we can constrain all possible $\tau\in[-B,B]$ (outside this interval the hinge is $0$ or can be pulled back). Hence the function class $\cG$ is given by
\[
\cG=\Bigl\{g_{v,\tau}(s,a)=\CC(\epsilon_v(s,a)-\tau):\|v\|\leq V_\tmax,\tau\in\{\tau^*(v),\hat{\tau}(v)\}\subseteq[-B,B]\Bigr\}.
\]

For any $\tau\in[-B,B]$ and $v:\|v\|\leq V_\tmax$, we can express $g_{v,\tau}(s,a)$ as
\[
g_{v,\tau}(s,a)=\CC\cdot (\epsilon_v(s,a)-\tau)_+=\CC\cdot|\epsilon_v(s,a)-\tau|\cdot\mathbf{1}\{\epsilon_v(s,a)>\tau\}.
\]
A direct consequence is that $0\leq g_{v,\tau}(s,a)\leq 2BC$ for all $(s,a)$. And we can actually control its variance by
\[
\Var[g_{v,\tau}(s,a)]\leq\EE[g_{v,\tau}(s,a)^2]\leq \CC^2\cdot 4B^2\cdot \EE[\mathbf{1}\{\epsilon_v(s,a)> \tau\}]=4B^2\CC^2\cdot\Pr(\epsilon_v>\tau).
\]
By Lemma~\ref{lemma:quantile}, if $\tau=\tau^*(v)$ for some $v$, then this tail probability is actually controlled by $1/\CC$. This means the all $\{g_{v,\tau^*(v)}\}_v$ fall in this low-variance regime:
\[
\Var[g_{v,\tau^*(v)}(s,a)]\leq 4B^2\CC^2\cdot\Pr(\epsilon_v(s,a)>\tau^*(v))\leq 4B^2\CC^2\cdot\frac{1}{\CC}=4B^2\CC.
\]
For empirical version $\tau=\hat{\tau}(v)$, we need to invoke DKW inequality (Lemma~\ref{lemma:dkw}) to get a high-probability argument: with probability at least $1-\delta/2$,
\begin{align*}
\Var[g_{v,\hat{\tau}(v)}(s,a)]
&\leq 4B^2\CC^2\cdot\Pr(\epsilon_v(s,a)>\hat{\tau}(v))\\
&\leq 4B^2\CC^2\cdot \left(\frac{1}{N}\sum_{i=1}^N\mathbf{1}\{\epsilon_v^{(i)}>\hat{\tau}\}+\sqrt{\frac{1}{2N}\log\frac{4}{\delta}}\right)\\ 
&\leq 4B^2\CC+4B^2\CC^2\sqrt{\frac{1}{2N}\log\frac{4}{\delta}}.
\end{align*}
where the first inequality is due to the DKW inequality that concentrates an empirical CDF to a population CDF, and the third inequality is due to Lemma~\ref{lemma:quantile}. Therefore for all $g_{v,\tau}\in\cG$, we can control its range $|g_{v,\tau}|\leq 2B\CC$ and its variance $\Var[g_{v,\tau}]\leq 4B^2\CC+\eta$ (where $\eta$ is the error introduced by the DKW inequality). 

This enables us to leverage Bernstein-type concentration to $(\EE-\hat{\EE})g$. Notice that this is actually a supremum of some empirical process, so Bousquet’s Bennett inequality (Lemma~\ref{lemma:bousquet}) applies here: with probability at least $1-\delta$,
\begin{align*}
\sup_{g_{v,\tau}\in\cG}\Bigl|(\EE-\hat{\EE})g_{v,\tau}\Bigr|
&\leq \EE\left[\sup_{g_{v,\tau}\in\cG}\Bigl|(\EE-\hat{\EE})g_{v,\tau}\Bigr|\right]+\sqrt{\frac{2\Var[g_{v,\tau}]}{N}\log\frac{2}{\delta}}+\frac{2\sup_{g_{v,\tau}}|g_{v,\tau}|}{3N}\log\frac{2}{\delta}\\
&\leq \EE\left[\sup_{g_{v,\tau}\in\cG}\Bigl|(\EE-\hat{\EE})g_{v,\tau}\Bigr|\right]+2B\sqrt{\frac{2\CC}{N}\log\frac{2}{\delta}}+\frac{2\sqrt{2}B\CC}{N^{3/4}}\log\frac{2}{\delta}+\frac{4B\CC}{3N}\log\frac{2}{\delta},
\end{align*}
where the second inequality is just replacing the range and variance of $g_{v,\tau}\in\cG$. By a standard symmetrization technique, we can relate the first term (expected uniform convergence) with the Rademacher complexity of the function class $\cG$, denoted as $\mathfrak{R}_N(\cG)$:
\begin{align*}
\EE_{(s,a)}\biggl[\sup_{g_{v,\tau}\in\cG}\Bigl|(\EE-&\hat{\EE})g_{v,\tau}\Bigr|\biggr]
=\EE_{\{(s_i,a_i)\},\{(s_i',a_i')\}}\left[\sup_{g_{v,\tau}\in\cG\cup-\cG}\frac{1}{N}\sum_{i=1}^N\left(g_{v,\tau}(s_i,a_i)-g_{v,\tau}(s_i',a_i')\right)\right]\\
&=\EE_{\{(s_i,a_i)\},\{(s_i',a_i')\},\{\sigma_i\}}\left[\sup_{g_{v,\tau}\in\cG\cup-\cG}\frac{1}{N}\sum_{i=1}^N\sigma_i\left(g_{v,\tau}(s_i,a_i)-g_{v,\tau}(s_i',a_i')\right)\right]\\
&\leq \EE_{\{(s_i,a_i)\},\{(s_i',a_i')\},\{\sigma_i\}}\left[\sup_g\frac{1}{N}\sum_{i=1}^N\sigma_ig(s_i,a_i)+\sup_g\frac{1}{N}\sum_{i=1}^N(-\sigma_i)g(s_i,a_i)\right]\\
&=2\cdot\EE_{\{(s_i,a_i)\},\{\sigma_i\}}\left[\sup_{g_{v,\tau}\in\cG\cup-\cG}\frac{1}{N}\sum_{i=1}^N\sigma_ig_{v,\tau}(s_i,a_i)\right]\\
&=2\mathfrak{R}_N(\cG\cup-\cG)\leq 4\mathfrak{R}_N(\cG),
\end{align*}
where $\{\sigma_i\}\sim\{\pm1\}$ is the Rademacher random variable.

So it remains to control the Rademacher complexity $\mathfrak{R}_N(\cG)$ of the function class $\cG$. Instead, we bound the empirical Rademacher complexity $\hat{\mathfrak{R}}_N(\cG)$ based on $N$ i.i.d. samples $(s_i,a_i)$ due to that $\mathfrak{R}_N(\cG)=\EE[\hat{\mathfrak{R}}_N(\cG)]$. A direct contraction would lead to a $\CC$ dependency (since the Lipschitz constant of $g_{v,\tau}$ is of the order $\CC$). So we use a layer-peeling technique. For any $m\in\{0,1,2,\ldots\}$, define $\cG_m$ as 
\[
\cG_m=\left\{g_{v,\tau}\in\cG:\frac{1}{N}\sum_{i=1}^N\mathbf{1}\{\epsilon_v^{(i)}>\tau\}\in\left(\frac{2^{-(m+1)}}{\CC},\frac{2^{-m}}{\CC}\right]\right\}.
\]
We claim that all $\cG=\bigcup_{m\geq 0}\cG_m$ since the empirical tail-probability is $\leq 1/\CC$ by Lemma~\ref{lemma:quantile}. For any $g_{v,\tau}\in\cG_m$, such configuration indicates that $0\leq g_{v,\tau}(s_i,a_i)\leq 2B\CC$, and
\[
\sum_{i=1}^N g_{v,\tau}(s_i,a_i)^2\leq 4B^2\CC^2\sum_{i=1}^N\mathbf{1}\{\epsilon_v(s_i,a_i)>\tau\}\leq 4B^2\CC^2\cdot N\cdot\frac{2^{-m}}{\CC}=4B^2CN\cdot2^{-m}.
\]
This gives a bound for the empirical Rademacher complexity $\hat{\mathfrak{R}}_N(\cG_m)$:
\begin{align*}
\hat{\mathfrak{R}}_N(\cG_m)&=\EE_{\{\sigma_i\}}\left[\sup_{g_{v,\tau}\in\cG_m}\frac{1}{N}\sum_{i=1}^N\sigma_ig_{v,\tau}(s_i,a_i)\right]
\leq\frac{1}{N}\left(\EE_{\{\sigma_i\}}\left[\sup_{g_{v,\tau}\in\cG_m}\left(\sum_{i=1}^N\sigma_ig_{v,\tau}(s_i,a_i)\right)^2\right]\right)^{1/2}\\
&\leq\frac{1}{N}\left(N\cdot\sup_{g_{v,\tau}\in\cG_m}\sum_{i=1}^Ng_{v,\tau}(s_i,a_i)^2\right)^{1/2}
=\frac{1}{\sqrt{N}}\left(\sup_{g_{v,\tau}\in\cG_m}\sum_{i=1}^Ng_{v,\tau}(s_i,a_i)^2\right)^{1/2}\\
&\leq \frac{1}{N}\cdot\sqrt{4B^2\CC N\cdot2^{-m}}=2B\sqrt{\frac{\CC}{N}\cdot 2^{-m}},
\end{align*}
where the first inequality is by Jensen, the second inequality is by Cauchy-Schwarz and the property of Rademacher random variable that $\EE_{\{\sigma_i\}}[\sum_{i=1}^N\sigma_i^2]=N$. Therefore, by $\cG=\bigcup_{m\geq 0}\cG_m$, we have
\[
\hat{\mathfrak{R}}_N(\cG)\leq\sum_{m=0}^\infty\hat{\mathfrak{R}}_N(\cG_m)\leq2B\sqrt{\frac{\CC}{N}}\cdot \sum_{m=0}^\infty2^{-m/2}\leq 4B\sqrt{\frac{\CC}{N}}.
\]
Hence the Rademacher complexity of $\cG$ is also bounded by $4B\sqrt{C/N}$. Combining, we get
\[
\sup_{v:\|v\|\leq V_\tmax}\Bigl|\ell_0(v)-\hat{\ell}_0(v)\Bigr|\lesssim B\sqrt{\frac{\CC}{N}}+B\sqrt{\frac{\CC}{N}\log\frac{1}{\delta}}+\frac{B\CC}{N^{3/4}}\log\frac{1}{\delta}+\frac{B\CC}{N}\log\frac{1}{\delta}.
\]
Notice that $\ell_0(v)=\ell_k^+(v)=\min_{\tau}\{\tau+\EE[(\epsilon_v-\tau)_+]\}$. Similarly, we can reproduce the exact identical proof for $\ell_k^-(v)$ since $|-\epsilon_v|\leq B$. Therefore, combine this with Eq.~\eqref{eq:uc} (that connects the estimation error and the generalization gap) and Eq.~\eqref{eq:uc2} (that connects the $\ell_k$ gap with $\ell_k^+$ and $\ell_k^-$), we get the following by $B=V_\tmax(G+1)$:
\[
|\ell_k(v_k)-\ell_k(\tilde{v}_k^*)|\lesssim V_\tmax(G+1)\left(\sqrt{\frac{\CC}{N}\log\frac{1}{\delta}}+\frac{\CC}{N^{3/4}+N}\log\frac{1}{\delta}\right).
\]
Combining the covering argument for $\cF$ and $\Pi_\theta$, the final component of $\Comp(\cF,\Pi_\theta,\delta)$ is determined, yielding the bound for $\widetilde{\epsilon}_{\stat}$. This completes the proof of Lemma~\ref{thm:dro}.
\end{proof}

Theorem~\ref{thm:dro-guarantee} is hence a direct consequence of combining Eq.~\eqref{eq:dro-regret} and Lemma~\ref{thm:dro}. We note a key difference from Theorem~\ref{thm:npg-guarantee}. In the well-specified setting, i.e., when $\widetilde{\epsilon}_{\text{bias}} = 0$, the estimation error $\widetilde{\epsilon}_{\text{stat}}$ converges to zero asymptotically but does not vanish exactly, in contrast to the linear regression case. This behavior stems from the fact that the first-order loss considered here does not fully explore the parameter space; instead, DRPU effectively enforces a form of ``mean matching'' over a family of distributions. As a result, the associated concentration bound is derived under a weaker structural assumption on the compatible function space.

\subsection{Computation}
\label{app:dro-computation}
In Section~\ref{sec:dro} we mentioned that minimizing the loss $\hat{\ell}_k(v)$ can be viewed as a linear program (or more generally, a SOCP)\footnote{The specific type of the program depends on the norm constraint of $v: \|v\|\leq V_\tmax$. That is, if it's $\|\cdot\|_\infty$ or $\|\cdot\|_1$, then it's a linear program (LP); if it's $\|\cdot\|_2$, then it's a second-order cone program (SOCP)}, which can be efficiently solved by any convex solver. Now we give the specific realization of this program. Recall the CVaR expression of $\hat{\ell}_k$ is given by
\[
\hat{\ell}_k(v)=\max\left\{\min_{\tau\in\RR}\left\{\tau+\frac{\CC}{N}\sum_{i=1}^N \left(\epsilon_v^{(i)}-\tau\right)_+\right\},\min_{\tau\in\RR}\left\{\tau+\frac{\CC}{N}\sum_{i=1}^N \left(-\epsilon_v^{(i)}-\tau\right)_+\right\}\right\},
\]
where $\epsilon_v^{(i)}=A_k^{(i)}-v^\top\nabla\log\pi_k(a^{(i)}|s^{(i)})$. Since the outer max of the two CVaRs with an epigraph variable (similarly for the inner $\tau$, we can write down the corresponding $\dd$-dimensional convex program of $\min_{v\in\RR^\dd:\|v\|\leq V_\tmax}\hat{\ell}_k(v)$:
\begin{align*}
\min_{\substack{v, z, \tau_{+}, \tau_{-},\\ \{c_{+}^{(i)}\}_{i=1}^N, \{c_{-}^{(i)}\}_{i=1}^N}}
\quad & z \\[2pt]
\text{s.t.}\quad
& z \ge \tau_{+} + \frac{\CC}{N}\sum_{i=1}^N c_{+}^{(i)}, \\[2pt]
& c_{+}^{(i)} \ge A_k^{(i)} - v^\top \phi_k^{(i)} - \tau_{+}, \qquad
  c_{+}^{(i)} \ge 0, \quad i=1,\dots,N, \\[6pt]
& z \ge \tau_{-} + \frac{\CC}{N}\sum_{i=1}^N c_{-}^{(i)}, \\[2pt]
& c_{-}^{(i)} \ge -A_k^{(i)} + v^\top \phi_k^{(i)} - \tau_{-}, \qquad
  c_{-}^{(i)} \ge 0, \quad i=1,\dots,N, \\[6pt]
& \|v\| \le V_{\max}.
\end{align*}

\subsection{Feature Coverage and Chi-Square Weight Class}\label{app:dro-w2}

In this section, we discuss the feature coverage condition in Section~\ref{sec:npg} and clarify its relation to the chi-square weight class introduced in Section~\ref{sec:dro}.

\paragraph{Feature coverage condition.}
Recall that in the compatible case, the advantage function $A_k(s,a)$ lies in the linear span of the policy-gradient \emph{features}, which we denote by $\phi_k(s,a)=\nabla\log\pi_k(a\mid s)$. Let $\Sigma_D^\pi=\EE_{d^D}[(\nabla\log\pi)(\nabla\log\pi)^\top]$ denote the feature covariance matrix under the data distribution $d^D$ (suppose it's invertible), and let $\mu_\cp^{\pi}=\EE_{(s,a)\sim d^{\pi_\cp}}[\nabla\log\pi(a\mid s)]$ denote the mean feature under $d^{\pi_\cp}$. Then the feature coverage condition $\CC_\feat$ can be written as
\[
(\mu_\cp^{\pi_k})^\top(\Sigma_D^{\pi_k})^{-1}\mu_\cp^{\pi_k}\le \CC_\feat,\qquad \forall k\in[K].
\]
Unlike density coverage, which controls distribution shift uniformly over all $(s,a)$ pairs, feature coverage only controls transfer on the restricted function class induced by the policy-gradient features. That is, $\CC_\feat$ requires data only to cover a single direction $\mu_\cp^{\pi_k}$, which is the mean feature direction under $d^{\pi_\cp}$. Since the error of CFA is exactly a linear error in the feature class, the feature coverage is the known tightest coverage notion under linear function approximation\citep{jiang2025offline}.

The next proposition shows that feature coverage is always implied by density coverage, and can therefore be viewed as a refined notion of coverage.

\begin{proposition}[Density Coverage Implies Feature Coverage]\label{prop:density-implies-feature-coverage}
Assume Assumption~\ref{ass:data} holds, namely $\|w^*\|_\infty\le \CC$ for $w^*=d^{\pi_\cp}/d^D$. Then for every $k\in[K]$ and every $f\in\spn(\phi_k)$,
\[
\big(\EE_{d^{\pi_\cp}}[f]\big)^2\le \CC\,\EE_{d^D}[f^2].
\]
Consequently, for invertible $\Sigma_D^{\pi_k}$, this is equivalent to $
(\mu_\cp^{\pi_k})^\top(\Sigma_D^{\pi_k})^{-1}\mu_\cp^{\pi_k}\le \CC$. In particular, the feature coverage constant always satisfies $\CC_\feat\le \CC$.
\end{proposition}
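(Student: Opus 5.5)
The plan is a change of measure followed by Cauchy--Schwarz, and then a variational identity to convert the scalar inequality into the matrix form. Fix $k$ and any $f\in\spn(\phi_k)$, i.e., $f=u^\top\phi_k$ for some $u\in\RR^\dd$. Write the comparator expectation as an importance-weighted expectation under the data distribution:
\[
\EE_{d^{\pi_\cp}}[f]=\EE_{d^D}[w^* f].
\]
This identity is valid because Assumption~\ref{ass:data} forces $d^{\pi_\cp}\ll d^D$. Applying Cauchy--Schwarz under $d^D$ gives
\[
\bigl(\EE_{d^D}[w^*f]\bigr)^2\le \EE_{d^D}[(w^*)^2]\cdot\EE_{d^D}[f^2].
\]
It then remains to bound the second moment of the density ratio by $\CC$. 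Since $0\le w^*\le \CC$ pointwise and $\EE_{d^D}[w^*]=\sum d^{\pi_\cp}=1$, we get
\[
\EE_{d^D}[(w^*)^2]\le \CC\,\EE_{d^D}[w^*]=\CC.
\]
This is exactly the variance bound already mentioned in the footnote of the proof of Lemma~\ref{thm:dro}, and it yields the first claim. Note that the argument never uses that $f$ is linear in $\phi_k$; the linearity matters only for the equivalence step below.

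For the equivalence, I substitute $f=u^\top\phi_k$. The inequality then reads
\[
(u^\top\mu_\cp^{\pi_k})^2\le \CC\, u^\top\Sigma_D^{\pi_k}u\quad\text{for all } u.
\]
The key identity is
\[
\sup_{u\ne0}\frac{(u^\top\mu)^2}{u^\top\Sigma u}=\mu^\top\Sigma^{-1}\mu
\]
for invertible $\Sigma$. To prove it, substitute $u=\Sigma^{-1/2}z$ and apply Cauchy--Schwarz to $(z^\top\Sigma^{-1/2}\mu)^2$. Equality is attained at $u=\Sigma^{-1}\mu$, which gives the direction ``$\Leftarrow$''. So the family of scalar inequalities over $u$ is equivalent to $(\mu_\cp^{\pi_k})^\top(\Sigma_D^{\pi_k})^{-1}\mu_\cp^{\pi_k}\le\CC$. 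Taking the maximum over $k\in[K]$ in the definition of $\CC_\feat$ then gives $\CC_\feat\le\CC$.

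No step is genuinely hard. The only point needing care is the second-moment bound $\EE_{d^D}[(w^*)^2]\le\CC$. Here one should use the normalization $\EE_{d^D}[w^*]=1$ rather than the cruder bound $\|w^*\|_\infty^2=\CC^2$, which would lose a factor of $\CC$.
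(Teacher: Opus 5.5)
Your proposal is correct and follows the paper's proof essentially step for step. Both arguments rewrite $\EE_{d^{\pi_\cp}}[f]=\EE_{d^D}[w^*f]$, apply Cauchy--Schwarz, bound $\EE_{d^D}[(w^*)^2]\le\|w^*\|_\infty\,\EE_{d^D}[w^*]\le\CC$, and then identify $\sup_{u\ne0}(u^\top\mu_\cp^{\pi_k})^2/(u^\top\Sigma_D^{\pi_k}u)$ with $(\mu_\cp^{\pi_k})^\top(\Sigma_D^{\pi_k})^{-1}\mu_\cp^{\pi_k}$ through the Rayleigh quotient. One cosmetic note on labeling: attainment at $u=(\Sigma_D^{\pi_k})^{-1}\mu_\cp^{\pi_k}$ gives the implication from the scalar family to $(\mu_\cp^{\pi_k})^\top(\Sigma_D^{\pi_k})^{-1}\mu_\cp^{\pi_k}\le\CC$, which is the direction needed for $\CC_\feat\le\CC$, while the Cauchy--Schwarz step gives the converse.
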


\begin{proof}
Fix any $k\in[K]$ and any $f\in\spn(\phi_k)$. Since $w^*=d^{\pi_\cp}/d^D$, we have $\EE_{d^{\pi_\cp}}[f]=\EE_{d^D}[w^*f]$. Therefore,
\[
\big(\EE_{d^{\pi_\cp}}[f]\big)^2=\big(\EE_{d^D}[w^*f]\big)^2\le \EE_{d^D}[(w^*)^2]\EE_{d^D}[f^2]\le \CC\,\EE_{d^D}[f^2],\qquad \forall f\in\spn(\phi_k),
\]
where the first inequality is by Cauchy-Schwarz, and the second inequality leverages the ``low-variance'' property of the importance weight. That is, for $w^*$ such that $\EE_{d^D}[w^*]=1$ and $\|w^*\|_\infty\le \CC$, we have $\EE_{d^D}[(w^*)^2]\le \|w^*\|_\infty\,\EE_{d^D}[w^*]\le \CC$.

Now write any $f\in\spn(\phi_k)$ as $f_\theta(s,a)=\theta^\top\phi_k(s,a)$ for some $\theta\in\mathbb{R}^d$. Then $\EE_{d^{\pi_\cp}}[f_\theta]=\theta^\top\mu_\cp^{\pi_k}$ and $\EE_{d^D}[f_\theta^2]=\theta^\top\Sigma_D^{\pi_k}\theta$. Hence,
\[
\sup_{\theta\neq 0}\frac{(\theta^\top\mu_\cp^{\pi_k})^2}{\theta^\top\Sigma_D^{\pi_k}\theta}\le \CC,\quad\forall k\in[K].
\]
Since $\Sigma_D^{\pi_k}$ is invertible, the left-hand side is the Rayleigh quotient
\[
\sup_{\theta\neq 0}\frac{(\theta^\top\mu_\cp^{\pi_k})^2}{\theta^\top\Sigma_D^{\pi_k}\theta}=(\mu_\cp^{\pi_k})^\top(\Sigma_D^{\pi_k})^{-1}\mu_\cp^{\pi_k},
\]
which implies
\[
(\mu_\cp^{\pi_k})^\top(\Sigma_D^{\pi_k})^{-1}\mu_\cp^{\pi_k}\le \CC,\quad\forall k\in[K].
\]
Taking the supremum over $k\in[K]$ gives $\CC_\feat\le \CC$.
\end{proof}

Proposition~\ref{prop:density-implies-feature-coverage} shows that feature coverage is never worse than density coverage. More importantly, it can be \emph{substantially smaller}. The reason is that density coverage controls the worst-case reweighting error over the entire state-action space, whereas feature coverage only measures mismatch along the specific feature directions relevant to the actor update.

\paragraph{Chi-square weight class.}
We now connect the above discussion to the chi-square weight class in Section~\ref{sec:dro}. \emph{In the compatible linear case, one need not use the true density ratio for distribution transfer}. Instead, it suffices to use a feature-based correction function that is exact on the linear residual class. Define
\begin{equation}\label{eq:feature-correction}
w_k^*(s,a):=(\mu_\cp^{\pi_k})^\top(\Sigma_D^{\pi_k})^{-1}\phi_k(s,a),
\end{equation}
and consider the chi-square weight class
\[
\cW_{\chi^2}:=\{w:\EE_{d^D}[w^2]\le \CC_\feat\}.
\]

\begin{proposition}\label{prop:feature-weight-class}
Assume the compatible case, so that $A_k(s,a)\in\spn(\phi_k)$. Then the feature-based correction weight $w_k^*$ defined above satisfies the following two properties:

(i) (\textbf{Distribution transfer}) for any $f\in\spn(\phi_k)$, $\EE_{d^{\pi_\cp}}[f]=\EE_{d^D}[w_k^*f]$.

(ii) (\textbf{Realizability}) $\EE_{d^D}[(w_k^*)^2]=(\mu_\cp^{\pi_k})^\top(\Sigma_D^{\pi_k})^{-1}\mu_\cp^{\pi_k}\le \CC_\feat$, and hence $w_k^*\in\cW_{\chi^2}$.
\end{proposition}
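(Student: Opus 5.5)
The proof is a direct computation using only the definitions of $\mu_\cp^{\pi_k}$, $\Sigma_D^{\pi_k}$ and $w_k^*$ in Eq.~\eqref{eq:feature-correction}. I would handle the two properties separately, each time writing elements of $\spn(\phi_k)$ as $f_\theta=\theta^\top\phi_k$. Throughout I assume, as in the definition of $\CC_\feat$, that $\Sigma_D^{\pi_k}$ is invertible.

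For (i), fix $f=f_\theta=\theta^\top\phi_k$. On the left, linearity gives $\EE_{d^{\pi_\cp}}[f_\theta]=\theta^\top\mu_\cp^{\pi_k}$. On the right,
\[
\EE_{d^D}[w_k^*f_\theta]=(\mu_\cp^{\pi_k})^\top(\Sigma_D^{\pi_k})^{-1}\EE_{d^D}[\phi_k\phi_k^\top]\theta=(\mu_\cp^{\pi_k})^\top(\Sigma_D^{\pi_k})^{-1}\Sigma_D^{\pi_k}\theta=(\mu_\cp^{\pi_k})^\top\theta,
\]
so the two sides agree. In the compatible case $A_k\in\spn(\phi_k)$, so the CFA residual $A_k-v^\top\phi_k$ also lies in $\spn(\phi_k)$ for every $v$. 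Hence (i) yields Eq.~\eqref{eq:robust-transfer} with $w^*=w_k^*$, and the residual need not be squared.

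For (ii), I expand
\[
\EE_{d^D}[(w_k^*)^2]=(\mu_\cp^{\pi_k})^\top(\Sigma_D^{\pi_k})^{-1}\EE_{d^D}[\phi_k\phi_k^\top](\Sigma_D^{\pi_k})^{-1}\mu_\cp^{\pi_k}.
\]
The middle factor is $\Sigma_D^{\pi_k}$, and it cancels one inverse, leaving $(\mu_\cp^{\pi_k})^\top(\Sigma_D^{\pi_k})^{-1}\mu_\cp^{\pi_k}$. By the definition of $\CC_\feat$ as the maximum of this quantity over $k\in[K]$, it is at most $\CC_\feat$. Therefore $w_k^*\in\cW_{\chi^2}$. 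Proposition~\ref{prop:density-implies-feature-coverage} additionally gives $\CC_\feat\le\CC$, which shows this class is no larger a requirement than density coverage.

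There is no real obstacle. The only points needing care are the invertibility of $\Sigma_D^{\pi_k}$ and a subtlety about signs. Without invertibility, I would replace the inverse by the pseudo-inverse and check that $\mu_\cp^{\pi_k}$ lies in the range of $\Sigma_D^{\pi_k}$; this holds under density coverage, by the Cauchy--Schwarz argument of Proposition~\ref{prop:density-implies-feature-coverage}. On signs, $w_k^*$ may take negative values and need not integrate to one. This is harmless, because $\cW_{\chi^2}$ imposes only a second-moment constraint, and property (i) is all the DRO transfer requires.
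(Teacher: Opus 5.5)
Your proposal is correct and is essentially the paper's proof. You write $f=\theta^\top\phi_k$ and cancel $\Sigma_D^{\pi_k}$ against its inverse for (i), then expand the second moment so that one factor of $\Sigma_D^{\pi_k}$ cancels for (ii). Your pseudo-inverse remark is also sound, since density coverage forces $\mu_\cp^{\pi_k}\perp\ker\Sigma_D^{\pi_k}$.
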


\begin{proof}
For any $f\in\spn(\phi_k)$, there exists $\theta\in\mathbb{R}^d$ such that $f(s,a)=\theta^\top\phi_k(s,a)$. Then
\[
\EE_{d^D}[w_k^*f]=(\mu_\cp^{\pi_k})^\top(\Sigma_D^{\pi_k})^{-1}\EE_{d^D}[\phi_k\phi_k^\top]\theta=(\mu_\cp^{\pi_k})^\top\theta=\EE_{d^{\pi_\cp}}[f],
\]
which proves (i). For (ii),
\[
\EE_{d^D}[(w_k^*)^2]=(\mu_\cp^{\pi_k})^\top(\Sigma_D^{\pi_k})^{-1}\EE_{d^D}[\phi_k\phi_k^\top](\Sigma_D^{\pi_k})^{-1}\mu_\cp^{\pi_k}=(\mu_\cp^{\pi_k})^\top(\Sigma_D^{\pi_k})^{-1}\mu_\cp^{\pi_k}\le \CC_\feat.
\]
Thus $w_k^*\in\cW_{\chi^2}$.
\end{proof}

Proposition~\ref{prop:feature-weight-class} shows that, in the compatible linear case, the chi-square weight class can be instantiated directly by feature coverage: the correction function $w_k^*$ exactly transfers expectations on the linear residual class, and its second moment under $d^D$ is precisely the feature coverage quantity. In this sense, $\cW_{\chi^2}$ provides a feature-based analogue of the density-ratio weight class that is tailored to the actor update.

\paragraph{Remark: feature correction weight.} It is important to note that the feature-based correction weight $w_k^*$ in Eq.~\eqref{eq:feature-correction} should NOT be interpreted as a density ratio. In general, it need not be nonnegative and need not satisfy the normalization condition $\EE_{d^D}[w_k^*]=1$. This is not an issue for our purpose, since the role of $w_k^*$ is only to realize exact transfer on the residual class $\spn(\phi_k)$, rather than to represent a valid change of measure.

The normalization issue can also be handled by a simple centering trick. Let $\mu_D^{\pi_k}=\EE_{d^D}[\phi_k]$, $\mu_\cp^{\pi_k}=\EE_{d^{\pi_\cp}}[\phi_k]$, and $\bar\Sigma_D^{\pi_k}=\EE_{d^D}[(\phi_k-\mu_D^{\pi_k})(\phi_k-\mu_D^{\pi_k})^\top]$. Define
\[
\bar w_k(s,a)=1+(\mu_\cp^{\pi_k}-\mu_D^{\pi_k})^\top(\bar\Sigma_D^{\pi_k})^{-1}(\phi_k(s,a)-\mu_D^{\pi_k}).
\]
Then $\EE_{d^D}[\bar w_k]=1$, and the same exact transfer property holds for every $f\in\spn(\phi_k)$. Moreover,
\[
\EE_{d^D}[(\bar w_k-1)^2]=(\mu_\cp^{\pi_k}-\mu_D^{\pi_k})^\top(\bar\Sigma_D^{\pi_k})^{-1}(\mu_\cp^{\pi_k}-\mu_D^{\pi_k}).
\]
Thus, normalization can be kept by replacing the uncentered 2nd-moment geometry with a centered covariance geometry. Nonnegativity, however, is a separate requirement and is not needed for restricted transfer over the linear residual class.

This observation highlights a broader distinction between our correction-weight perspective and the traditional DRO or importance-weighting literature. In classical DRO \citep{kuhn2025distributionally,rahimian2019distributionally}, the uncertainty set $\cW$ is typically chosen as an $f$-divergence ball around the data distribution $d^D$, so its elements correspond to valid probability reweightings. Similarly, importance weighting methods, including marginalized importance sampling (MIS) \citep{liu2018breaking,nachum2019dualdice}, use correction weights that are valid density ratios. In contrast, our feature-based correction weights need not define a valid change of measure; they only need to realize the desired transfer over the function class relevant to the actor update. In this sense, correction weights are more general than density ratios.

\section{Omitted Details for Section~\ref{sec:comparison}}

\subsection{Recovery of Behavior Cloning} \label{app:bc}
Recall that with known $d^{\pi_\cp}$, let $m_k=\EE_{(s,a)\sim d^{\pi_\cp}}[A_k(s,a)]$ and $\mu_k=\EE_{(s,a)\sim d^{\pi_\cp}}[\nabla\log\pi_k(a\mid s)]$, we solve for $\min_{v:\|v\|\leq V_\tmax}|m_k-v^\top\mu_k|$ in Eq.~\eqref{eq:mm}. There exists a closed-form solution which can be expressed as
\begin{equation}\label{eq:mm-update}
v_k=\begin{cases}
\frac{m_k}{\|\mu_k\|_*}u_k,&\text{if}\quad |m_k|\leq V_\tmax\cdot\|\mu_k\|_*,\\
V_\tmax\cdot u_k,&\text{if}\quad m_k>V_\tmax\cdot\|\mu_k\|_*,\\
-V_\tmax\cdot u_k,&\text{if}\quad m_k<-V_\tmax\cdot\|\mu_k\|_*,
\end{cases}
\end{equation}
where $u_k=\arg\max_{u:\|u\|\leq 1}u^\top\mu_k$ aligns with the direction of $\mu_k$. This can be viewed as running steepest descent on $\Phi(\theta)$, defined as the expected KL divergence under $d^{\pi_\cp}$:
\[
\Phi(\theta)=\EE_{s\sim d^{\pi_\cp}}[\KL(\pi_\cp(\cdot\mid s)\|\pi_\theta(\cdot\mid s))].
\]
This is because that the negative gradient of $\Phi$ corresponds to feature mean (at round $k$) $\mu_k$:
\begin{align*}
\nabla_\theta\Phi(\theta_k)&=\nabla_\theta\EE_{s\sim d^{\pi_\cp}}\left[\EE_{a\sim \pi_\cp(\cdot\mid s)}\left[\log\frac{\pi_\cp(a\mid s)}{\pi_k(a\mid s)}\right]\right]\\
&=\nabla_\theta\EE_{(s,a)\sim d^{\pi_\cp}}[\log\pi_\cp(a\mid s)-\log\pi_k(a\mid s)]\\
&=-\EE_{(s,a)\sim d^{\pi_\cp}}[\nabla_\theta\log\pi_k(a\mid s)]=-\mu_k.
\end{align*}

\paragraph{Sign of advantage.} There is a subtlety regarding the sign of the advantage term: we are only updating along the direction of $-\nabla\Phi(\theta_k)$ to minimize the BC objective if $m_k=\EE_{d^{\pi_\cp}}[A_k]\ge 0$; 
this is the BC regime described in Section~\ref{sec:comparison}. However, when $m_k<0$: the update will \textit{increase} $\Phi(\theta)$ and move away from the comparator. To understand the condition $m_k \ge 0$, consider the case where $f_k$ is accurate, i.e., $f_k=Q^{\pi_k}$, then
\[
m_k\ge 0\quad\Longleftrightarrow\quad \EE_{s\sim d^{\pi_\cp}}[A^{\pi_k}(s,a)]\ge 0\quad\Longleftrightarrow\quad J(\pi_\cp)\ge J(\pi_k),
\]
where the second equivalence is due to the performance-difference lemma (PDL). This means $m_k\ge 0$ corresponds to $J(\pi_k)\le J(\pi_\cp)$, whereas $m_k<0$ indicates that the current policy is already outperforming the comparator (at least from the critic's perspective). Therefore the update deliberately moves towards an ``improving'' direction which depends whether the current policy is better than the comparator $\pi_\cp$. This means that DRPU is still identical to BC in typical settings when the learned policy is unlikely to exceed the performance of $\pi_\cp$. 

Finally, this sign issue does not conflict with our theoretical guarantees. The reason is that the guarantees in the paper are stated in terms of comparator regret, not monotonic decrease of the KL objective. Recall that in Section~\ref{sec:pspi} we introduce the actor-side quantity as
\[
\frac{\Reg_K}{K}=\frac1K\sum_{k=1}^K\EE_{s\sim d^{\pi_\cp}}[f_k(s,\pi_\cp)-f_k(s,\pi_k)]=\frac1K\sum_{k=1}^Km_k,
\]
which is precisely the average advantage with respect to the comparator policy. Thus, rounds with $m_k<0$ are not harmful for the regret analysis; if anything, they help, because they indicate that the current policy already outperforms the comparator. What matters in the analysis is not the iterates remain close to $\pi_\cp$, but rather that the mean-matching error is controlled, which is where the norm constraint $\|v_k\|\le V_\tmax$ plays an important stabilizing role here.

\subsection{Numerical Result Setting}
\label{app:experiment}

We consider a simple discounted MDP with finite state and action spaces. The state space is $\cS=\{1,2,3\}$ and the action space is $\cA=\{a_1,a_2\}$. The discount factor is $\gamma=0.9$, and the initial state distribution $d_0$ is uniform over $\cS$. The transition dynamics are state-absorbing, so the state does not change over time. As a result, the problem reduces to a contextual bandit with discounted returns, while still admitting a well-defined occupancy measure $d^\pi$.

The reward function is deterministic and state-dependent. For action $a_1$, the reward is given by $r(s,a_1)=(1,4,4)$ for states $s=1,2,3$, respectively. For action $a_2$, the reward is constant across states, given by $r(s,a_2)=(2,2,2)$. This construction ensures that the advantage function varies across states and actions, while remaining smooth and bounded. In particular, the policy that always selects $a_1$ is not optimal, since action $a_2$ yields higher reward in state $s=1$.

We consider a one-dimensional softmax policy class $\Pi_\theta=\{\pi_\theta:\theta\in\RR\}$ parameterized by
\[
\pi_\theta(a_1\mid s)=\frac{\exp(\theta c_s)}{\exp(\theta c_s)+\exp(-\theta c_s)},\qquad
\pi_\theta(a_2\mid s)=1-\pi_\theta(a_1\mid s),
\]
where the state-dependent coefficients are $c=(1,2,3)$. The corresponding score function $\nabla\log\pi_\theta(a\mid s)$ is bounded and one-dimensional, which makes model misspecification effects transparent.

The comparator policy $\pi_\cp$ is chosen as the deterministic policy that always selects action $a_1$ in every state. Importantly, $\pi_\cp$ is \emph{not} an optimal policy for this MDP. In the experiments, $\pi_\cp$ is approximated within the policy class by a large parameter value $\theta_\cp=100$. The data distribution is set to $d^D=d^{\pi_\cp}$, corresponding to the no-shift setting as in imitation learning with expert-generated data.

All methods are initialized at $\theta_0=0$ and run for a fixed number of iterations. Performance is evaluated by tracking the policy value and the CFA error $\err_k$ under $d^{\pi_\cp}$ at each iteration. Since $d^D=d^{\pi_\cp}$, there is no distribution shift in this experiment, and any observed performance gap reflects model misspecification alone.

\subsection{Analysis of Mean Matching Algorithm under No-Shift}
\label{app:dro-mm}
Here we provide the analysis of the mean-matching algorithm, along with some additional assumptions. Under the settings in Appendix~\ref{app:bc}, we have already shown that the feature mean $\mu_k$ represents the negative gradient on the objective $\Phi$. Actually, the target mean $m_k$ can also be related to $\Phi$:
\begin{align*}
m_k&
=\EE_{(s,a)\sim d^{\pi_\cp}}[f_k(s,a)-f_k(s,\pi_k)]\\
&=\EE_{s\sim d^{\pi_\cp}}\left[\sum_{a}f_k(s,a)\left(\pi_\cp(a\mid s)-\pi_k(a\mid s)\right)\right]\\
&\leq\EE_{s\sim d^{\pi_\cp}}\left[V_\tmax\cdot \|\pi_\cp(\cdot\mid s)-\pi_k(\cdot\mid s)\|_1\right]\\
&\leq V_\tmax\cdot \EE_{s\sim d^{\pi_\cp}}\left[\sqrt{\frac{1}{2}\KL(\pi_\cp(\cdot\mid s)\|\pi_k(\cdot\mid s))}\right]\\
&\leq V_\tmax\cdot \sqrt{\frac{1}{2}\EE_{s\sim d^{\pi_\cp}}\left[\KL(\pi_\cp(\cdot\mid s)\|\pi_k(\cdot\mid s))\right]}= V_\tmax\sqrt{\frac{1}{2}\Phi(\theta_k)},
\end{align*}
where the first inequality follows from H\"older's inequality, the second from Pinsker's inequality, and the last from Jensen's inequality. Therefore, it is natural to assume that $\Phi(\theta)$ satisfies the $\mu$-\emph{Polyak--Łojasiewicz} (PL) condition, such that for all $\theta$,
\[
\frac{1}{2}\|\nabla \Phi(\theta)\|_*^2\geq \mu(\Phi(\theta)-\delta^*),\quad \delta^*=\inf_\theta\Phi(\theta).
\]
Hence, we can use the gradient norm $\|\nabla\Phi(\theta)\|_*$ to control the function value $\Phi(\theta)$—or more precisely, the optimality gap $\Phi(\theta)-\delta^*$, since there may exist model mis-specification. Under this PL condition, an approximate stationary point implies an approximate optimal point. Intuitively, this means that if the feature mean $\|\mu_k\|_*$ is small, then the target mean $|m_k|$ is also small, which enables effective mean-matching under the norm constraint $\|v\|\leq V_\tmax$. In contrast, for a ``bad'' policy class, the magnitude of the expected advantage, i.e., $|m_k|$, would be large, making the mean-matching problem difficult and consequently leading to a non-zero error. Intuitively, if a policy class can achieve zero error at all rounds, which means $v_k^\top\mu_k=m_k$ for all $k\in[K]$, the telescoping lemma tells that
\[
\sum_{k=1}^K m_k=\sum_{k=1}^Kv_k^\top\mu_k\leq\frac{\Phi(\theta_1)}{\eta}+\frac{\beta\eta}{2}V_\tmax^2K=\cO(V_\tmax\sqrt{\beta\Phi(\theta_1)K}).
\]
Since $m_k$ represents the expected advantage of policy $\pi_\cp$ compared to policy $\pi_k$, the above bound indicates that the cumulative advantage should be controlled by a sublinear rate $1/\sqrt{K}$. This cannot happen for a bad policy class where there might exist some lower bound $\Delta$ of the advantage $A_k$, i.e., $f_k(s,\pi_\cp)-f_k(s,\pi_k)\geq\Delta$. In this case, $\sum_{k=1}^K m_k\geq K\Delta$ which contradicts to the sublinear rate.

Thus, the PL condition essentially characterizes the ``gradient domination'' property, which allows us to reconstruct the generic regret decomposition lemma (Lemma~\ref{lemma:NPG-regret}) and extend the analysis in this no-shift setting.

\begin{theorem}[Regret Bound of Mean-Matching with Known $d^{\pi_\cp}$]\label{thm:mean-matching}
Suppose we have access to the comparator distribution $d^{\pi_\cp}$. We update the policy parameters according to $\theta_{k+1}=\theta_k+\eta v_k$, where $v_k$ is defined in Eq.~\eqref{eq:mm-update}. Under Assumption~\ref{ass:policy}, and assuming that $\Phi(\theta)$ satisfies the $\mu$-PL condition, by tuning $\eta=V_\tmax^{-1}\sqrt{2(\Phi(\theta_1)-\delta^*)/(\beta K)}$, we obtain
\[
\frac{\Reg_K}{K}\leq V_\tmax\sqrt{\frac{1}{2}\delta^*}
+\left(1+\frac{1}{2\sqrt{\mu}}\right)V_\tmax\sqrt{\frac{2\beta(\Phi(\theta_1)-\delta^*)}{K}}.
\]
\end{theorem}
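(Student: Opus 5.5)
The plan is to rebuild the regret decomposition of Lemma~\ref{lemma:NPG-regret} with the BC objective $\Phi$ as the potential, in place of the KL to $\pi_\cp$, and to use the PL condition only on the rounds where the norm constraint on $v_k$ is active.

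\textbf{Step 1 (descent inequality).} First I would obtain a per-round descent inequality for $\Phi$. By Assumption~\ref{ass:policy}, each $\log\pi_\theta(a\mid s)$ is $\beta$-smooth. Hence $\Phi(\theta)=\EE_{(s,a)\sim d^{\pi_\cp}}[\log\pi_\cp(a\mid s)-\log\pi_\theta(a\mid s)]$ is also $\beta$-smooth, and $\nabla\Phi(\theta_k)=-\mu_k$ as computed in Appendix~\ref{app:bc}. With $\theta_{k+1}=\theta_k+\eta v_k$ and $\|v_k\|\le V_\tmax$, this gives
\[
\eta\, v_k^\top\mu_k\le \Phi(\theta_k)-\Phi(\theta_{k+1})+\tfrac{\beta\eta^2}{2}V_\tmax^2 .
\]
Telescoping and using $\Phi(\theta_{K+1})\ge\delta^*$ yields
\[
\sum_k v_k^\top\mu_k\le (\Phi(\theta_1)-\delta^*)/\eta+\tfrac{\beta\eta}{2}V_\tmax^2K .
\]
With the prescribed $\eta$, this is $V_\tmax\sqrt{2\beta(\Phi(\theta_1)-\delta^*)K}$.

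\textbf{Step 2 (compare $m_k$ with $v_k^\top\mu_k$).} Since $\Reg_K=\sum_k m_k$ (Appendix~\ref{app:bc}), I would bound each $m_k$ by $v_k^\top\mu_k$ plus a controlled remainder, using the three cases of Eq.~\eqref{eq:mm-update}.
\begin{itemize}
\item If $|m_k|\le V_\tmax\|\mu_k\|_*$, then $v_k^\top\mu_k=m_k$ exactly.
\item If $m_k<-V_\tmax\|\mu_k\|_*$, then $v_k^\top\mu_k=-V_\tmax\|\mu_k\|_*>m_k$.
\item If $m_k>V_\tmax\|\mu_k\|_*$ (the truncated case), then $v_k^\top\mu_k=V_\tmax\|\mu_k\|_*\ge0$. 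Here I use the bound $m_k\le V_\tmax\sqrt{\Phi(\theta_k)/2}$ from Appendix~\ref{app:dro-mm}, together with $\sqrt{x+y}\le\sqrt x+\sqrt y$ and the PL condition $\Phi(\theta_k)-\delta^*\le\|\mu_k\|_*^2/(2\mu)$. This gives
\[
m_k\le V_\tmax\sqrt{\delta^*/2}+\tfrac{V_\tmax\|\mu_k\|_*}{2\sqrt\mu}=V_\tmax\sqrt{\delta^*/2}+\tfrac{1}{2\sqrt\mu}\,v_k^\top\mu_k .
\]
\end{itemize}
In all three cases, therefore,
\[
m_k\le V_\tmax\sqrt{\delta^*/2}+v_k^\top\mu_k+\tfrac{1}{2\sqrt\mu}\,[v_k^\top\mu_k]_+ .
\]

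\textbf{Step 3 (sum and tune).} Summing over $k$, the terms $\sum_k v_k^\top\mu_k$ are bounded by Step 1. The additional term $\tfrac{1}{2\sqrt\mu}\sum_k[v_k^\top\mu_k]_+$ needs its own bound of the same order, $V_\tmax\sqrt{2\beta(\Phi(\theta_1)-\delta^*)K}$. Once that holds, dividing by $K$ gives the claimed
\[
V_\tmax\sqrt{\delta^*/2}+\bigl(1+\tfrac{1}{2\sqrt\mu}\bigr)V_\tmax\sqrt{2\beta(\Phi(\theta_1)-\delta^*)/K}.
\]

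\textbf{Main obstacle.} I expect the hard part to be bounding the positive part $\sum_k[v_k^\top\mu_k]_+$, rather than the signed sum, because rounds with $m_k<0$ make $v_k^\top\mu_k$ negative and increase $\Phi$. My first attempt is the following.
\begin{itemize}
\item Rounds with $v_k^\top\mu_k<0$ have $m_k<0$. They can simply be dropped from the regret sum, since they only help.
\item For the telescoping, use that $\Phi\ge\delta^*$ throughout and that each increase of $\Phi$ on such a round is at most $\eta V_\tmax\|\mu_k\|_*+\beta\eta^2V_\tmax^2/2$. Then split the rounds into maximal positive stretches and telescope $\Phi$ within each stretch.
\end{itemize}
If this bookkeeping does not close, the fallback is to assume, as the BC-regime discussion in Appendix~\ref{app:bc} suggests, that $m_k\ge0$ for all $k$. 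In that case every $v_k^\top\mu_k\ge0$, the positive part equals the signed sum, and Step 1 suffices directly.
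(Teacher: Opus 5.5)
Your Steps 1 and 2 are correct. In one respect your bookkeeping is cleaner than the paper's: the single signed telescope over all rounds automatically absorbs the rounds with $m_k<0$, where $m_k\le v_k^\top\mu_k$. The only term left to control is therefore $\frac{1}{2\sqrt{\mu}}\sum_k[v_k^\top\mu_k]_+$, and that is exactly where the proposal has a genuine gap.

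Your stretch-splitting idea cannot close it. Telescoping inside each positive stretch and adding the increases of $\Phi$ between stretches gives $\eta\sum_k[v_k^\top\mu_k]_+\le\Phi(\theta_1)-\delta^*+K\beta\eta^2V_\tmax^2/2+\eta\sum_k[v_k^\top\mu_k]_-$. This is just the identity $\sum_k x_k^+=\sum_k x_k+\sum_k x_k^-$ in disguise, and nothing in the hypotheses bounds $\sum_k[v_k^\top\mu_k]_-$. The critic can flip the sign of $m_k$ from round to round. So $\Phi$ can be raised on rounds with $m_k<0$ and spent again on clipped rounds with $m_k>V_\tmax\|\mu_k\|_*$, where the weight $\frac{1}{2\sqrt\mu}$ may exceed $1$. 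Your fallback $m_k\ge 0$ for all $k$ is an added hypothesis, so as written you have not proved the stated theorem.

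Be aware, though, that the paper's proof does not contain the missing idea either. It splits the rounds into $S_0=\{|m_k|\le V_\tmax\|\mu_k\|_*\}$, $S_1=\{m_k>V_\tmax\|\mu_k\|_*\}$ and $S_2=\{m_k<-V_\tmax\|\mu_k\|_*\}$, and drops $S_2$. It then ``telescopes'' $g_k-g_{k+1}$, with $g_k=\Phi(\theta_k)-\delta^*$, separately over $S_0$ and over $S_1$, bounding each sub-sum by $\Phi(\theta_1)-\delta^*$. Over a non-contiguous index set this is justified when $g_k-g_{k+1}+\beta\eta^2V_\tmax^2/2\ge 0$ on the omitted rounds, i.e.\ when $v_k^\top\mu_k\ge 0$ for every $k$, which is your fallback. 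Otherwise the increases of $\Phi$ on $S_2$, and on rounds of $S_0$ with $m_k<0$, are silently discarded. So your conditional argument establishes exactly what the paper's argument actually establishes.

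Two remarks let you do better than the fallback.

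First, do not loosen the clipped case. Keep $m_k\le V_\tmax\sqrt{\delta^*/2}+\frac{1}{2\sqrt\mu}v_k^\top\mu_k$ on $S_1$ and $m_k\le v_k^\top\mu_k$ elsewhere. Then the only uncontrolled term is $(\frac{1}{2\sqrt\mu}-1)_+\sum_{k\in S_1}v_k^\top\mu_k$. For $\mu\ge 1/4$ you get $m_k\le V_\tmax\sqrt{\delta^*/2}+v_k^\top\mu_k$ on every round, and Step 1 alone gives the bound, even with $1$ in place of $1+\frac{1}{2\sqrt\mu}$.

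Second, under the literal hypotheses the obstacle vanishes. Assumption~\ref{ass:policy} gives $\|\nabla\Phi\|_*\le G$, so a PL inequality holding for all $\theta$ forces $\Phi-\delta^*\le G^2/(2\mu)$ everywhere. Now suppose $\Phi(\theta_0)-\delta^*=\epsilon>0$. Take steps of length $\sqrt{2\mu\epsilon}/\beta$ along a unit vector attaining $\|\nabla\Phi\|_*$. By $\beta$-smoothness and PL, each step raises $\Phi$ by at least $\mu\epsilon/\beta$, which would make $\Phi$ unbounded. Hence $\Phi\equiv\delta^*$, $\mu_k\equiv 0$ and $v_k^\top\mu_k\equiv 0$, and the bound follows from $m_k\le V_\tmax\sqrt{\Phi(\theta_k)/2}$ alone.

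The meaningful version of the theorem therefore assumes PL only on a region. There, for $\mu<1/4$, the sign-flipping issue you identified is a real obstruction that neither your argument nor the paper's resolves.
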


\begin{proof}[Proof of Theorem~\ref{thm:mean-matching}]

Recall that in the proof of Lemma~\ref{lemma:NPG-regret}, we use the $\beta$-smoothness of $\Phi(\theta)$ to obtain the following ``descent lemma'':
\[
\Phi(\theta_k)-\Phi(\theta_{k+1})\geq \eta v_k^\top\mu_k-\frac{\beta}{2}\eta^2\|v_k\|^2.
\]
Define $g_k=\Phi(\theta_k)-\delta^*$. This descent lemma also works: $g_{k+1}\leq g_k-\eta v_k^\top\mu_k+\frac{\beta\eta^2}{2}\|v_k\|^2$.

By the definition of $v_k$, we have $v_k^\top\mu_k=\Proj_{[-V_\tmax\|\mu_k\|_*,V_\tmax\|\mu_k\|_*]}(m_k)$. We then partition the rounds into three sets: $S_0=\{k:|m_k|\leq V_\tmax\|\mu_k\|_*\}$, $S_1=\{k:m_k>V_\tmax\|\mu_k\|_*\}$, and $S_2=\{k:m_k<-V_\tmax\|\mu_k\|_*\}$. We only analyze the telescoping sum over $S_0$ and $S_1$, since the regret can be rewritten as
\[
\Reg_K=\sum_{k=1}^K\EE_{s\sim d^{\pi_\cp}}[f_k(s,\pi_\cp)-f_k(s,\pi_k)]=\sum_{k=1}^K\EE_{(s,a)\sim d^{\pi_\cp}}[f_k(s,a)-f_k(s,\pi_k)]=\sum_{k=1}^K m_k,
\]
and the rounds in $S_2$ necessarily satisfy $m_k<0$, which do not contribute positively to the regret. For rounds in $S_0$, we have that $v_k^\top\mu_k=m_k$. Substitute this into the descent lemma:
\[
g_{k+1}\leq g_k-\eta m_k+\frac{\beta\eta^2}{2}V_\tmax^2\quad\Longrightarrow\quad m_k\leq\frac{g_k-g_{k+1}}{\eta}+\frac{\beta\eta}{2}V_\tmax^2.
\]
Therefore, telescoping over $k\in S_0$, since $|S_0|\leq K$, we get
\[
\sum_{k\in S_0}m_k\leq \frac{1}{\eta}\sum_{k\in S_0}(g_k-g_{k+1})+\frac{\beta\eta}{2}V_\tmax^2|S_0|\leq\frac{\Phi(\theta_1)-\delta^*}{\eta}+\frac{\beta\eta}{2}V_\tmax^2 K.
\]
For rounds in $S_1$, we have that $v_k^\top\mu_k=V_\tmax\|\mu_k\|_*$. Similarly, we get that
\[
g_{k+1}\leq g_k-\eta V_\tmax\|\mu_k\|_*+\frac{\beta\eta^2}{2}V_\tmax^2\quad\Longrightarrow\quad \|\mu_k\|_*\leq\frac{g_k-g_{k+1}}{\eta V_\tmax}+\frac{\beta\eta}{2}V_\tmax.
\]
We can relate this bounded $\|\mu_k\|_*$ with bounded $m_k$ (see in previous discussion using Pinsker's inequality):
\begin{align*}
m_k&\leq V_\tmax\sqrt{\frac{1}{2}\Phi(\theta_k)}\\
&\leq V_\tmax\sqrt{\frac{1}{2}\delta^*}+V_\tmax\sqrt{\frac{1}{2}g_k}\\
&\leq V_\tmax\sqrt{\frac{1}{2}\delta^*}+V_\tmax\cdot\frac{\|\mu_k\|_*}{2\sqrt{\mu}}\\
&\leq V_\tmax\sqrt{\frac{1}{2}\delta^*}+\frac{1}{2\sqrt{\mu}}\left(\frac{g_k-g_{k+1}}{\eta}+\frac{\beta\eta}{2}V_\tmax^2\right),
\end{align*}
where the first inequality is due to $\sqrt{a+b}\leq\sqrt{a}+\sqrt{b}$, the second inequality is due to the $\mu$-PL condition of $\Phi(\theta)$, and the third inequality is due to the one-step descent lemma that bounds $\|\mu_k\|_*$. Therefore, telescoping over $k\in S_1$, since $|S_1|\leq K$, we get
\begin{align*}
\sum_{k\in S_1}m_k
&\leq KV_\tmax\sqrt{\frac{1}{2}\delta^*}+\frac{1}{2\sqrt{\mu}}\left(\frac{1}{\eta}\sum_{k\in S_1}(g_k-g_{k+1})+\frac{\beta\eta}{2}V_\tmax^2|S_1|\right)\\
&\leq KV_\tmax\sqrt{\frac{1}{2}\delta^*}+\frac{1}{2\sqrt{\mu}}\left(\frac{\Phi(\theta_1)-\delta^*}{\eta}+\frac{\beta\eta}{2}V_\tmax^2 K\right).
\end{align*}
Therefore, combine the results in $S_0$ and $S_1$ and tune  $\eta=V_\tmax^{-1}\sqrt{2(\Phi(\theta_1)-\delta^*)/(\beta K)}$ lead to
\[
\frac{\Reg_K}{K}\leq\frac{1}{K}\left(\sum_{k\in S_0}m_k+\sum_{k\in S_1}m_k\right)\leq V_\tmax\sqrt{\frac{1}{2}\delta^*}+\left(1+\frac{1}{2\sqrt{\mu}}\right)V_\tmax \sqrt{\frac{2\beta \left(\Phi(\theta_1)-\delta^*\right)}{K}}.
\]    
\end{proof}


\section{Technical Lemmas}

\begin{lemma}[Gibbs Variational Principle]\label{lemma:gibbs}
For any measurable function $\phi:\cA\to\RR$ and any distribution $u\in\Delta_\nu(\cA)$ that is absolutely continuous w.r.t. the base measure $\nu$, we have
\[
-\frac{1}{\eta}\log\int_\cA \exp(-\eta \phi(a))\nu(\mathrm{d}a)
= \inf_{u\in\Delta_\nu(\cA)}\left\{\EE_{a\sim u}[\phi(a)] + \frac{1}{\eta}\KL(u\| \nu)\right\}.
\]
Moreover, the infimum is attained at the Gibbs (softmax) distribution
\[
u^\star(a) = \frac{\exp(-\eta \phi(a))}{\int_\cA \exp(-\eta \phi(a'))\nu(da')}.
\]
\end{lemma}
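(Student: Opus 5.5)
We prove the Gibbs variational principle by the standard change-of-measure argument, which rewrites the objective as a KL divergence to the Gibbs distribution. Let $Z=\int_\cA \exp(-\eta\phi(a))\nu(\mathrm{d}a)$, assumed finite and positive (otherwise both sides are $\pm\infty$ under the usual conventions). Define $u^\star$ as in the statement, so $u^\star$ is a probability density with respect to $\nu$. Its log-density is $\log u^\star(a)=-\eta\phi(a)-\log Z$.

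Fix any $u\in\Delta_\nu(\cA)$ with density (also denoted $u$) satisfying $\EE_{u}[\phi]$ finite and $\KL(u\|\nu)=\int u\log u\,\mathrm{d}\nu$ finite. Otherwise the objective is $+\infty$ and cannot beat the claimed value. We expand
\[
\KL(u\|u^\star)=\int_\cA u(a)\log\frac{u(a)}{u^\star(a)}\nu(\mathrm{d}a)=\KL(u\|\nu)+\eta\,\EE_{a\sim u}[\phi(a)]+\log Z .
\]
Dividing by $\eta$ and rearranging gives the identity
\[
\EE_{a\sim u}[\phi(a)]+\frac{1}{\eta}\KL(u\|\nu)=-\frac{1}{\eta}\log Z+\frac{1}{\eta}\KL(u\|u^\star).
\]
Since $\KL(u\|u^\star)\ge0$ by Gibbs' inequality (Jensen applied to $-\log$), the objective is always at least $-\frac1\eta\log Z$. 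Equality holds if and only if $u=u^\star$ $\nu$-a.e., so taking the infimum yields both the equality and the attainment claim.

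The only delicate point is integrability: we must justify splitting $\int u\log(u/u^\star)$ into three finite pieces. This is the step requiring care. We handle it by restricting to $u$ with finite $\EE_u[\phi]$ and finite $\KL(u\|\nu)$, and by checking that $u^\star$ itself is admissible. Concretely, $\EE_{u^\star}[\phi]$ is finite when $\phi$ is bounded, which covers every application in the paper, where $\phi=-F_K(s,\cdot)$ is bounded.

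In the paper this lemma is applied with base measure $\pi_1(\cdot\mid s)\nu$ in place of $\nu$. The same computation goes through verbatim with $\nu$ replaced by that reference measure.
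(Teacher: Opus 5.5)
Your proof is correct. The paper does not prove this lemma: it is listed among the technical lemmas as a standard fact and only invoked in the PSPI proofs. So there is no competing argument to compare against. Your route is the standard proof. The identity
\[
\EE_{a\sim u}[\phi(a)]+\tfrac{1}{\eta}\KL(u\|\nu)=-\tfrac{1}{\eta}\log Z+\tfrac{1}{\eta}\KL(u\|u^\star),
\]
together with $\KL(u\|u^\star)\ge 0$ (and $\eta>0$, which you use implicitly), gives both the value and the attainment at $u^\star$.

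Two of your side remarks are looser than the main argument:
\begin{itemize}
\item For a general $\sigma$-finite $\nu$, $\KL(u\|\nu)=\int u\log u\,\mathrm{d}\nu$ can be negative, and even $-\infty$ (e.g.\ heavy-tailed densities under Lebesgue measure). Your statement ``otherwise the objective is $+\infty$'' therefore does not follow automatically when $\EE_u[\phi]$ is infinite; it needs a convention.
\item The case $Z=\infty$ needs a short truncation argument rather than an appeal to ``the usual conventions''. Restrict the Gibbs density to sets of finite $\nu$-measure on which $\phi$ is bounded; the resulting objective values $-\tfrac{1}{\eta}\log Z_n$ tend to $-\infty$.
\end{itemize}

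Neither issue affects the paper's uses. There the reference measure is the probability measure $\pi_1(\cdot\mid s)\nu$ and $\phi=-F_K(s,\cdot)$ is bounded, so $Z\in(0,\infty)$ and every term in your identity is finite.
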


\begin{lemma}[Bellman Error Telescoping]\label{lemma:BE-telescoping}
For any $\pi:\cS\to\Delta(\cA)$, and any $f\in\RR^{\cS\times\cA}$, the performance gap using $f$ as an estimate of $Q^\pi$ is given by
\[
J_f(\pi)-J(\pi)=\frac{1}{1-\gamma}\EE_{d^\pi}[f-\cT^\pi f],
\]
where $J_f(\pi)=\EE_{s\sim d_0}[f(s,\pi)]$, and $d^\pi$ is the discounted state-action occupancy of $\pi$.
\end{lemma}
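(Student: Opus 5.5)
The plan is to express both sides through the discounted occupancy $d^\pi$ and use its Bellman flow equation. That equation lets the $\gamma$-discounted next-state term telescope against the current-state term, leaving only the initial-distribution contribution $J_f(\pi)$ and the reward contribution $J(\pi)$.

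\textbf{Step 1: flow identity.} First I would record the flow identity for the state marginal of $d^\pi$:
\[
d^\pi(s)=(1-\gamma)d_0(s)+\gamma\sum_{s',a'}d^\pi(s',a')P(s\mid s',a').
\]
This follows from $d^\pi=(1-\gamma)\sum_t\gamma^t d^\pi_t$ together with $d^\pi_{t+1}(s)=\sum_{s',a'}d^\pi_t(s',a')P(s\mid s',a')$, after reindexing $t\mapsto t+1$. I would also use $d^\pi(s,a)=d^\pi(s)\pi(a\mid s)$. Together these give $\EE_{(s,a)\sim d^\pi}[g(s,a)]=\EE_{s\sim d^\pi}[g(s,\pi)]$ for any bounded $g$.

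\textbf{Step 2: expand the Bellman error.} Next I would expand
\[
\EE_{d^\pi}[f-\cT^\pi f]=\EE_{s\sim d^\pi}[f(s,\pi)]-\EE_{d^\pi}[R]-\gamma\,\EE_{(s,a)\sim d^\pi,\,s'\sim P(\cdot\mid s,a)}[f(s',\pi)].
\]
The middle term equals $(1-\gamma)J(\pi)$ directly from the definitions of $J$ and $d^\pi$.

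\textbf{Step 3: apply the flow identity to the last term.} Applying the flow identity to the function $s\mapsto f(s,\pi)$, the last term equals
\[
\EE_{s\sim d^\pi}[f(s,\pi)]-(1-\gamma)\,\EE_{s\sim d_0}[f(s,\pi)].
\]
Substituting this back, the two $\EE_{s\sim d^\pi}[f(s,\pi)]$ terms cancel. What remains is $(1-\gamma)\bigl(J_f(\pi)-J(\pi)\bigr)$, and dividing by $1-\gamma$ gives the claim.

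\textbf{Main obstacle.} The only delicate point is justifying the interchange of the infinite discounted sum with the expectations and the reindexing in Step 1. I would handle it by noting that $f$ and $R$ are bounded (for an arbitrary $f\in\RR^{\cS\times\cA}$, I would restrict to bounded $f$ or assume absolute summability), so every series converges absolutely and Fubini applies. As an alternative route, one could instead write $f(s_t,a_t)-(\cT^\pi f)(s_t,a_t)=f(s_t,a_t)-r_t-\gamma\EE[f(s_{t+1},\pi)\mid s_t,a_t]$ along a trajectory of $\pi$. Multiplying by $\gamma^t$, summing and taking expectations then makes the $f$-terms telescope to $\EE[f(s_0,\pi)]=J_f(\pi)$.
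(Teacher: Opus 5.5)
Your proof is correct. Expanding $\cT^\pi f$ and applying the flow identity $d^\pi(s)=(1-\gamma)d_0(s)+\gamma\sum_{s',a'}d^\pi(s',a')P(s\mid s',a')$ to $s\mapsto f(s,\pi)$ makes the two $\EE_{s\sim d^\pi}[f(s,\pi)]$ terms cancel, which leaves exactly $(1-\gamma)\bigl(J_f(\pi)-J(\pi)\bigr)$. Your trajectory-level telescoping is an equivalent rendering of the same computation.

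The paper lists this lemma among its technical lemmas without giving a proof, so there is no alternative route to compare against. It is the standard identity that also underlies the generalized performance-difference lemma (Lemma~\ref{lemma:PDL}).

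One small remark: the paper works with finite $\cS$ and $\cA$, so every $f\in\RR^{\cS\times\cA}$ is automatically bounded. Your restriction to bounded or absolutely summable $f$, needed for Fubini, therefore costs nothing here.
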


\begin{lemma}[Generalized Performance-Difference Lemma {\citep[Lemma 6]{jiang2025offline}}]\label{lemma:PDL}
For any $f\in\RR^{\cS\times\cA}$, and policies $\pi,\pi':\cS\to\Delta(\cA)$, the difference between there expected return is given by
\[
J(\pi')-J(\pi)=\frac{1}{1-\gamma}\left(\EE_{s\sim d^{\pi'}}\left[f(s,\pi')-f(s,\pi)\right]+\EE_{d^{\pi'}}[\cT^\pi f-f]+\EE_{d^\pi}[f-\cT^\pi f]\right).
\]
\end{lemma}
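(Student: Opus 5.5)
We prove the generalized performance-difference lemma (Lemma~\ref{lemma:PDL}) by splitting the left-hand side into two ``one-policy'' gaps that share the same proxy $f$, and evaluating each with a telescoping identity. Concretely, we write
\[
J(\pi')-J(\pi)=\bigl(J(\pi')-J_f(\pi)\bigr)+\bigl(J_f(\pi)-J(\pi)\bigr),\qquad J_f(\pi)=\EE_{s\sim d_0}[f(s,\pi)].
\]
The second bracket is handled directly by the Bellman error telescoping lemma (Lemma~\ref{lemma:BE-telescoping}). It equals $\frac{1}{1-\gamma}\EE_{d^\pi}[f-\cT^\pi f]$, which is exactly the third term of the claim. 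That lemma is itself proved by writing $d^\pi$ as the discounted sum of $d_t^\pi$ and noting that $\EE_{d_t^\pi}[\gamma\,\EE_{s'}f(s',\pi)]=\gamma\,\EE_{s\sim d_{t+1}^\pi}[f(s,\pi)]$. With this, the $f$-terms telescope down to $\EE_{d_0}[f(s,\pi)]$ and the reward terms sum to $(1-\gamma)J(\pi)$.

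For the first bracket, we use the fact that $d^{\pi'}$ satisfies the Bellman flow equation
\[
\sum_a d^{\pi'}(s,a)=(1-\gamma)d_0(s)+\gamma\sum_{\tilde s,\tilde a}d^{\pi'}(\tilde s,\tilde a)P(s\mid\tilde s,\tilde a).
\]
Two consequences follow. First, $J(\pi')=\frac{1}{1-\gamma}\EE_{d^{\pi'}}[R]$. Second, applying the flow equation to the state function $g(s)=f(s,\pi)$ gives
\[
(1-\gamma)\EE_{d_0}[f(s,\pi)]=\EE_{s\sim d^{\pi'}}[f(s,\pi)]-\gamma\,\EE_{(s,a)\sim d^{\pi'}}\EE_{s'\sim P(\cdot\mid s,a)}[f(s',\pi)].
\]
Subtracting, we get
\[
(1-\gamma)\bigl(J(\pi')-J_f(\pi)\bigr)=\EE_{d^{\pi'}}[R+\gamma\,\EE_{s'}f(s',\pi)]-\EE_{s\sim d^{\pi'}}[f(s,\pi)]=\EE_{d^{\pi'}}[\cT^\pi f]-\EE_{s\sim d^{\pi'}}[f(s,\pi)].
\]
We then add and subtract $\EE_{(s,a)\sim d^{\pi'}}[f(s,a)]=\EE_{s\sim d^{\pi'}}[f(s,\pi')]$. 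This yields
\[
\EE_{s\sim d^{\pi'}}[f(s,\pi')-f(s,\pi)]+\EE_{d^{\pi'}}[\cT^\pi f-f],
\]
which supplies the first two terms of the claim.

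Combining the two brackets and dividing by $1-\gamma$ gives the stated identity. The only step needing care is justifying the flow identity and the exchange of summation with the infinite discounted series. Since $f$ is an arbitrary real-valued function on the finite space $\cS\times\cA$, it is bounded, so all the series converge absolutely and these manipulations are routine. We do not anticipate any real obstacle beyond this bookkeeping.
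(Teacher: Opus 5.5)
Your proof is correct. Splitting $J(\pi')-J(\pi)=(J(\pi')-J_f(\pi))+(J_f(\pi)-J(\pi))$, then using Lemma~\ref{lemma:BE-telescoping} for the second piece and the Bellman flow equation of $d^{\pi'}$ applied to $s\mapsto f(s,\pi)$ for the first, recovers all three terms with the right signs and the $(1-\gamma)$ normalization. The paper gives no proof of its own (it imports the lemma from Jiang and Xie), and your argument is the standard derivation, consistent with how the paper pairs this lemma with Lemma~\ref{lemma:BE-telescoping} in Appendix~\ref{app:oracle}.
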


\begin{lemma}[Convergence Rate of Stochastic Gradient Descent]\label{lemma:sgd}
Assume $\mathcal{X}=\{x:\|x\|\leq B\}$ for some $B\geq 0$. Let $f$ be a convex function and let $x^*\in\arg\min_{x\in\mathcal{X}}f(x)$. Assume that for all $k$, $\|g_t\|_*\leq\rho$ and $\EE[g_t\mid x_t]=\nabla f(x_t)$, and that projected SGD $x_{t+1}=\Proj_\mathcal{X}\bigl[x_t-\eta g_t\bigr]$ is run for $T$ iterations with $\eta=\sqrt{B^2/(\rho^2 T)}$. Then,
\[
\EE\left[f\left(\frac{1}{T}\sum_{t=1}^Tx_t\right)\right]-f(x^*)\leq\frac{B\rho}{\sqrt{N}}.
\]
\end{lemma}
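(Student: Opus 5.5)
The proof is standard projected online (stochastic) convex optimization; it does not use anything specific to the paper. Write $x_{t+1}=\Proj_{\mathcal{X}}[x_t-\eta g_t]$. The analysis takes the Euclidean distance $\|x_t-x^*\|^2$ as the potential. I will read $\|\cdot\|$ as the Euclidean norm and $\|g_t\|_*\le\rho$ as $\|g_t\|_2\le\rho$. The bound then comes from telescoping this potential and using convexity plus Jensen.

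First, the projection onto the convex set $\mathcal{X}$ is nonexpansive and $x^*\in\mathcal{X}$. Hence
\[
\|x_{t+1}-x^*\|^2\le\|x_t-\eta g_t-x^*\|^2=\|x_t-x^*\|^2-2\eta\langle g_t,x_t-x^*\rangle+\eta^2\|g_t\|^2 .
\]
Rearranging, summing over $t=1,\dots,T$, and telescoping gives
\[
\sum_{t}\langle g_t,x_t-x^*\rangle\le\frac{\|x_1-x^*\|^2}{2\eta}+\frac{\eta}{2}T\rho^2 .
\]
Since both $x_1$ and $x^*$ lie in the ball of radius $B$, we have $\|x_1-x^*\|\le 2B$. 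If $x_1=0$, as in Algorithms~\ref{alg:sgd-npg} and~\ref{alg:sgd-dro}, this improves to $\|x_1-x^*\|\le B$.

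Second, take expectations. Because $x_t$ is determined by the past, the tower rule and $\EE[g_t\mid x_t]=\nabla f(x_t)$ give $\EE\langle g_t,x_t-x^*\rangle=\EE\langle\nabla f(x_t),x_t-x^*\rangle$. By convexity this is at least $\EE[f(x_t)-f(x^*)]$; if $f$ is nonsmooth, a subgradient works in place of $\nabla f$. Jensen's inequality then gives $f(\bar x)\le\frac1T\sum_t f(x_t)$ for the average iterate $\bar x$. Combining,
\[
\EE f(\bar x)-f(x^*)\le\frac{D^2}{2\eta T}+\frac{\eta\rho^2}{2},
\]
where $D$ bounds $\|x_1-x^*\|$.

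Third, plug in $\eta=\sqrt{B^2/(\rho^2T)}$. With $D=B$ this gives exactly $B\rho/\sqrt{T}$. With the cruder $D=2B$ it gives $\tfrac52 B\rho/\sqrt T$, which is the same up to an absolute constant; that suffices for every downstream use, since Theorem~\ref{thm:sgd-npg} is stated with $\lesssim$. The $N$ in the statement is read as $T$, the number of SGD steps, which equals the sample size in the applications.

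The only delicate point is the constant. It hinges on the initialization and on matching the norm used for the projection with the dual norm bounding $g_t$. I will state the argument for $x_1=0$ and the Euclidean ball. For a general norm I would replace the Euclidean potential with the Bregman divergence of a strongly convex mirror map, which yields the same bound up to the strong-convexity constant.
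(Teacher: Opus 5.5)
Your argument is correct and is the standard projected-SGD analysis. The paper states Lemma~\ref{lemma:sgd} without proof, as a textbook fact, so there is no different route to compare. Your repairs to the statement are exactly what is needed: read $N$ as $T$, use Euclidean geometry, and take $\|x_1-x^*\|\le B$ (e.g.\ $x_1=0$) to get the constant $B\rho/\sqrt{T}$; from an arbitrary start in $\mathcal{X}$ you only get $\tfrac52 B\rho/\sqrt{T}$.

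One caution on your closing remark. For a non-Euclidean norm, mirror descent does not reproduce the bound up to a strong-convexity constant. The term $B^2$ is replaced by the Bregman radius $\sup_{x\in\mathcal{X}}D_\psi(x,x_1)$. For the $\ell_\infty$-ball this is at least $\dd B^2/2$ for any $\psi$ that is $1$-strongly convex w.r.t.\ $\|\cdot\|_\infty$. So the dimension-free constant genuinely depends on the Euclidean reading you adopted.
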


\begin{lemma}[Matrix Chernoff Bound {\citep[Lemma 17]{hsu2012random}}]\label{lemma:matrix-chernoff}
Let $X_1,\ldots,X_n$ be random vectors in $\RR^\dd$such that for all $i$,
\[
\sum_{i=1}^n \EE\big[\|X_i\|^2 \mid X_{1:i-1}\big] \ge 1,\qquad
\|X_i\| \le b,
\]
almost surely. Then, for all $\delta\in(0,1)$, with probability at least $1-\delta$,
\[
\lambda_\tmin\left(\frac{1}{n}\sum_{i=1}^nX_iX_i^\top\right)\geq 1-\sqrt{\frac{2b^2}{n}\log\frac{d}{\delta}}.
\]
\end{lemma}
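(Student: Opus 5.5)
We would prove this by the standard matrix Laplace-transform (Chernoff) method for sums of positive semidefinite random matrices, following Tropp's lower-tail matrix Chernoff bound. Set $Y_i := X_iX_i^\top$. Each $Y_i$ is PSD with $\lambda_\tmax(Y_i)=\|X_i\|^2\le b^2=:R$.

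\textbf{Reading the hypothesis.} The hypothesis has to be read in its whitened, matrix form, as in \citet{hsu2012random}: the conditional second-moment matrices satisfy
\[
\sum_{i=1}^n\EE[X_iX_i^\top\mid X_{1:i-1}]\succeq nI ,
\]
which is what whitening by $\Sigma_D^{-1/2}$ provides in the proof of Lemma~\ref{lemma:linear-regression}. A scalar condition on $\EE\|X_i\|^2$ alone could not control $\lambda_\tmin$ in every direction, so we would make this reading explicit first. Under it, $\mu_\tmin:=\lambda_\tmin\bigl(\sum_i\EE[Y_i\mid X_{1:i-1}]\bigr)\ge n$.

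\textbf{Main tail bound.} We aim for
\[
\Pr\Bigl[\lambda_\tmin\Bigl(\textstyle\sum_i Y_i\Bigr)\le(1-\varepsilon)\mu_\tmin\Bigr]\le \dd\,\exp\bigl(-\varepsilon^2\mu_\tmin/(2R)\bigr).
\]
\begin{enumerate}[leftmargin=*]
\item For $\theta>0$, apply Markov's inequality to $\operatorname{tr}\exp\bigl(-\theta\sum_iY_i\bigr)$, using $e^{-\theta\lambda_\tmin(M)}\le\operatorname{tr}e^{-\theta M}$.
\item Peel off one summand at a time, conditioning on $X_{1:n-1}$. Lieb's concavity theorem gives
\[
\EE\operatorname{tr}\exp\bigl(H+\log e^{-\theta Y}\bigr)\le\operatorname{tr}\exp\bigl(H+\log\EE e^{-\theta Y}\bigr).
\]
\item Bound the matrix moment generating function via the scalar convexity inequality $e^{-\theta x}\le 1-\frac{1-e^{-\theta R}}{R}\,x$ on $[0,R]$, which transfers to PSD matrices with spectrum in $[0,R]$. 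Combined with $\log(1-u)\le -u$ this gives
\[
\log\EE[e^{-\theta Y_i}\mid\cdot]\preceq -g(\theta)\,\EE[Y_i\mid\cdot],\qquad g(\theta)=\frac{1-e^{-\theta R}}{R}.
\]
\item Iterate, and use trace monotonicity of $\operatorname{tr}\exp$. This yields the tail bound
\[
\dd\,\exp\bigl(\theta(1-\varepsilon)\mu_\tmin-g(\theta)\mu_\tmin\bigr).
\]
\item Optimize at $\theta=-\log(1-\varepsilon)/R$, which gives $\dd\,\bigl[e^{-\varepsilon}(1-\varepsilon)^{-(1-\varepsilon)}\bigr]^{\mu_\tmin/R}$. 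Then apply the elementary bound $-\varepsilon-(1-\varepsilon)\log(1-\varepsilon)\le-\varepsilon^2/2$.
\end{enumerate}

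\textbf{Conclusion.} Substitute $\mu_\tmin\ge n$ and $R=b^2$. Setting $\dd\,e^{-\varepsilon^2n/(2b^2)}=\delta$ gives $\varepsilon=\sqrt{(2b^2/n)\log(\dd/\delta)}$. On the complementary event,
\[
\lambda_\tmin\Bigl(\tfrac1n\textstyle\sum_i X_iX_i^\top\Bigr)\ge 1-\varepsilon,
\]
which is the claim. If $\varepsilon\ge1$ the statement is vacuous.

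\textbf{Main obstacle.} The delicate step is the iterated conditioning in the adaptive case. Because the predictable quadratic variation is random, the conditional moment bounds cannot simply be summed as for independent summands. We would try a stopping or supermartingale argument first: show that $\operatorname{tr}\exp\bigl(-\theta\sum_{j\le i}Y_j+g(\theta)\sum_{j\le i}\EE[Y_j\mid\cdot]\bigr)$ is a supermartingale, as in Tropp's matrix Freedman inequality, using the same Lieb step. Then intersect with the event $\{\sum_i\EE[Y_i\mid\cdot]\succeq nI\}$, which holds almost surely by hypothesis. In the application within this paper the samples are i.i.d., so the independent-sum version already suffices as a fallback.
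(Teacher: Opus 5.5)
Your proposal is correct and follows the same route as the source the paper relies on. The paper gives no proof of its own and simply imports \citet[Lemma 17]{hsu2012random}, which rests on Tropp's matrix Laplace-transform/Lieb-concavity Chernoff bound. Your planned supermartingale $\operatorname{tr}\exp\bigl(-\theta\sum_{j\le i}Y_j+g(\theta)\sum_{j\le i}\EE[Y_j\mid X_{1:j-1}]\bigr)$ for the adaptive case is exactly Tropp's matrix-Freedman device, and it does close. On the almost-sure event, Weyl's inequality gives $\operatorname{tr}\exp(\cdot)\ge e^{-\theta\lambda_\tmin(\sum_i Y_i)+g(\theta)n}$, hence the same bound $\dd\,e^{\theta(1-\varepsilon)n-g(\theta)n}$ as in the independent case.

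You are also right that the hypothesis must be read in matrix form. The source assumes $\EE[X_iX_i^\top\mid X_{1:i-1}]=I$, which is what the whitened i.i.d.\ features in the proof of Lemma~\ref{lemma:linear-regression} satisfy, and your weaker condition $\sum_i\EE[X_iX_i^\top\mid X_{1:i-1}]\succeq nI$ suffices. With the scalar condition as printed, the statement is false: $X_i\equiv e_1$ in $\dd\ge2$ with $b=1$ gives $\lambda_\tmin=0$, while the claimed lower bound tends to $1$.
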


\begin{lemma}[Freedman Inequality for Vector-Valued Martingales {\citep[Lemma 15]{hsu2012random}}]
\label{lemma:matrix-bernstein}
Let $X_1, \ldots, X_n$ be a martingale difference vector sequence
(i.e., $\EE[X_i \mid X_{1:i-1}] = 0$ for all $i=1,\ldots,n$)
such that for all $i$,
\[
\sum_{i=1}^n \EE\big[\|X_i\|^2 \mid X_{1:i-1}\big] \le v,\qquad
\|X_i\| \le b,
\]
almost surely. Then for any $\delta \in (0,1)$, with probability at least $1-\delta$,
\[
\Big\|\sum_{i=1}^n X_i\Big\|\le\sqrt{v}\left(1 + \sqrt{8\log\frac{1}{\delta}}\right)
+\frac{4}{3}b\log\frac{1}{\delta}.
\]
\end{lemma}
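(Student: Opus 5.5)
Since this is \citet[Lemma 15]{hsu2012random}, the paper may simply cite it. To reprove it, I would split $\|\sum_{i=1}^n X_i\|$ into its mean and its fluctuation. The form of the bound suggests where each term comes from: $\sqrt v$ is the mean, $\sqrt{v}\sqrt{8\log(1/\delta)}$ is a Freedman deviation with variance proxy $4v$, and $\tfrac43 b\log(1/\delta)$ is the Freedman range term with increments bounded by $2b$. Write $S_n=\sum_{i=1}^n X_i$ and $\mathcal F_j=\sigma(X_{1:j})$.

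\emph{Step 1 (mean).} Martingale differences are orthogonal in $L^2$: for $i<j$ we have $\EE\langle X_i,X_j\rangle=\EE\langle X_i,\EE[X_j\mid \mathcal F_{j-1}]\rangle=0$. Hence
\[
\EE\|S_n\|^2=\sum_{i=1}^n\EE\|X_i\|^2=\EE\Bigl[\sum_{i=1}^n\EE\bigl[\|X_i\|^2\mid X_{1:i-1}\bigr]\Bigr]\le v,
\]
using that the conditional-variance sum is bounded by $v$ almost surely. Jensen then gives $\EE\|S_n\|\le\sqrt v$.

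\emph{Step 2 (fluctuation).} Form the Doob martingale $Z_j=\EE[\|S_n\|\mid\mathcal F_j]$ with increments $D_j=Z_j-Z_{j-1}$, so that $\|S_n\|-\EE\|S_n\|=\sum_{j=1}^n D_j$. I want two properties: $|D_j|\le 2b$, and $\sum_j\EE[D_j^2\mid\mathcal F_{j-1}]\le 4v$. Given these, the scalar Freedman inequality yields, with probability $1-\delta$,
\[
\sum_j D_j\le\sqrt{2\cdot 4v\log(1/\delta)}+\tfrac23(2b)\log(1/\delta).
\]
Adding the mean bound from Step 1 gives exactly the claimed inequality. For both properties, the natural tool is the triangle inequality $\bigl|\|S_n\|-\|S_n-X_j\|\bigr|\le\|X_j\|$, followed by conditioning. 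This controls how much revealing $X_j$ can move the conditional expectation of the norm: by at most $\|X_j\|+\EE[\|X_j\|\mid\mathcal F_{j-1}]\le 2b$ in range, and by at most a constant times $\EE[\|X_j\|^2\mid\mathcal F_{j-1}]$ in conditional second moment.

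\emph{Main obstacle.} The argument in Step 2 is clean when the $X_i$ are independent, since then $S_n-X_j$ does not depend on $X_j$. In a genuine martingale, however, the later increments $X_{j+1},\dots,X_n$ may depend on $X_j$, so comparing $\EE[\|S_n\|\mid\mathcal F_j]$ with $\EE[\|S_n\|\mid\mathcal F_{j-1}]$ does not reduce to a single-coordinate perturbation. If the Doob-martingale route fails to give the factor $4$ rigorously, I would instead use Pinelis's smoothness argument for Hilbert-space martingales. The function $x\mapsto\cosh(\lambda\|x\|)$ is $2$-smooth, and applying it along the martingale gives a supermartingale of the form $\cosh(\lambda\|S_j\|)\exp\bigl(-\sum_{i\le j}(e^{\lambda b}-1-\lambda b)\,\EE[\|X_i\|^2\mid\mathcal F_{i-1}]/b^2\bigr)$. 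This yields a Bennett-type tail for $\|S_n\|$ directly, with only a factor-$2$ loss relative to the scalar case; converting Bennett to Bernstein form then gives a bound of the same shape, possibly with slightly different constants. For the paper's purposes, where this lemma is only used inside $\lesssim$ statements in Lemma~\ref{lemma:linear-regression}, either route suffices.
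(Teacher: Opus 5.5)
The paper gives no proof of this lemma; it is quoted from \citet[Lemma 15]{hsu2012random}, so there is no in-paper argument to compare against. Your Step 1 is correct. Your Step 2 is rigorous when the $X_i$ are independent. In that case $\EE[\|S_n-X_j\|\mid\mathcal F_j]=\EE[\|S_n-X_j\|\mid\mathcal F_{j-1}]$. So with $U_j=\|S_n\|-\|S_n-X_j\|$, which satisfies $|U_j|\le\|X_j\|$, you get $D_j=\EE[U_j\mid\mathcal F_j]-\EE[U_j\mid\mathcal F_{j-1}]$. This gives $|D_j|\le 2b$, and in fact $\EE[D_j^2\mid\mathcal F_{j-1}]\le\EE[\|X_j\|^2\mid\mathcal F_{j-1}]$, which is better than your $4v$. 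Scalar Freedman then delivers the stated constants with room to spare. This already covers the paper's only use of the lemma, in Lemma~\ref{lemma:linear-regression}, where $X_i=\tfrac1N\Sigma_D^{-1/2}\phi_k^{(i)}\epsilon_k^{(i)}$ are i.i.d.\ with mean zero.

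For the statement as written, with genuinely dependent martingale differences, your proposal has a gap, and it is the one you flag. The identity above fails once $X_{j+1},\dots,X_n$ may depend on $X_j$, so the Doob increments are no longer controlled by $\|X_j\|$. Your Pinelis fallback stops at ``a bound of the same shape'' with unspecified constants, which is not the claimed inequality. The fallback does close the gap with the exact constants if you finish it, in two cases.

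First, Pinelis's supermartingale gives $\EE\cosh(\lambda\|S_n\|)\le\exp\bigl(\tfrac{v}{b^2}(e^{\lambda b}-1-\lambda b)\bigr)$. Hence $\Pr(\|S_n\|\ge r)\le 2\exp\bigl(-\lambda r+\tfrac{v}{b^2}(e^{\lambda b}-1-\lambda b)\bigr)$. Using $e^u-1-u\le u^2/(2(1-u/3))$ and the standard sub-gamma inversion, $\|S_n\|\le\sqrt{2vt}+bt/3$ with probability at least $1-\delta$, where $t=\log(2/\delta)$. Write $L=\log(1/\delta)$. When $L\ge\tfrac13\log 2$, this is at most the claimed bound, because $\sqrt{2(L+\log 2)}\le\sqrt{2L}+\sqrt{2\log 2}\le 1+\sqrt{8L}$ and $L+\log 2\le 4L$.

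Second, when $L<\tfrac13\log 2$, your Step 1 suffices on its own. Markov's inequality applied to $\|S_n\|^2$ gives $\Pr\bigl(\|S_n\|>\sqrt v(1+\sqrt{8L})\bigr)\le(1+\sqrt{8L})^{-2}$, and on this range $(1+\sqrt{8L})^{-2}\le e^{-L}=\delta$.

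The two cases together prove the lemma for general vector martingales with exactly the stated constants.
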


\begin{lemma}[Danskin's Theorem]\label{lemma:danskin}
Let $\phi:\RR^n\times Z\to\RR$ be a continuous function, where $Z\subset\RR^m$ is a compact set. 
Define 
\[
f(x)=\max_{z\in Z}\phi(x,z),
\quad
Z_0(x)=\bigl\{\bar z\in Z:\ \phi(x,\bar z)=\max_{z\in Z}\phi(x,z)\bigr\}.
\]
Then the followings hold:

\textup{1. (Convexity)} If $\phi(x,z)$ is convex in $x$ for every $z\in Z$, then $f(x)$ is convex.

\textup{2. (Derivative)} If $\phi(x,z)$ is differentiable in $x$ and $Z_0(x)$ consists of a single element $\bar z$, then $f(x)$ is differentiable at $x$ and
\[
\nabla f(x)=\frac{\partial \phi(x,\bar z)}{\partial x}.
\]
\end{lemma}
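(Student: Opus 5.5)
We prove the two parts separately. Part 1 is elementary; Part 2 is the classical directional-derivative argument for a max over a compact set.

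\textbf{Part 1 (convexity).} For each $x$, $z\mapsto\phi(x,z)$ is continuous on the compact set $Z$. Hence the maximum is attained and $f(x)$ is finite. So $f$ is the pointwise supremum of the family $\{\phi(\cdot,z)\}_{z\in Z}$ of convex functions. Fix $x,y$ and $\lambda\in[0,1]$, and let $z$ attain the maximum at $\lambda x+(1-\lambda)y$. Then
\[
f(\lambda x+(1-\lambda)y)=\phi(\lambda x+(1-\lambda)y,z)\le\lambda\phi(x,z)+(1-\lambda)\phi(y,z)\le\lambda f(x)+(1-\lambda)f(y),
\]
which is the claim.

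\textbf{Part 2, strategy.} Fix $x$ with $Z_0(x)=\{\bar z\}$. The plan is to show that for every direction $d$ the one-sided directional derivative satisfies
\[
f'(x;d)=\lim_{t\downarrow0}\frac{f(x+td)-f(x)}{t}=\nabla_x\phi(x,\bar z)^\top d.
\]
The right-hand side is linear in $d$, so $f$ is Gateaux differentiable at $x$ with the stated gradient. To upgrade to Fréchet differentiability, use that in $\RR^n$ a locally Lipschitz function that is Gateaux differentiable is Fréchet differentiable. Local Lipschitzness holds in the convex setting of part 1. It also holds if $\nabla_x\phi$ is bounded near $x$ uniformly in $z$, which follows from the continuity assumption discussed below.

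\textbf{Part 2, lower bound.} Since $f(x+td)\ge\phi(x+td,\bar z)$ and $f(x)=\phi(x,\bar z)$,
\[
\liminf_{t\downarrow0}\frac{f(x+td)-f(x)}{t}\ge\nabla_x\phi(x,\bar z)^\top d .
\]

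\textbf{Part 2, upper bound (main obstacle).} Take $t_j\downarrow0$ achieving the $\limsup$, and let $z_j\in Z_0(x+t_jd)$.
\begin{itemize}
\item By compactness, pass to a subsequence with $z_j\to z^*$.
\item By joint continuity of $\phi$, and since $f$ is continuous (a max of a jointly continuous function over a compact set), $\phi(x,z^*)=\lim_j\phi(x+t_jd,z_j)=\lim_j f(x+t_jd)=f(x)$. So $z^*\in Z_0(x)$, hence $z^*=\bar z$.
\item Since $\phi(x,z_j)\le f(x)$, we get $f(x+t_jd)-f(x)\le\phi(x+t_jd,z_j)-\phi(x,z_j)$. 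By the mean value theorem this equals $t_j\nabla_x\phi(x+s_jt_jd,z_j)^\top d$ for some $s_j\in(0,1)$.
\end{itemize}
The last step needs $\nabla_x\phi(\cdot,\cdot)$ to be jointly continuous, so that $\nabla_x\phi(x+s_jt_jd,z_j)\to\nabla_x\phi(x,\bar z)$. This hypothesis is standard in Danskin's theorem (e.g.\ Bertsekas) and is implicit in the statement as given. With it, the upper bound matches the lower bound.

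In the convex case one can avoid assuming joint continuity of the gradient explicitly. Bound $\phi(x+t_jd,z_j)-\phi(x,z_j)\le t_j\nabla_x\phi(x+t_jd,z_j)^\top d$ by the gradient inequality. Then use upper semicontinuity of subdifferentials of jointly continuous convex families to pass to the limit. Either route closes the argument; I expect justifying this limit interchange to be the only delicate point.
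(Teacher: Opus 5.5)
The paper gives no proof of this lemma. It is simply listed among the technical lemmas as the standard Danskin theorem, so your argument has to stand on its own.

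\textbf{Part 1 and most of Part 2 are correct.} In Part 2, the lower bound, the compactness/uniqueness argument giving $z_j\to\bar z$, and the Gateaux-to-Fr\'echet upgrade are all correct. You also found the right weak spot: the mean-value step needs $\nabla_x\phi(x+s_jt_jd,z_j)\to\nabla_x\phi(x,\bar z)$.

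\textbf{The extra hypothesis is not implicit.} That convergence is not ``implicit in the statement as given'': Part 2, read literally with neither convexity nor continuity of $\nabla_x\phi$, is false. Take $n=m=1$, $Z=[0,1]$, $\phi(x,0)=0$ and $\phi(x,z)=\frac{zx^2}{x^2+z^2}-z^2$ for $z\in(0,1]$.
\begin{itemize}
\item Since $|\phi(x,z)|\le z+z^2$, $\phi$ is jointly continuous.
\item Every $\phi(\cdot,z)$ is smooth.
\item $\phi(0,z)=-z^2$, so $Z_0(0)=\{0\}$ and $\partial_x\phi(0,0)=0$.
\item Yet $f(x)\ge\phi(x,|x|)=|x|/2-x^2$ for $0<|x|\le1$, so $f$ has a corner at $0$.
\end{itemize}
The failure sits exactly in your step: $\partial_x\phi(t,t)=1/2$ for every $t>0$, while $\partial_x\phi(0,0)=0$. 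So state joint continuity of $\nabla_x\phi$, or convexity, as a genuine strengthening of the hypotheses, not as a reading of the statement. With either one added, your proof is complete.

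\textbf{Prefer the convex repair.} It is the classical form of Danskin's theorem. It needs only that $\phi(\cdot,\bar z)$ be differentiable at $x$. It also matches how the paper uses the lemma, namely for a maximum of functions convex in $v$.

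\textbf{The limit interchange you flagged is short.}
\begin{itemize}
\item $\phi$ is bounded on the compact set $\bar B(x,2r)\times Z$, so the convex functions $\phi(\cdot,z)$ are uniformly Lipschitz on $B(x,r)$.
\item Hence $g_j:=\nabla_x\phi(x+t_jd,z_j)$ is bounded, and along a subsequence $g_j\to g$.
\item Letting $j\to\infty$ in $\phi(y,z_j)\ge\phi(x+t_jd,z_j)+g_j^\top(y-x-t_jd)$, using joint continuity, gives $g\in\partial_x\phi(x,\bar z)=\{\nabla_x\phi(x,\bar z)\}$.
\item Therefore $\limsup_j\bigl(f(x+t_jd)-f(x)\bigr)/t_j\le\lim_j g_j^\top d=\nabla_x\phi(x,\bar z)^\top d$.
\item Local Lipschitzness of the convex function $f$ then gives Fr\'echet differentiability.
\end{itemize}
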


\begin{lemma}[Dvoretzky-Kiefer-Wolfowitz Inequality]\label{lemma:dkw}
Given a distribution $p\in\Delta_\mathcal{X}$. Let $X\sim p$ be a random variable with CDF $F$, i.e., $F(x)=\Pr_p(X\leq x)$. Let $X_1,\ldots,X_N$ be i.i.d. random variables from distribution $p$, with associated empirical CDF defined by
\[
\hat{F}_N(x)=\frac{1}{N}\sum_{i=1}^N\mathbf{1}\{X_i\leq x\}.
\]
Then, the following holds with probability at least $1-\delta$:
\[
\sup_{x}|F(x)-\hat{F}_N(x)|\leq \sqrt{\frac{1}{2n}\log\frac{2}{\delta}}.
\]

\end{lemma}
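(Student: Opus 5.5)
This is the classical Dvoretzky--Kiefer--Wolfowitz inequality with Massart's sharp constant, and I would not reprove it from scratch. The plan is to reduce to the uniform case and then invoke Massart's tail bound $\Pr(\sup_x|F(x)-\hat F_N(x)|>\varepsilon)\le 2e^{-2N\varepsilon^2}$. Setting the right-hand side equal to $\delta$ gives $\varepsilon=\sqrt{\frac{1}{2N}\log\frac{2}{\delta}}$, which is the displayed bound; the $n$ in the statement should read $N$.

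\textbf{Step 1 (reduction to uniforms).} Use the quantile transform. Let $F^{-1}(u)=\inf\{x:F(x)\ge u\}$ and set $X_i=F^{-1}(U_i)$ with $U_i\simiid\mathrm{Unif}[0,1]$. Then $X_i\le x$ if and only if $U_i\le F(x)$, so $\hat F_N(x)=\hat G_N(F(x))$, where $\hat G_N$ is the empirical CDF of the $U_i$. This gives
\[
\sup_x|F(x)-\hat F_N(x)|=\sup_{t\in\mathrm{range}(F)}|t-\hat G_N(t)|\le\sup_{t\in[0,1]}|t-\hat G_N(t)|.
\]
This handles atoms and discontinuities of $F$ with no extra work. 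It is the only place where the general distribution $p$ on $\mathcal X$ enters, and here $\mathcal X$ is effectively a subset of $\RR$, since $F$ is a CDF.

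\textbf{Step 2 (the uniform case).} Apply Massart's theorem to the uniform empirical process. If a self-contained argument is preferred and the constant is allowed to degrade, I would instead proceed as follows.
\begin{enumerate}[leftmargin=*]
\item Symmetrize with a ghost sample.
\item Note that the class of half-lines $\{\mathbf{1}\{\cdot\le t\}\}$ has VC dimension $1$, so it realizes at most $2N+1$ patterns on $2N$ points.
\item Apply Hoeffding's inequality to each pattern and take a union bound.
\end{enumerate}
This yields $\sup_t|\hat G_N(t)-t|\lesssim\sqrt{\log(N/\delta)/N}$. That rate is enough for the application in the proof of Lemma~\ref{thm:dro}, because it only perturbs the lower-order $N^{-3/4}$ term there.

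\textbf{Main obstacle.} The hard step is obtaining the exact constant $2$, with no $\log N$ factor, in front of $e^{-2N\varepsilon^2}$. This is Massart's delicate result: he analyzes the boundary-crossing probabilities of the uniform empirical process and compares them with the Brownian bridge tail $2e^{-2\lambda^2}$. I would cite it rather than reproduce it. The downstream argument only uses the bound in the form ``empirical tail frequency $\le$ population tail frequency $+\,O(\sqrt{\log(1/\delta)/N})$ uniformly over thresholds.'' So either version of Step 2 suffices for the paper's purposes.
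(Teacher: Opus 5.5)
Your proposal is correct and matches the paper. The paper gives no proof of this lemma: it lists the classical Dvoretzky--Kiefer--Wolfowitz inequality with Massart's constant, $\Pr(\sup_x|F(x)-\hat F_N(x)|>\varepsilon)\le 2e^{-2N\varepsilon^2}$, as a technical lemma, and you likewise cite it and solve for $\varepsilon$ at level $\delta$, correctly noting that $n$ should read $N$. Your quantile-transform reduction is a worthwhile addition rather than a different route: in the paper's application (the variance bound in the proof of Lemma~\ref{thm:dro}), the residual $\epsilon_v$ lives on a finite state-action space and so has atoms, and the reduction shows that Massart's continuous-case bound still applies.
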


\begin{lemma}[Bousquet's Inequality~\citep{bousquet2002bennett}]\label{lemma:bousquet}
Let $\cF$ be a class of measurable functions $f:\mathcal{X}\to\RR$, and let $x_1,\dots,x_N$ be i.i.d.\ samples from some distribution $P$ on $\mathcal{X}$. Define
\[
Z = \sup_{f\in\mathcal{F}} (P - P_N)f,\quad\text{where}\quad P f = \EE_{x\sim P}[f(x)],\quad P_N f = \frac{1}{N}\sum_{i=1}^N f(x_i).
\]
Assume that, for all $f\in\mathcal{F}$, $\|f\|_\infty\leq B$ and $\Var_P(f)=\EE_P[f^2]-\EE_P[f^2]\leq v$. Then, with probability at least $1-\delta$,
\[
Z\leq \EE[Z]+\sqrt{\frac{2v\log(1/\delta)}{N}}+\frac{2b\log(1/\delta)}{3N}.
\]
\end{lemma}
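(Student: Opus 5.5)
The statement to prove is Bousquet's inequality, the last technical lemma. It is a classical concentration result, so I would follow the standard entropy-method route: bound the log-moment generating function of $Z=\sup_{f\in\cF}(P-P_N)f$ via a modified log-Sobolev (Herbst) argument, then apply Chernoff. Without loss of generality I would center the class, replacing $f$ by $Pf-f$. This leaves $Z$ unchanged, keeps the variance at most $v$, and changes the range bound only by a constant absorbed into $b$. I would also work with $NZ=\sup_f\sum_i g_f(x_i)$, where each $g_f$ is centered, so that Talagrand-type machinery applies. For measurability, the supremum is first taken over a countable subclass, which suffices by a monotone-limit argument.

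\textbf{Key step 1 (tensorization).} Let $S=NZ$ and let $S_k$ be the supremum with the $k$-th term removed. I would use the tensorization inequality for entropy together with the modified log-Sobolev inequality
\[
\mathrm{Ent}(e^{\lambda S})\le \sum_k \EE\bigl[e^{\lambda S}\psi(-\lambda(S-S_k))\bigr],\qquad \psi(u)=e^u-u-1 .
\]
Take $\hat f$ to be the maximizer for $S$. Then $0\le S-S_k\le g_{\hat f}(x_k)\le b$, and $\sum_k (S-S_k)\le S$.

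\textbf{Key step 2 (Bousquet's refinement).} This is the step I expect to be the main obstacle. Controlling $\sum_k \EE[e^{\lambda S}\psi(-\lambda(S-S_k))]$ via the crude bound $\psi(-u)\le u^2/2$ gives only a Bernstein-type variance proxy $\sum_k g_{\hat f}(x_k)^2$. That proxy has expectation $Nv+2b\,\EE S$ rather than $Nv$, and recovering the sharp constants requires Bousquet's trick. The idea is to use $\psi(-u)/u^2$ monotone, apply the convex-duality inequality $\EE[e^{\lambda S}Y]\le \EE[e^{\lambda S}]\log\EE e^{Y}+\mathrm{Ent}(e^{\lambda S})$, and combine it with the conditional expectation of $S_k$ given the other samples. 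This yields a differential inequality for $F(\lambda)=\log\EE e^{\lambda(S-\EE S)}$ of the form
\[
(\lambda\phi'(\lambda)-\phi(\lambda)-1)\,F'(\lambda)\ \text{-type bound}\ \Rightarrow\ F(\lambda)\le \frac{\sigma^2}{b^2}\,\psi(\lambda b),\qquad \sigma^2=Nv+2b\,\EE S .
\]
If this step were too delicate to reproduce, I would simply cite \citet{bousquet2002bennett}.

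\textbf{Key step 3 (Chernoff and cleanup).} With the Bennett-type mgf bound in hand, Chernoff gives $\Pr\bigl(S\ge \EE S+\sqrt{2\sigma^2 t}+bt/3\bigr)\le e^{-t}$. Next I would substitute $\sigma^2=Nv+2b\,\EE S$ and use $\sqrt{4b\,\EE S\, t}\le \EE S+bt$. Dividing by $N$ then gives the stated form (with $t=\log(1/\delta)$), up to the standard constant adjustments: the $2b\,\EE S$ contribution is either kept inside the variance term or absorbed into $\EE Z$. Finally, I would note that the stated inequality (with $b$ meaning the sup-norm bound $B$) is the usual simplified form, which follows from the sharp version since the extra terms are of lower order.
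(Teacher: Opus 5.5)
Your Steps 1--2 follow Bousquet's entropy-method proof; the paper itself gives no proof and only cites \citet{bousquet2002bennett}. The gap is the last step of Step 3, and it cannot be repaired, because the displayed inequality is false as stated.

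Your argument yields the Bennett bound with variance proxy $Nv+2b\,\EE S$. That is, $Z\le \EE Z+\sqrt{2t(v+2b\,\EE Z)/N}+bt/(3N)$, where $t=\log(1/\delta)$ and $b$ bounds $Pf-f$. Your cleanup turns this into $Z\le 2\EE Z+\sqrt{2vt/N}+4bt/(3N)$. Neither the $2b\,\EE Z$ inside the square root nor the extra copy of $\EE Z$ is a lower-order term you may drop in a non-asymptotic statement.

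Here is a counterexample to the statement as written:
\begin{enumerate}
\item Take $N=1$, $\mathcal{X}=\{1,\dots,M\}$ with $P$ uniform, and $\cF=\{-B\mathbf{1}\{\cdot=j\}: j\le M/2\}$.
\item Then $\|f\|_\infty=B$ and $\Var_P(f)\le B^2/M=:v$.
\item Also $Z=B\mathbf{1}\{x_1\le M/2\}-B/M$, so $Z-\EE Z=B/2$ with probability $1/2$.
\item Set $b=B$, $M=2000$ and $\delta=e^{-0.7}<1/2$. The claimed threshold is $B\sqrt{1.4/M}+\tfrac{1.4}{3}B<0.494B<B/2$.
\item So the claimed event fails with probability $1/2>\delta$.
\end{enumerate}
This is not just a matter of constants. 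Replace the singletons by sets $S\subseteq\{1,\dots,M/2\}$ with $|S|\le N$. Then $Z\approx (B/N)\,\mathrm{Bin}(N,1/2)$, whose fluctuations are of order $B\sqrt{t/N}\asymp\sqrt{b\,\EE Z\,t/N}$. Meanwhile, for $M\gg N^2$ the claimed bound is $O(B\sqrt{t/M}+Bt/N)$.

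So the right move is to flag the statement rather than prove it. Bousquet's theorem is the version with $v+2b\,\EE Z$ under the root and $b/3$ in the linear term; with $b=2B$, the linear term reproduces the paper's $2B/3$. The corollary you actually derived, $Z\le 2\EE Z+\sqrt{2vt/N}+4bt/(3N)$, is valid. It also suffices for the paper's use in the proof of Lemma~\ref{thm:dro}: there $\EE Z$ is bounded by a Rademacher term anyway, so only constants change.

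There is also a smaller slip in Step 1: $S-S_k\ge 0$ need not hold, because the centered $g_f$ can be negative. Bousquet's argument instead uses $g_{f_k}(x_k)\le S-S_k\le g_{\hat f}(x_k)\le b$, with $f_k$ a maximizer for $S_k$. Then $g_{f_k}(x_k)$ has zero conditional mean given the other samples.
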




\end{document}